\newcommand{\herm}[1]{{#1}^H} % conjungate transpose of a matrix
\newcommand{\transp}[1]{{#1}^T} % transpose of a matrix
\newtheorem{theorem}{Theorem}[section]
\newtheorem{lemma}[theorem]{Lemma}
\newtheorem{corollary}[theorem]{Corollary}
\newtheorem{algorithm}{Algorithm}
\newtheorem{assumption}{Assumption}
\def \expect {{\rm E} }
\def \prob {{\rm P} }
\def \twiddle[#1] {e^{-j \frac{2 \pi}{N}  #1 }}
\def \twiddleneg[#1] {e^{j \frac{2 \pi}{N}  #1 }}
\newcommand\defeq{:=}
\newcommand{\vbed}{\bm \beta} % I substituted the \vbe' by \bm beta, so there are not double superscripts,
\newcommand\norm[2][\Tnorm]{\ensuremath{{\|#2\|}_{#1}}}
\newcommand{\vep}{\bm \varepsilon}
\newcommand{\vbe}{\bm \beta}
\newcommand{\lagvar}{m}
\newcommand{\freqbin}{f}
\newcommand{\samplesize}{N}
\newcommand{\nodes}{\mathcal{V}}
\newcommand{\edges}{\mathcal{E}}
\newcommand\vect[1]{\mathbf #1}
\newcommand{\vx}{\vect{x}}  
\newcommand{\vy}{\vect{y}}
\newcommand{\mG}{\vect{G}}
\newcommand{\mX}{\vect{X}}
\newcommand{\nrtasks}{F}
\newcommand{\coefflen}{p}
\newcommand{\measlen}{p}
\newcommand{\task}{f}
\newcommand{\sparsity}{s_{\text{max}}}
\renewcommand{\S}{\mathcal S}
\DeclareMathOperator{\gsupp}{gsupp}
\DeclareMathOperator*{\rank}{rank}
\DeclareMathOperator*{\argmin}{argmin}
\DeclareMathOperator*{\diag}{diag}
\DeclareMathOperator*{\EX}{E}
\DeclareMathOperator*{\PR}{P}
\def\ML_est{\hat{\mathbf{x}}_{\text{ML}}}
\newcommand{\ACF}{\mathbf{R}_{x}} 
\newcommand{\cig}{\mathcal{G}_{x}}
\newcommand{\SDM}{\mathbf{S}_{x}}  % ohne x finde ich besser; wir haben eh so viele subscripts 
\newcommand{\ESDM}{\widehat{\mathbf{S}}_{x}}  
\newcommand{\EACF}{\widehat{\mathbf{R}}_{x}}
\newcommand{\be}{\begin{equation}}
\newcommand{\ee}{\end{equation}}
\newcommand{\maxdegree}{s_{\text{max}}}
 \newcommand{\mc}{\mathcal}
\newcommand\comp[1]{ {#1}^c}
\newcommand{\autocovfunc}{\mathbf{R}_{x}} 
\newcommand{\timeidx}{n}
\begin{document}
%
% paper title
% can use linebreaks \\ within to get better formatting as desired
% Do not put math or special symbols in the title.
\title{{Learning the Conditional Independence Structure of Stationary Time Series:\\ A Multitask Learning Approach}}
%
%
% author names and IEEE memberships
% note positions of commas and nonbreaking spaces ( ~ ) LaTeX will not break
% a structure at a ~ so this keeps an author's name from being broken across
% two lines.
% use \thanks{} to gain access to the first footnote area
% a separate \thanks must be used for each paragraph as LaTeX2e's \thanks
% was not built to handle multiple paragraphs
%

\author{Alexander~Jung~\IEEEmembership{}
      %  John~Doe,~\IEEEmembership{Fellow,~OSA,}
      %  and~Jane~Doe,~\IEEEmembership{Life~Fellow,~IEEE}% <-this % stops a space
 \thanks{A. Jung is with the Institute of Telecommunications, Vienna University of Technology, 1040-Vienna,
Austria e-mail: ajung@nt.tuwien.ac.at.}% <-this % stops a space
%\thanks{J. Doe and J. Doe are with Anonymous University.}% <-this % stops a space
\thanks{Manuscript revised \today}}
       
\maketitle

% As a general rule, do not put math, special symbols or citations
% in the abstract or keywords.
\begin{abstract}
We propose a method for inferring the conditional independence graph (CIG) of a high-dimensional Gaussian vector time series (discrete-time process) from 
a finite-length observation. By contrast to existing approaches, we do not rely on a parametric process model (such as, e.g., an autoregressive model) for the observed random process. Instead, we only require certain smoothness properties (in the Fourier domain) of the process. The proposed inference scheme {works even for sample sizes much smaller than the number of scalar process components if the true underlying CIG is sufficiently sparse.}
A theoretical performance analysis provides conditions which guarantee that the probability of the proposed inference method to deliver a wrong CIG is below a prescribed value. These conditions imply lower bounds on the sample size such that the new method is consistent asymptotically. Some numerical experiments validate our theoretical performance analysis and demonstrate superior performance of our scheme compared to an existing (parametric) approach in case of model mismatch.
\end{abstract}

% Note that keywords are not normally used for peerreview papers.
\begin{IEEEkeywords}
High-dimensional statistics, sparsity, graphical model selection, multitask learning, multitask LASSO, nonparametric time series 
\end{IEEEkeywords}

\IEEEpeerreviewmaketitle

\section{{Introduction}}\label{sec.intro}
%% In this paper, 
\IEEEPARstart{W}{e} consider a stationary discrete-time vector process or time series. 
%consider a $\coefflen$-dimensional, zero-mean stationary real Gaussian random process $\mathbf{x}[\timeidx] \in \mathbb{R}^{\coefflen}$, where $\timeidx \in \mathbb{Z}$ \cite{Lapidoth09,Luetkepol2005,Brockwell91,Reinsel93}. 
Such a process could model, e.g., the time evolution of air pollutant concentrations \cite{Dahlhaus2000,DahlhausEichler2003} 
or medical diagnostic data obtained in electrocorticography (ECoG) \cite{Nowak2011}. 
%In this work, we are interested in learning the dependence structure behind the process $\mathbf{x}[\timeidx]$ based on the 
%observation of a finite length block of a single realization of the process $\mathbf{x}[\timeidx]$. 

One specific way of representing the dependence structure of a vector process is via a graphical model \cite{LauritzenGM}, where the nodes of the graph represent the individual scalar process components, and the edges represent statistical relations between the individual process components. More precisely, the (undirected) edges of a \emph{conditional independence graph (CIG)} associated with a process represent conditional independence statements about the process components \cite{LauritzenGM,Dahlhaus2000}. In particular, two nodes in the CIG are connected by an edge if and only if the two corresponding process components are conditionally dependent, given the remaining process components. 
{Note that the so defined CIG for time series extends the basic notion of a CIG for random vectors by considering dependencies between entire time series instead of 
dependencies between scalar random variables \cite{BachJordan04,Brillinger96remarksconcerning}.}

In this work, we investigate the problem of graphical model selection (GMS), i.e., that of inferring the CIG of a time series, given a finite-length observation. 

Our work applies to the \emph{high-dimensional} regime, where the model dimension, given by the number of process components, is allowed to be (much) larger than the amount of observed data, given by the sample size \cite{ElKaroui08,Santhanam2012,RavWainLaff2010,Nowak2011,Bento2010,MeinBuhl2006,FriedHastieTibsh2008}. 
%In particular, in the high-dimensional regime, the model dimension (given by maximum number of edges in the CIG) may be significantly larger than the number $\samplesize$ of observations. 
It is then intuitively clear that some additional problem structure is required in order to allow for the existence of consistent estimation schemes. Here, this structure is given by sparsity constraints placed on the CIG. More precisely, we assume that the underlying CIG has a small maximum node degree{, i.e., each node has a relatively small number of neighbors.}  

\paragraph{Existing Work}

GMS for high-dimensional processes with observations modeled as i.i.d.\ is now well developed \cite{Zhao2006,MeinBuhl2006,RavWainLaff2010}.
For continuous valued Gaussian Markov random fields, binary Ising models as well as mixed graphical models (containing both continuous and discrete random variables), efficient neighborhood regression based approaches to infer the underlying graphical model have been proposed \cite{MeinBuhl2006,RavWainLaff2010,LeeHastieMixedModels}. An alternative to the local neighborhood regression approach is based on the minimization of a $\ell_{1}$-norm penalized log-likelihood function \cite{RavWainRaskYu2011}. The authors of \cite{MeinBuhl2006,RavWainLaff2010,ElKaroui08,RavWainRaskYu2011} present sufficient conditions such that their proposed recovery method is consistent in the high-dimensional regime. 
These sufficient conditions are complemented by the 
fundamental performance limits derived in \cite{WangWain2010}, showing that in certain regimes the (computationally efficient) selection scheme put forward in \cite{RavWainRaskYu2011} performs essentially optimal. 
 
 {The common feature of existing approaches to GMS for temporally correlated vector processes is that they are based on finite dimensional parametric 
 process models. Some of these approaches apply the recent theory of compressed sensing (CS) to learning dependence networks of vector processes using a vector 
 autoregressive process model \cite{Nowak2011,Bento2010,Songsiri09,songsiri2010}.}
 
%Some first efforts to leveraging {the recent theory of compressed sensing (CS)} for learning dependence networks of vector time series are based on a finite dimensional vector autoregressive (VAR) process model  

\paragraph{Contribution} 

In this paper, we develop and analyze a \emph{nonparametric compressive GMS scheme} for general stationary time series. Thus, by contrast to existing approaches, we do not rely on a specific finite dimensional model for the observed process. Instead, we require the observed process to be sufficiently smooth in the spectral domain. This smoothness constraint will be quantified by certain moments of the process autocovariance function (ACF) and requires the ACF to be effectively supported on a small interval, whose size is known beforehand, e.g., due to specific domain knowledge.

Inspired by a recently introduced neighborhood regression based GMS method \cite{MeinBuhl2006} for Gaussian Markov random fields using i.i.d. samples, we propose a GMS method for time series by generalizing the neighborhood regression approach {for GMS} to the Fourier domain. Our approach combines an established method for  nonparametric spectrum estimation with a CS recovery method. 
The resulting method exploits a specific problem structure, inherent to the GMS problem, which corresponds to a special case of a \emph{block-sparse recovery problem} \cite{BlockSparsityEldarTSP,MishaliEldar2008,EldarRauhut2010}, i.e., a \emph{multitask learning problem} \cite{BuhlGeerBook,Lounici09}.  

Our main conceptual contribution is the formulation of GMS for time series as a multitask learning problem. Based on this 
formulation, we develop a GMS scheme by combining a Blackman-Tukey (BT) spectrum estimator with the \emph{multitask LASSO (mLASSO)} \cite{BuhlGeerBook,Lee_adaptivemulti-task}. {The distinctive feature of the multitask learning problem obtained here is that it is defined over a continuum of tasks, which are indexed 
by a frequency variable $\theta \in [0,1)$.} 
We also carry out a theoretical performance analysis of our selection scheme, by upper bounding the probability of our scheme to deliver a wrong CIG. 
Moreover, we assess the empirical performance of the proposed scheme by means of illustrative numerical experiments. 

\paragraph{Outline of the Paper}
We formalize the problem of GMS for stationary time series in Section \ref{SecProblemFormulation}. 
Our novel compressive GMS scheme for stationary processes is presented in Section \ref{sec_novel_sel_scheme}, which is organized in two parts. 
First, we discuss the spectrum estimator employed in our selection scheme. 
Then, we show how to apply the mLASSO for inferring the CIG, by 
formulating GMS for time series as a multitask learning problem. 
In Section \ref{sec_var_sel_consist}, we present a theoretical performance guarantee in the form of an upper bound on the probability of our algorithm to fail in correctly recovering the true underlying CIG. 
Finally, the results of some illustrative numerical experiments are presented in Section \ref{sec_numerical_experiments}.

\emph{Notation and basic definitions}.\, 
The modulus, complex conjugate, and real part of a complex number $a \! \in \! \mathbb{C}$ are denoted by $|a|$, $a^{*}$, $\Re\{a\}$,
respectively. The imaginary unit is denoted as $j \defeq \sqrt{-1}$. 
Boldface lowercase letters denote column vectors,  
whereas boldface uppercase letters denote matrices. 
The $k$th entry of a vector $\mathbf{a}$ is denoted by ${( \mathbf{a} )}_{k}$, and  the entry
of a matrix $\mathbf{A}$ in the $m$-th row and $n$-th column by ${( \mathbf{A} )}_{m,n}$. 
{The submatrix of $\mathbf{A}$ comprised of the elements in rows $a,\ldots,b$ and columns $c,\ldots,d$ is denoted $\mathbf{A}_{a:b,c:d}$.}
The superscripts $^{T}$, $^{*}$, and $^{H}$ denote the transpose, (entry-wise) conjugate, and Hermitian transpose, respectively. The $k$th column of the identity matrix will be denoted by $\mathbf{e}_{k}$.  

We denote by $\ell_{q}([0,1))$ the set of all vector-valued functions $\mathbf{c}(\cdot): [0,1) \rightarrow \mathbb{C}^{q}$ such that each component $c_{r}(\theta)$ is square integrable, i.e., $c_{r}(\cdot) \in L^{2}([0,1))$ (we also use the shorthand $L^{2}$) with norm $\|c_{r}(\cdot) \|_{L^{2}} \defeq \sqrt{\int_{\theta=0}^{1}  | c_{r}(\theta)|^{2} d \theta}$.
We then define the generalized support of $\mathbf{c}(\cdot) \in \ell_{q}([0,1))$ as $\gsupp(\mathbf{c}(\cdot)) \defeq \{ r \in [p] | \|c_{r}(\cdot)\|_{L^{2}} > 0 \}$. 
For $\mathbf{c}(\cdot) \in \ell_{q}([0,1))$ and a subset $\mathcal{I} \subseteq [q]$, we denote by $\mathbf{c}_{\mathcal{I}}(\cdot)$ the vector-valued function which is obtained by retaining 
only those components $c_{r}(\cdot)$ with $r \in \mathcal{I}$. 
Given $\mathbf{c}(\cdot) \in \ell_{q}([0,1))$, we define the norms $\| \mathbf{c} (\cdot) \|_{2} \defeq \sqrt{ \sum_{r \in [q]} \| c_{r}(\cdot) \|^{2}_{L^{2}} }$ and $\| \mathbf{c} (\cdot) \|_{1} \defeq \sum_{r \in [q]} \| c_{r}(\cdot) \|_{L^{2}}$, respectively.

Given a positive semidefinite (psd) matrix $\mathbf{C} \in \mathbb{C}^{\coefflen \times \coefflen}$, with eigenvalue decomposition $\mathbf{C} = \mathbf{U} \mathbf{D} \mathbf{U}^{H}$ with unitary $\mathbf{U} \in \mathbb{C}^{\coefflen \times \coefflen}$ and diagonal $\mathbf{D} \in \mathbb{R}_{+}^{\coefflen \times \coefflen}$, we denote its psd square root by $\sqrt{\mathbf{C}} \triangleq \mathbf{U} \sqrt{\mathbf{D}} \mathbf{U}^{H}$ where $\sqrt{\mathbf{D}}$ is defined entry-wise.

Given a matrix $\mathbf{H} \in \mathbb{C}^{\measlen \times \coefflen}$, we denote its spectral norm as $\| \mathbf{H} \|_{2} \defeq \max_{\mathbf{x} \neq \mathbf{0}} \frac{\| \mathbf{H} \mathbf{x} \|_{2}}{ \| \mathbf{x} \|_{2}}$. 
The norm $\| \mathbf{H} \|_{\infty}$ is defined as the largest magnitude of its entries, i.e., $\| \mathbf{H} \|_{\infty}\defeq \max\limits_{m,n} | (\mathbf{H})_{m,n}|$. 
\vspace*{-3mm}

%%%%%%%%%%%%%%%%%%%%%%%%%%%%%%%%%%%%%%%%%%%%%%%%%%%%%%%%%%
\section{Problem Formulation}
\label{SecProblemFormulation}
%%%%%%%%%%%%%%%%%%%%%%%%%%%%%%%%%%%%%%%%%%%%%%%%%%%%%%%%%%

Consider a $p$-dimensional stationary Gaussian random process $\mathbf{x}[\timeidx]$ with (matrix-valued) {ACF} $\ACF[\lagvar] \defeq \EX\{ \vx [\lagvar] \transp{\vx}[0] \}$, which is assumed 
to be summable, i.e., $\sum_{\lagvar = -\infty}^{\infty} \norm{\ACF[\lagvar]} < \infty$.\footnote{{The precise choice of norm is irrelevant for the definition of summability, since in finite-dimensional vector spaces all norms are equivalent \cite{MeyerMatrixAnalysis}.}} 

The \emph{spectral density matrix} (SDM) of the process $\mathbf{x}[\timeidx]$ is defined as
\vspace*{-2mm}
\begin{equation}
\label{equ_def_spectral_density_matrix}
\SDM(\theta) \defeq \sum_{\lagvar =-\infty}^{\infty} \ACF[\lagvar] \exp(-j2\pi \theta \lagvar). 
\vspace*{-2mm}
\end{equation}  
{
The SDM may be interpreted as the multivariate generalization of the power spectral density of a scalar stationary random process. In particular, by the 
multivariate spectral representation theorem, we can represent the vector process $\mathbf{x}[\timeidx]$ as an infinite superposition 
of signal components of the form $\mathbf{a}_{\theta} \exp(j \theta \timeidx)$ for $\theta \in [0,1)$ \cite{BachJordan04,Brockwell91}. The random coefficient vectors $\{\mathbf{a}_{\theta}\}_{\theta \in [0,1)}$, which are statistically independent over $\theta$, 
have zero mean and covariance matrix given by the SDM value $\SDM(\theta)$, i.e., $\expect \{ \mathbf{a}_{\theta} \mathbf{a}_{\theta}^{H} \} = \SDM(\theta)$.}

{
For our analysis, we require a mild technical condition for the eigenvalues $\lambda(\SDM(\theta))$ of the process SDM $\SDM(\theta)$. 
\begin{assumption}
\label{equ_asspt_bounded_eigvals}
For known positive constants $U \geq L>0$, we have 
\vspace*{-2mm}
\begin{equation}
\label{equ_uniform_bound_eigvals_specdensmatrix}
L \!\leq\! \lambda(\SDM(\theta)) \!\leq\! U  \mbox{ for every } \theta \!\in\! [0,1).
\vspace*{-1mm}
\end{equation} 
\end{assumption}
}
{We remark that the restriction induced by Assumption \ref{equ_asspt_bounded_eigvals} is rather weak. 
E.g., the upper bound in \eqref{equ_uniform_bound_eigvals_specdensmatrix} is already implied by the summability of the process ACF.} 
The lower bound in \eqref{equ_uniform_bound_eigvals_specdensmatrix} ensures that the CIG satisfies the global Markov property \cite{LauritzenGM,PHDEichler}. 
{An important and large class of processes satisfying \eqref{equ_uniform_bound_eigvals_specdensmatrix} is given by the set of stable VAR processes \cite{Luetkepol2005}.} 
In what follows, we will assume without loss of generality\footnote{For a stationary process $\mathbf{x}[\timeidx]$ whose SDM $\SDM(\theta)$ satisfies \eqref{equ_uniform_bound_eigvals_specdensmatrix}, with arbitrary {positive} constants $L$ and $U$, we can base our consideration equivalently on the scaled process $\mathbf{x}'[\timeidx] = \mathbf{x}[\timeidx]/\sqrt{L}$ whose SDM $\mathbf{S}_{x'}(\theta)$ satisfies \eqref{equ_uniform_bound_eigvals_specdensmatrix} with the 
constants $L' = 1$ and $U' = U/L$.} that $L=1$, implying that $U\geq1$. 

The CIG of the $\coefflen$-dimensional vector process $\vx[\timeidx]$ is the graph $\cig \defeq (\nodes,\edges)$ with node set $\nodes=[\coefflen]$, corresponding to the scalar process components $\{ x_{r}[\timeidx] \}_{r \in [\coefflen]}$, and edge set $\edges \subseteq \nodes \times \nodes$, defined by $(r,r') \notin \edges$ if the component processes $x_{r}[\timeidx]$ and $x_{r'}[\timeidx]$ are conditionally independent given 
the remaining components $\{ x_{t} [\timeidx] \}_{t \in [p] \setminus \{ r,r' \} }$ \cite{Dahlhaus2000}. 
{For a Gaussian stationary process $\mathbf{x}[\timeidx]$ whose SDM $\SDM(\theta)$ is invertible for every $\theta\! \in \! [0,1)$, which 
is implied by Assumption \ref{equ_asspt_bounded_eigvals}, the CIG of a process can be characterized conveniently via its SDM \cite{Dahlhaus2000,DahlhausEichler2003,Brillinger96remarksconcerning}:
\begin{lemma} 
Consider a Gaussian stationary vector process satisfying \eqref{equ_asspt_bounded_eigvals} and with associated CIG $\cig$ and SDM $\SDM(\theta)$. Then, 
two component processes $x_{r}[\timeidx]$ and $x_{r'}[\timeidx]$ are conditionally independent, given the remaining component processes $\{ x_{t}[\timeidx] \}_{t \in [p] \setminus \{r,r'\}}$, if and only if $\big( \SDM^{-1}(\theta) \big)_{r,r'}=0$ for all $\theta \in [0,1)$. Thus, the edge set $\edges$ of the CIG is characterized by 
\vspace*{-1mm} 
\begin{equation} 
\label{equ_charact_edge_set_covariance_matrix_k}
(r,r') \notin E \mbox{ if and only if } \big[\SDM^{-1}(\theta)\big]_{r,r'}\!=\!0 \quad \forall \theta \in [0,1).  
\vspace*{0mm}
\end{equation}
\end{lemma}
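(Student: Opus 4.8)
The plan is to reduce the conditional-independence statement to the vanishing of a \emph{partial cross-spectral density} and then to identify that quantity with the off-diagonal entry $\big[\SDM^{-1}(\theta)\big]_{r,r'}$ through a Schur-complement (block-inverse) identity. Throughout, the lower eigenvalue bound of Assumption \ref{equ_asspt_bounded_eigvals} (with $L=1$) is what makes every inverse below well defined: it guarantees that $\SDM(\theta)$ and each of its principal submatrices are boundedly invertible for every $\theta \in [0,1)$.

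First, I would exploit Gaussianity to replace conditional independence by conditional uncorrelatedness. Writing $A = \{r,r'\}$ and collecting the remaining component processes $\{x_{t}[\timeidx]\}_{t \in [\coefflen]\setminus A,\, \timeidx \in \mathbb{Z}}$ into $Z$, the residual processes $\varepsilon_{r}[\timeidx]$ and $\varepsilon_{r'}[\timeidx]$ are obtained by subtracting from $x_{r}[\timeidx]$ and $x_{r'}[\timeidx]$ their best linear predictors based on $Z$. Since these residuals are jointly Gaussian (being limits of linear functionals of a Gaussian process) and coincide with the conditional-mean residuals, $x_{r}[\timeidx]$ and $x_{r'}[\timeidx]$ are conditionally independent given $Z$ if and only if $\EX\{\varepsilon_{r}[\timeidx]\,\varepsilon_{r'}[0]\} = 0$ for every lag $\timeidx$.

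Second, I would pass to the frequency domain. Partitioning $\SDM(\theta)$ into blocks indexed by $A$ and $Z$, the frequency-domain Wiener filter $S_{AZ}(\theta)S_{ZZ}^{-1}(\theta)$ implements the above linear prediction, so that the $2 \times 2$ spectral density matrix of the bivariate residual process $(\varepsilon_{r},\varepsilon_{r'})$ equals the Schur complement
\[
S_{A\mid Z}(\theta) \;=\; S_{AA}(\theta) - S_{AZ}(\theta)\,S_{ZZ}^{-1}(\theta)\,S_{ZA}(\theta).
\]
By the Fourier-transform bijection between summable lag sequences and their continuous spectra, the residual cross-covariance vanishes at all lags if and only if the off-diagonal entry $\big(S_{A\mid Z}(\theta)\big)_{r,r'}$ vanishes for all $\theta \in [0,1)$. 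Finally, the block-inversion formula gives $\big[\SDM^{-1}(\theta)\big]_{A,A} = S_{A\mid Z}^{-1}(\theta)$; since this is a $2\times 2$ matrix, its inverse merely negates the off-diagonal and divides by the nonzero determinant, whence $\big[\SDM^{-1}(\theta)\big]_{r,r'} = 0$ if and only if $\big(S_{A\mid Z}(\theta)\big)_{r,r'} = 0$. Chaining the three equivalences yields the characterization \eqref{equ_charact_edge_set_covariance_matrix_k}.

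The main obstacle is the rigor of the first step, namely making precise the equivalence between conditional independence of the \emph{entire} time series --- a statement about conditioning on an infinite family of random variables indexed by all time lags --- and the stated frequency-domain condition, together with the existence and spectral representation of the best linear predictor of $x_{r}[\timeidx]$ given the whole process $Z$. This is exactly where Assumption \ref{equ_asspt_bounded_eigvals} is essential: the uniform invertibility of $S_{ZZ}(\theta)$ ensures that the projection onto $Z$ is a bounded operator with a well-defined frequency response and that the Schur complement exists at every frequency. Once this spectral-domain prediction theory is in place, the remaining matrix-algebraic steps are routine.
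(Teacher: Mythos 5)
Your proposal is correct, and it is essentially the argument the paper implicitly relies on: the paper does not prove this lemma itself but cites Dahlhaus and Brillinger, whose proofs proceed exactly along your lines --- Gaussianity reduces conditional independence of the component processes to the vanishing of the residual (partial) cross-covariances after linear prediction from $Z$, the residual process has spectral density equal to the Schur complement $S_{AA}(\theta) - S_{AZ}(\theta)S_{ZZ}^{-1}(\theta)S_{ZA}(\theta)$, and the block-inversion identity $\big[\SDM^{-1}(\theta)\big]_{A,A} = S_{A\mid Z}^{-1}(\theta)$ combined with the $2\times 2$ inversion formula transfers the zero pattern between the Schur complement and the inverse SDM. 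You also correctly locate the only genuinely delicate point (the rigorous treatment of conditioning on the infinite family $Z$ and the existence of the frequency-domain projection, both secured by the uniform eigenvalue bounds of Assumption \ref{equ_asspt_bounded_eigvals} together with eigenvalue interlacing for the principal submatrix $S_{ZZ}(\theta)$), which is precisely where the cited references invest their technical effort.
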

Thus, in the Gaussian case, the edge set $\edges$ corresponds to the zero entries of the inverse SDM $\SDM^{-1}(\theta)$, and the GMS problem is equivalent to detecting the zero entries of $\SDM^{-1}(\cdot)$. 
}

{We highlight that, by contrast to graphical models for random vectors, here we consider conditional independence relations between entire time series and not between scalar random variables. In particular, the CIG $\cig$ of a time series does not depend on time $\timeidx$ but applies to the entire time series. Let us illustrate this point by way of an example.}

{
Consider the vector autoregressive (VAR) process \cite{Luetkepol2005}  
\begin{equation} 
\label{equ_def_AR_process}
\mathbf{x}[\timeidx] = \mathbf{A} \mathbf{x}[\timeidx\!-\!1]\!+\!\mathbf{w}[\timeidx] \mbox{ with } \mathbf{A} = \begin{pmatrix} 0.5 & -0.5 \\ 0.5 & 0.5 \end{pmatrix}.
\end{equation} 
The noise process $\mathbf{w}[\timeidx]$ in \eqref{equ_def_AR_process} consists of i.i.d. Gaussian random vectors with zero mean and covariance matrix $\sigma^{2} \mathbf{I}$. 
Since the eigenvalues of the coefficient matrix $\mathbf{A}$ in \eqref{equ_def_AR_process}, given explicitly by $\exp(j \pi/4)/\sqrt{2}$ and $\exp(-j \pi/4)/\sqrt{2}$, have modulus strictly smaller than one, there exists 
a well defined stationary process $\vx[n]$ conforming to the recursion \eqref{equ_def_AR_process} (cf.\ \cite{Luetkepol2005}). A little calculation reveals that this stationary AR process has zero mean and its ACF is given by $\ACF[\lagvar] = \sigma^{2} \sum_{l=0}^{\infty} \mathbf{A}^{\lagvar+l} \big(\mathbf{A}^{T}\big)^{l}$ \cite{Luetkepol2005}. Since the 
VAR parameter matrix $\mathbf{A}$ in \eqref{equ_def_AR_process} satisfies $\mathbf{A}^{T} \mathbf{A} = (1/2) \mathbf{I}$, we have $\ACF[0] =  2 \sigma^{2} \mathbf{I}$. 
For an arbitrary but fixed time index $\timeidx\!=\!\timeidx_{0}$, the Gaussian random vector $\vx[\timeidx_{0}]$ is zero mean with covariance matrix $\mathbf{C} \!=\! \ACF[0]\!=\! 2 \sigma^{2} \mathbf{I}$. 
Thus, the scalar time samples $x_{1}[\timeidx]$ and $x_{2}[\timeidx]$ are marginally, i.e., for a fixed time index $\timeidx\!=\!\timeidx_{0}$, independent. 
However, since the inverse SDM of the process in \eqref{equ_def_AR_process} is given by \cite{Dahlhaus2000}
\begin{equation} 
\label{equ_expression_inv_SDM}
\SDM^{-1}(\theta) = \frac{1}{\sigma^{2}}\bigg[\begin{pmatrix} 1.5 & 0 \\ 0 & 1.5 \end{pmatrix} \!-\!  \begin{pmatrix} \cos \theta & j \sin \theta \\ -j \sin \theta & \cos \theta \end{pmatrix} \bigg].
\end{equation}
we have, upon comparing \eqref{equ_expression_inv_SDM} with the relation \eqref{equ_charact_edge_set_covariance_matrix_k}, that the entire scalar process components $\{ x_{1}[\timeidx]\}_{\timeidx \in \mathbb{Z}}$ and $\{x_{2}[\timeidx] \}_{\timeidx \in \mathbb{Z}}$ are dependent. 
In general, the marginal conditional independence structure at an arbitrary but fixed time $\timeidx\!=\!\timeidx_{0}$ is different from the conditional independence structure of the entire time series.
}

{The problem of GMS considered in this paper can be stated as that of inferring the CIG $\cig=(\nodes,\edges)$, or more precisely its edge set $\edges$, based on an observed finite length data block $\big( \mathbf{x}[1],\ldots,\mathbf{x}[\samplesize] \big)$. Similar to \cite{MeinBuhl2006}, our approach to GMS is to estimate the edge set $\edges$ by separately estimating the neighborhood $\mathcal{N}(r) \!\defeq\! \{  r' \in [\coefflen]  \,\, | (r,r') \in \edges \}$ of each node $r \in \nodes$. For the specific neighborhood $\mathcal{N}(1)$, the edge set characterization \eqref{equ_charact_edge_set_covariance_matrix_k} yields the following convenient characterization
\begin{equation}
\label{equ_charact_neigborhood_1_gsupp}
\mathcal{N}(1) = \gsupp \big( \big(\SDM(\cdot)\big)_{1,2:\coefflen} \big) - 1. 
\end{equation} 
The neighborhood characterization \eqref{equ_charact_neigborhood_1_gsupp} can be generalized straightforwardly to the neighborhood $\mathcal{N}(r)$ of an arbitrary node $r \!\in\! [p]$ (cf. Section \ref{sec_GMS_is_multitask_learning_problem}). 
For the derivation and analysis of the proposed GMS method, we will, besides Assumption \ref{equ_asspt_bounded_eigvals}, rely on three further assumptions on the CIG $\cig$, inverse SDM $\SDM^{-1}(\theta)$ and ACF $\ACF[\lagvar]$ of the underlying process $\vx[\timeidx]$.}

{
The first of these additional assumptions constrains the CIG of the observed process $\vx[\timeidx]$ to be \emph{sparse}, as made precise in 
\begin{assumption}
\label{asspt_max_node_degree}
The \emph{maximum node degree} $\max_{r \in [p]} | \mathcal{N}(r) |$ of the process CIG $\cig$ is upper bounded by a known small constant $s_{\text{max}}$, i.e., 
\vspace*{-2mm}
\begin{equation}
\label{equ_def_maximum_node_degree}
\max_{r \in [\coefflen]} | \mathcal{N}(r) | \leq s_{\emph{max}}   \ll \coefflen.
\vspace*{-1mm}
\end{equation} 
\end{assumption}}

{The next assumption is necessary} in order to allow for accurate selection schemes based on a finite length observation. {In particular, we} require that the non-zero entries of $\SDM^{-1}(\theta)$ are not too small. 
\begin{assumption}
For a known positive constant $\rho_{\emph{min}}$,  
%To this end, we assume that there is a positive $\rho_{\text{min}}$ such that 
\vspace*{-0mm}
\begin{equation} 
\label{equ_condition_beta_min_rho}
 \hspace*{-3mm}\min_{\substack{r \in [p] \\ r' \in \mathcal{N}(r)}}\hspace*{-2mm}  \left(  \int_{\theta=0}^{1} \big| \big[  \SDM^{-1}(\theta) \big]_{r,r'} / \big[ \SDM^{-1}(\theta)  \big]_{r,r} \big|^{2} d\theta \right)^{1/2} 
\!\geq\!  \rho_{\emph{min}}.
\vspace*{-1mm} 
\end{equation}
\end{assumption}
Note that the integrand in \eqref{equ_condition_beta_min_rho} is well defined, since by \eqref{equ_uniform_bound_eigvals_specdensmatrix} we have $\big[ \SDM^{-1}(\theta)  \big]_{r,r} \geq (1/U) > 0$ for all $\theta \in [0,1)$ and any $r \in [\coefflen]$. 
If, for some positive $\rho_{\text{min}}>0$, \eqref{equ_condition_beta_min_rho} is in force, \eqref{equ_charact_edge_set_covariance_matrix_k} becomes: 
$(r,r')\! \notin\! \edges$ if and only if $\big\| \big[\SDM^{-1}(\cdot)\big]_{r,r'} \big\|_{2} \!=\!0$.
 
%In this paper, we consider GMS for general stationary random processes. 
{By contrast to existing approaches to GMS for time series, we do not assume a finite parametric model for the observed process.}
However, for the proposed selection method to be accurate, we require the process $\mathbf{x}[\timeidx]$ to be sufficiently smooth in the spectral domain. 
By \emph{a smooth process} $\mathbf{x}[\timeidx]$, we mean a process $\vx[\timeidx]$ such that the entries of its SDM $\SDM(\theta)$ are smooth functions of $\theta$. These smoothness 
constraints will be expressed in terms of moments of the process ACF:
\begin{assumption} 
\label{asspt_ACF_moment}
For a small positive constant $\mu_{0}$ and a given non-negative weight function $h[\lagvar]$, that typically increases with $|\lagvar|$, we have the 
bound 
\vspace*{-1mm}
\begin{equation}
\label{equ_def_generic_moments_ACF}
\mu^{(h)}_{x} \defeq \sum_{\lagvar=-\infty}^{\infty} h[\lagvar]  \| \autocovfunc[\lagvar] \|_{\infty} \leq \mu_{0}. 
\vspace*{-1mm}
\end{equation}
\end{assumption}
%where $h[\lagvar]$ is a non-negative weight function, that typically increases with $|\lagvar|$. 
For the particular weighting function 
$h[\lagvar] \defeq |\lagvar|$, we will use the shorthand  
\begin{equation}
\label{equ_conditoin_ACF_moment_times_mag_m}
 \mu_{x} \defeq \sum_{\lagvar=-\infty}^{\infty} |\lagvar| \| \autocovfunc[\lagvar] \|_{\infty}.
\end{equation}
{We may interpret the moment $\mu_{x}$ as a measure for the effective ACF width of the process.}

Another particular choice for the weighting function will be given in Section \ref{sec_var_sel_consist}. This choice is related to the window function of the BT estimator which is 
part of our GMS method (cf. Section \ref{sec_novel_sel_scheme}). 

{We note that Assumption \ref{asspt_ACF_moment} is similar in spirit to the underspread assumption for linear time varying systems and nonstationary processes \cite{MatzHla06} in that it allows to construct efficient decorrelation transformations. In particular, for a smooth process conforming to Assumption \ref{asspt_ACF_moment}, one can verify that the discrete Fourier transform (DFT) of the observed block yields random vectors which are approximately uncorrelated for different frequencies. This decorrelation in the frequency domain is the key idea behind our Fourier based approach.}

In what follows, we will formulate and analyze a GMS scheme for the class of $\coefflen$-dimensional {Gaussian stationary processes $\vx[\timeidx]$ conforming to Assumptions \ref{equ_asspt_bounded_eigvals}-\ref{asspt_ACF_moment}}. This process class will be denoted as {$\mathcal{M}$ for brevity.}%if the choice for the parameters $U$, $s_{\text{max}}$, $\rho_{\text{min}}$ and $\mu_{0}$ is clear from context.}

\section{The Selection Scheme} 
\label{sec_novel_sel_scheme}

The GMS scheme developed in this section is inspired by the neighborhood regression approach in \cite{MeinBuhl2006}. A main conceptual difference of our approach to \cite{MeinBuhl2006} is that we perform neighborhood  regression in the frequency domain. Moreover, while the approach in \cite{MeinBuhl2006} is based on a standard sparse linear regression model, we formulate the neighborhood regression for time series as a multitask learning problem. {This multitask learning problem is based on 
an estimator for the SDM, which will be discussed next.}

\subsection{SDM Estimation} 

{Due to the direct relation \eqref{equ_charact_edge_set_covariance_matrix_k} between the zero pattern of the inverse SDM and the edge set of the CIG,} a naive approach to GMS would be to first estimate the SDM, then invert this estimate and determine the location of the 
non-zero entries. 
With regards to the first step, it is natural to estimate $\SDM(\theta)$ by replacing the ACF in \eqref{equ_def_spectral_density_matrix} with  an empirical version $\EACF[\lagvar]$ which is based on sample averages. This yields the estimate 
\begin{align} 
\label{equ_est_BT_sdm_theta_t}
\ESDM(\theta) & \defeq  \sum_{\lagvar=-\samplesize+1}^{\samplesize-1} w[\lagvar]\EACF[\lagvar]  e^{-j 2 \pi \theta \lagvar }
\end{align} 
where, $\EACF^{T}[-\lagvar] = \EACF[\lagvar]$ and
\begin{equation}
\label{equ_def_ACF_est_BT}
\EACF[\lagvar] \!\defeq\!  \frac{1}{\samplesize} \! \sum_{\timeidx=1}^{\samplesize-\lagvar} \! \vx[\timeidx+\lagvar] \transp{\vx}[\timeidx]  \mbox{, } \lagvar\!\in\!\{0,\ldots,\samplesize\!-\!1\}.
\end{equation}
{Note that the SDM estimator \eqref{equ_est_BT_sdm_theta_t} can be regarded as the natural adaptation, to the case of 
SDM estimation for vector process, of the BT estimator \cite{stoi97} for the 
power spectral density of a scalar process.}

The real-valued window function $w[\lagvar]$ in \eqref{equ_est_BT_sdm_theta_t}, from now on assumed to satisfy 
\begin{equation}
\label{equ_finite_support_window_w}
w[\lagvar] = 0 \mbox{ for } \lagvar \geq \samplesize \mbox{ and } w[0]=1, 
\end{equation} 
is chosen such that the estimate $\ESDM(\theta)$ is guaranteed to be a psd matrix. A sufficient condition for this to be the case is non-negativity of  
the discrete-time Fourier transform (DTFT) $W(\theta)$ of the window function, i.e., 
%\begin{equation} 
%\label{equ_non_neg_DTFT_window}
$W(\theta) \defeq \sum_{\lagvar = - \infty}^{\infty} w[\lagvar] \exp(- j 2 \pi \theta \lagvar) \geq 0$ \cite[p. 40]{stoi97}. 
%\end{equation} 

In what follows, we need a specific representation of the estimate $\ESDM(\theta)$ in \eqref{equ_est_BT_sdm_theta_t}, which is stated in 
\begin{lemma}
\label{lem_repr_ESDM_factor}
Consider the estimate $\ESDM(\theta)$ given by \eqref{equ_est_BT_sdm_theta_t}, for $\theta \in [0,1)$. 
{Let us define the matrix 
\begin{equation}
\label{equ_est_BT_factorization_def_A}
\mathbf{A}(\theta)\!\defeq\!  \sqrt{\mathbf{W}(\theta)} \mathbf{F}^{T} \mathbf{D}^{T} , 
\end{equation}
where $\mathbf{D} \defeq \begin{pmatrix} \mathbf{x}[1],\ldots,\mathbf{x}[\samplesize] \end{pmatrix} \in \mathbb{R}^{\coefflen \times  \samplesize}$
is the data matrix, $\mathbf{F} \in \mathbb{C}^{\samplesize \times (2\samplesize\!-\!1)}$ denotes the first $\samplesize$ rows of the size-$(2\samplesize\!-\!1)$ DFT matrix, i.e., 
$\left( \mathbf{F} \right)_{k,l} = \exp(- 2 \pi (k\!-\!1) (l\!-\!1) /(2\samplesize\!-\!1))$ and 
\begin{equation} 
\label{equ_def_diagonal_matrix_weight_function} 
\mathbf{W}(\theta) \!\defeq\!  \frac{1}{2\samplesize\!-\!1} \diag\{ W(\theta_{\freqbin}\!+\!\theta) \}_{\freqbin \in [2\samplesize\!-\!1]},
\end{equation} 
with $\theta_{\freqbin} \defeq 2\pi(\freqbin\!-\!1)/(2\samplesize\!-\!1)$.}

{
We then have the identity  
\begin{equation}
\label{equ_est_BT_factorization}
\ESDM(\theta)\! = \! (1/\samplesize)  \herm{\mathbf{A}}(\theta) \mathbf{A}(\theta). 
\end{equation}}
\end{lemma}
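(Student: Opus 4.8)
The plan is to verify the claimed identity by directly expanding the right-hand side $(1/\samplesize)\herm{\mathbf{A}}(\theta)\mathbf{A}(\theta)$ and showing that it collapses to the Blackman--Tukey sum \eqref{equ_est_BT_sdm_theta_t}. Since the data matrix $\mathbf{D}$ is real and $\mathbf{W}(\theta)$ is a real, nonnegative diagonal matrix, its psd square root satisfies $\herm{\sqrt{\mathbf{W}(\theta)}}\sqrt{\mathbf{W}(\theta)}=\mathbf{W}(\theta)$, while $\herm{(\mathbf{D}^T)}=\mathbf{D}$ and $\herm{(\mathbf{F}^T)}=\mathbf{F}^{*}$. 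Hence $\herm{\mathbf{A}}(\theta)\mathbf{A}(\theta)=\mathbf{D}\,\mathbf{F}^{*}\mathbf{W}(\theta)\mathbf{F}^{T}\mathbf{D}^{T}$, and it suffices to analyze the $\samplesize\times\samplesize$ \emph{kernel matrix} $\mathbf{K}(\theta)\defeq\mathbf{F}^{*}\mathbf{W}(\theta)\mathbf{F}^{T}$ sitting between the two copies of the data.

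The heart of the argument is to establish the entrywise identity
\begin{equation}
\big(\mathbf{K}(\theta)\big)_{\timeidx,\timeidx'} = w[\timeidx-\timeidx']\,e^{-j2\pi\theta(\timeidx-\timeidx')}, \qquad \timeidx,\timeidx' \in [\samplesize].
\end{equation}
To obtain this I would write $(\mathbf{K}(\theta))_{\timeidx,\timeidx'}=\frac{1}{2\samplesize-1}\sum_{\freqbin=1}^{2\samplesize-1} W(\theta_{\freqbin}+\theta)\,e^{j2\pi(\freqbin-1)(\timeidx-\timeidx')/(2\samplesize-1)}$ using \eqref{equ_def_diagonal_matrix_weight_function} and the entries of $\mathbf{F}$, then substitute the DTFT expansion $W(\theta_{\freqbin}+\theta)=\sum_{\lagvar} w[\lagvar]e^{-j2\pi(\theta_{\freqbin}+\theta)\lagvar}$ and interchange the (finite) sums. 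The inner sum over $\freqbin$ becomes a DFT orthogonality sum, $\frac{1}{2\samplesize-1}\sum_{\freqbin=1}^{2\samplesize-1} e^{j2\pi(\freqbin-1)(\timeidx-\timeidx'-\lagvar)/(2\samplesize-1)}$, which equals $1$ when $\timeidx-\timeidx'-\lagvar\equiv 0\pmod{2\samplesize-1}$ and vanishes otherwise.

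The key point --- and the step I expect to require the most care --- is the \emph{no-aliasing} verification: because $w[\lagvar]=0$ for $|\lagvar|\geq\samplesize$ by \eqref{equ_finite_support_window_w} and $\timeidx,\timeidx'\in\{1,\ldots,\samplesize\}$, the integer $\timeidx-\timeidx'-\lagvar$ lies strictly in $(-(2\samplesize-1),\,2\samplesize-1)$, so it is a multiple of $2\samplesize-1$ only when it is exactly zero. This is precisely why the DFT length is chosen as $2\samplesize-1$ rather than $\samplesize$. Consequently only the term $\lagvar=\timeidx-\timeidx'$ survives the orthogonality sum, yielding the stated kernel identity.

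It then remains to reassemble the quadratic form. Writing $(\mathbf{D})_{r,\timeidx}=x_{r}[\timeidx]$ and reindexing the double sum $\sum_{\timeidx,\timeidx'}x_{r}[\timeidx]\,w[\timeidx-\timeidx']e^{-j2\pi\theta(\timeidx-\timeidx')}x_{r'}[\timeidx']$ by the lag $\lagvar=\timeidx-\timeidx'$, the inner summation over the time index collapses, for each fixed $\lagvar$, to $\samplesize$ times an entry of the empirical ACF \eqref{equ_def_ACF_est_BT}. For $\lagvar\geq 0$ this is immediate; for $\lagvar<0$ one invokes the symmetry $\EACF^{T}[-\lagvar]=\EACF[\lagvar]$ and checks that the limits $1\leq\timeidx,\timeidx'\leq\samplesize$ reproduce exactly the range $\{1,\ldots,\samplesize-|\lagvar|\}$ appearing in \eqref{equ_def_ACF_est_BT}. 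Dividing by $\samplesize$ recovers $\sum_{\lagvar=-\samplesize+1}^{\samplesize-1} w[\lagvar]\EACF[\lagvar]e^{-j2\pi\theta\lagvar}=\ESDM(\theta)$, completing the proof; the bookkeeping of these summation limits for the two lag signs is the only other place demanding attention.
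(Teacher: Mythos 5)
Your proof is correct, but it runs in the opposite direction from the paper's and is built around a different key identity, so a comparison is worthwhile. The paper starts from the Blackman--Tukey sum: it zero-pads the data into $(2\samplesize\!-\!1)$-periodic sequences, rewrites each entry of $\ESDM(\theta)$ as a windowed \emph{periodic autocorrelation}, and then invokes the textbook DFT correlation/multiplication theorems to convert that lag-domain sum into the frequency-domain sum $\frac{1}{\samplesize(2\samplesize-1)}\sum_{k} W(\theta+\theta_{k})\tilde{X}_{r}^{*}[k]\tilde{X}_{t}[k]$, which is then recognized entrywise as $(1/\samplesize)\herm{\mathbf{A}}(\theta)\mathbf{A}(\theta)$. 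You instead start from the factorization, isolate the kernel $\mathbf{K}(\theta)=\mathbf{F}^{*}\mathbf{W}(\theta)\mathbf{F}^{T}$, and prove the entrywise identity $(\mathbf{K}(\theta))_{\timeidx,\timeidx'}=w[\timeidx-\timeidx']\,e^{-j2\pi\theta(\timeidx-\timeidx')}$ by bare-hands DFT orthogonality, with your no-aliasing observation (that $\timeidx-\timeidx'-\lagvar$ stays strictly inside $(-(2\samplesize\!-\!1),2\samplesize\!-\!1)$ thanks to the window support) doing exactly the work that the paper delegates to the circular-correlation theorem for zero-padded sequences; the reassembly of the quadratic form into $\samplesize\,\EACF[\lagvar]$, split by lag sign and using $\EACF^{T}[-\lagvar]=\EACF[\lagvar]$, then finishes the argument. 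The two proofs rest on the same underlying orthogonality of the length-$(2\samplesize\!-\!1)$ DFT exponentials, but yours is more self-contained (no citation of DFT theorems, no periodic-autocorrelation machinery) and makes explicit \emph{why} the DFT length must be $2\samplesize\!-\!1$ rather than $\samplesize$, whereas the paper's version is shorter if one accepts the quoted identities as known. Your lag-sign bookkeeping at the end is also correct and matches what the paper's proof leaves implicit.
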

\begin{proof}
Appendix \ref{lem_repr_ESDM_factor_proof}.
\end{proof}

As evident from the factorization \eqref{equ_est_BT_factorization}, the rank of $\ESDM(\theta)$ satisfies 
$\rank \{ \ESDM(\theta) \} \leq \samplesize$.
Therefore, in the high-dimensional regime, where the number $\samplesize$ of observations may be much smaller than the number $\coefflen$ of process components, the estimates $\ESDM(\theta) \in \mathbb{C}^{\coefflen \times \coefflen}$ 
will {typically} be rank-deficient and thus cannot be inverted to obtain estimates  
of the edge set $\edges$ via the relation \eqref{equ_charact_edge_set_covariance_matrix_k}. 

In order to cope with the rank deficiency of the SDM estimate $\ESDM(\theta)$, we next show that finding the support of the inverse SDM $\SDM^{-1}(\theta)$ based on the observation $\vx[1],\ldots,\vx[\samplesize]$ can be formulated as a \emph{multitask learning problem}. {For clarity, we detail this approach only for the problem of estimating the neighborhood $\mathcal{N}(1)$. 
The generalization to the estimation of the neighborhood $\mathcal{N}(r)$ of an arbitrary node $r \in [p]$ is straightforward.} %So far we have focused on the problem of determining the neighborhood $\mathcal{N}(1)$ of the node $r=1$. However, the generalization to the neighborhood $\mathcal{N}(r)$, for arbitrary $r \in [p]$, is straightforward. 

{
Indeed, consider the permuted process $\tilde{\mathbf{x}}[\timeidx] \defeq  \mathbf{P}_{r} \mathbf{x}[\timeidx]$, with 
the permutation matrix $\mathbf{P}_{r} \!\defeq\! \big( \mathbf{e}_{\Pi_{r}(1)},\ldots,\mathbf{e}_{\Pi_{r}(\coefflen)} \big)$ where $\Pi_{r}(\cdot):[\coefflen] \rightarrow [\coefflen]$ denotes the permutation exchanging entry $1$ with entry $r$. As can be verified easily, the SDM $\mathbf{S}_{\tilde{\mathbf{x}}}(\theta)$ of the process $\tilde{\mathbf{x}}[\timeidx]$ is then given by $\mathbf{P}_{r} \SDM(\theta) \mathbf{P}_{r}$. Moreover, the CIG $\mathcal{G}_{\tilde{x}}$ of $\tilde{\mathbf{x}}[\timeidx]$ contains the edge $(v,w)$ if and only if the CIG $\mathcal{G}_{x}$ of $\mathbf{x}[\timeidx]$ 
contains the edge $(\Pi_{r}(v),\Pi_{r}(w))$, i.e., 
\begin{equation}
\label{equ_relation_CIG_permuted}
(v,w) \in \mathcal{G}_{\tilde{x}} \mbox{ if and only if } (\Pi_{r}(v),\Pi_{r}(w)) \in \mathcal{G}_{x}.
\end{equation}  
Thus, the problem of determining the neighborhood $\mathcal{N}(r)$ in the CIG of the process $\vx[\timeidx]$ is equivalent to the problem of determining the 
neighborhood $\mathcal{N}(1)$ in the CIG of the permuted process $\tilde{\mathbf{x}}[\timeidx] =  \mathbf{P}_{r}\mathbf{x}[\timeidx]$.}

\subsection{Multitask Learning Formulation}
\label{sec_GMS_is_multitask_learning_problem}

{The basic intuition behind our approach is to perform a decorrelation of the time samples $\vx[1],\ldots,\vx[\samplesize]$ by applying a DFT.  
In particular, given the observation $\mathbf{D}\!=\!(\mathbf{x}[1],\ldots,\mathbf{x}[\samplesize]) \in \mathbb{R}^{\coefflen \!\times\!\samplesize}$, we compute the length-$(2\samplesize\!-\!1)$ DFT as 
\begin{equation}
\label{equ_def_DFT}
\hat{\mathbf{x}}[\freqbin] \!\defeq\! \frac{1}{\sqrt{\samplesize}} \sum_{n\in [\samplesize]} \mathbf{x}[n] \exp( - j 2 \pi (n\!-\!1)(\freqbin\!-\!1)/(2\samplesize\!-\!1)), 
\end{equation}
for $\freqbin \in [2\samplesize\!-\!1]$. It can be shown that for a vector process $\vx[\timeidx]$ conforming to Assumption \ref{asspt_ACF_moment} and a sufficiently large sample size $\samplesize$, the DFT vectors $\hat{\mathbf{x}}[1],\ldots,\hat{\vx}[2\samplesize\!-\!1]$, which may be interpreted as random samples 
indexed by frequency $\freqbin$, are approximately independent. However, what hinders the straight application of the neighborhood regression method in \cite{MeinBuhl2006}, developed for the case of i.i.d. samples, is the fact 
that the samples $\hat{\mathbf{x}}[\freqbin]$ are not identically distributed. Indeed, the covariance matrix of the Gaussian random vector $\hat{\mathbf{x}}[\freqbin]$ is roughly equal to the SDM value $\SDM(\theta_{\freqbin}\!=\!2\pi(\freqbin\!-\!1)/(2\samplesize\!-\!1))$, which in general varies with $\freqbin$. However, for processes with a smooth SDM, i.e., conforming to Assumption \ref{asspt_ACF_moment} with small $\mu_{0}$, the SDM is approximately constant over small frequency intervals and therefore, in turn, the distribution of consecutive samples $\hat{\mathbf{x}}[\freqbin]$ is nearly identical. We exploit this by masking the DFT samples $\hat{\mathbf{x}}[\freqbin]$ such that, for a given center frequency $\theta \in [0,1)$, we only retain those samples $\hat{\mathbf{x}}[\freqbin]$ which fall into the pass band of the spectral window $W(\theta_{\freqbin}\!+\!\theta)$ in \eqref{equ_def_diagonal_matrix_weight_function}, which is the shifted (by the center frequency $\theta$) DTFT of the window function $w[\lagvar]$ employed in the BT estimator \eqref{equ_est_BT_sdm_theta_t}. This spectral masking then yields the modified DFT samples%\footnote{The conjugation of the DFT vector $\hat{\mathbf{x}}[\freqbin]$ in \eqref{equ_def_masked_DFT} is not crucial for our approach but allows for a simple relation between the 
%resulting multitask learning problem and the factorization \eqref{equ_est_BT_factorization} of the BT estimator (cf. \eqref{equ_grammian_of_mult_learning_problem_ESDM}).}
\begin{equation}
\label{equ_def_masked_DFT}
\tilde{\mathbf{x}}^{(\theta)}[\freqbin] \defeq \sqrt{W(\theta_{\freqbin}\!+\!\theta)} \hat{\mathbf{x}}[\freqbin] \mbox{, for } \freqbin \in [2\samplesize\!-\!1]. 
\end{equation}
By considering the significant DFT vectors $\tilde{\mathbf{x}}[\freqbin]$ approximately as i.i.d. samples of a Gaussian random vector with zero mean and covariance matrix $\SDM(\theta)$, we can 
immediately apply the neighborhood regression approach in \cite{MeinBuhl2006}. In particular, we formulate, for a specific center frequency $\theta$, a sparse linear regression problem by regressing 
the first entry of the vector $\tilde{\mathbf{x}}^{(\theta)}[\freqbin]$ against its remaining entries. More precisely, based on the vector $\vy(\theta) \in \mathbb{C}^{2\samplesize\!-\!1}$ and matrix $\mathbf{X}(\theta) \in \mathbb{C}^{(2\samplesize\!-\!1) \times (\coefflen-1)}$, 
\begin{equation}
\label{equ_def_vec_y_mtx_X}
\vy(\theta) \!\defeq\! \begin{pmatrix}  \tilde{x}_{1}^{(\theta)}[1] \\ \vdots \\Ê\tilde{x}_{1}^{(\theta)}[2\samplesize\!-\!1] \end{pmatrix} \mbox{, } \mathbf{X}(\theta) \!\defeq\! \begin{pmatrix}  \big( \tilde{\mathbf{x}}_{2:\coefflen}^{(\theta)}[1] \big)^{T}  \\ 
\vdots \\ \big( \tilde{\mathbf{x}}_{2:\coefflen}^{(\theta)}[2\samplesize\!-\!1] \big)^{T} \end{pmatrix},  
\end{equation} 
we define, for each $\theta\in [0,1)$, the linear regression model 
\vspace*{-2mm}
\begin{equation}
\label{equ_def_signal_model_given_cov_matrix}
\vy(\theta) \defeq \mX(\theta) \vbe(\theta) + \vep(\theta) 
\vspace*{-1mm}
\end{equation} 
with the parameter vector 
\vspace*{-2mm}
\begin{equation}
\label{equ_def_parameter_vector_given_cov_matrix}
%\vbe(\theta) \defeq  \mathbf{G}^{-1}(\theta) \mathbf{c}(\theta).
\vbe(\theta) \defeq  \big[ \big(\SDM(\theta)\big)_{2:\coefflen,2:\coefflen} \big]^{-1}  (\SDM(\theta))_{2:\coefflen,1}.
\vspace*{-1mm}
\end{equation}
% \defq (\tilde{x}^{(\theta))_{1}[1],\ldots, \tilde{x}^{(\theta))_{1}[2\samplesize\!-\!1]
Let us make the relation between the quantities $\mathbf{y}(\theta)$, $\mathbf{X}(\theta)$ and the observed data $\mathbf{D}=(\vx[1],\ldots,\vx[\samplesize])$ more explicit by noting that, upon combining \eqref{equ_def_DFT} with \eqref{equ_def_masked_DFT} and inserting into \eqref{equ_def_vec_y_mtx_X}, we have 
\begin{equation}
\label{equ_explicit_extression_y_DFT}
\vy(\theta) \!=\!  \sqrt{\mathbf{W}(\theta)} \mathbf{F}^{T} \big( \mathbf{D}_{1,1:\samplesize}\big)^{T} %\frac{\sqrt{W(\theta\!+\!\theta_{\freqbin})}}{\sqrt{\samplesize}}  \sum_{n=1}^{\samplesize} x_{1}[n] \exp\bigg(\!-\!j2\pi\frac{(n\!-\!1)(\freqbin\!-\!1)}{2\samplesize\!-\!1}\bigg) 
\end{equation} 
and 
\begin{equation}
\label{equ_explicit_extression_X_DFT}
\mX (\theta) \!=\! \sqrt{\mathbf{W}(\theta)} \mathbf{F}^{T} \big(\mathbf{D}_{2:\coefflen,1:\samplesize}\big)^{T}.
\end{equation} 
Note that the product $ \mathbf{F}^{T} \big( \mathbf{D}_{1,1:\samplesize}\big)^{T}$ in \eqref{equ_explicit_extression_y_DFT} just amounts to computing the DFT (of length $2\samplesize\!-\!1$) of the process component $x_{1}[\timeidx]$. 
Similarly, the rows of $\mathbf{F}^{T} \big(\mathbf{D}_{2:\coefflen,1:\samplesize}\big)^{T}$ in \eqref{equ_explicit_extression_X_DFT} are given by the DFTs (of length $2\samplesize\!-\!1$) of the process components $x_{2}[\timeidx],\ldots,x_{\coefflen}[\timeidx]$. 
}

{The error term $\vep(\theta)$ in \eqref{equ_def_signal_model_given_cov_matrix} is defined implicitly via the definitions \eqref{equ_def_parameter_vector_given_cov_matrix}, \eqref{equ_explicit_extression_y_DFT}, and \eqref{equ_explicit_extression_X_DFT}.
It will be shown in Section \ref{sec_var_sel_consist} that, if the SDM estimator \eqref{equ_est_BT_sdm_theta_t} is accurate, i.e., $\ESDM(\theta)$ is close to $\SDM(\theta)$ uniformly for all $\theta \in [0,1)$, the error term $\vep(\theta)$ will be small.}
%approximately orthogonal to the columns of the system matrix $\mX(\theta)$. Thus, in this case, we can make the linear model \eqref{equ_def_signal_model_given_cov_matrix} approximately noise free by computing the sufficient statistic $\mathbf{z}(\theta) \defeq \mX^{H}(\theta) \vy(\theta)$ for all $\theta \in [0,1)$.}

{As can be verified easily, by comparing expressions \eqref{equ_explicit_extression_y_DFT} and \eqref{equ_explicit_extression_X_DFT} with \eqref{equ_est_BT_factorization}, the vector $\vy(\theta)$ and the matrix $\mX (\theta)$ are given by the columns of the matrix $\mathbf{A}(\theta)$ in \eqref{equ_est_BT_factorization_def_A} of Lemma \ref{lem_repr_ESDM_factor}.
Therefore,} according to \eqref{equ_est_BT_factorization}, we have the identity 
\vspace*{0mm}
\begin{equation}
\label{equ_grammian_of_mult_learning_problem_ESDM}
\begin{pmatrix}  \mathbf{y}(\theta) \,\,\, \mathbf{X}(\theta) \end{pmatrix}^{H} \begin{pmatrix}  \mathbf{y}(\theta) \,\,\, \mathbf{X}(\theta) \end{pmatrix} = \ESDM(\theta) \mbox{, for } \theta\!\in\![0,1), 
\vspace*{0mm}
\end{equation} 
{where $ \ESDM(\theta)$ denotes the BT estimator in \eqref{equ_est_BT_sdm_theta_t}.}

{The link between the multitask learning problem \eqref{equ_def_signal_model_given_cov_matrix} and the problem of determining the neighborhood 
$\mathcal{N}(1)$ is stated in 
\begin{lemma} 
Consider the parameter vector $\vbe(\theta)$ defined for each $\theta \in [0,1)$ via \eqref{equ_def_parameter_vector_given_cov_matrix}. The generalized support of $\vbe(\cdot)$ is related to $\mathcal{N}(1)$ via
\begin{equation}
\gsupp(\vbe(\cdot)) = \mathcal{N}(1)\!-\!1.
\label{eq:suppeqbeta}
\end{equation}
\end{lemma}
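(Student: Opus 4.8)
The plan is to reduce the claim to a single entrywise algebraic identity relating the regression coefficient $\vbe(\theta)$ to the inverse SDM $\SDM^{-1}(\theta)$, and then to read off the result from the edge-set characterization \eqref{equ_charact_edge_set_covariance_matrix_k}. Write $\mSi \defeq \SDM(\theta)$ and partition $[\coefflen]$ into $\{1\}$ and $\{2,\ldots,\coefflen\}$, so that the scalar $\mSi_{1,1}$, the row $\mSi_{1,2:\coefflen}$, the column $\mSi_{2:\coefflen,1}$, and the block $\mSi_{2:\coefflen,2:\coefflen}$ appear. By \eqref{equ_def_parameter_vector_given_cov_matrix}, $\vbe(\theta) = \bigl[ \mSi_{2:\coefflen,2:\coefflen} \bigr]^{-1} \mSi_{2:\coefflen,1}$, which is exactly the quantity produced by the block (Schur-complement) inversion of $\mSi$.

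First I would apply the block inversion formula to $\SDM^{-1}(\theta)=\mSi^{-1}$. This gives, for the first column of the inverse SDM,
\begin{equation}
\bigl[ \SDM^{-1}(\theta) \bigr]_{2:\coefflen,1} = - \bigl[ \SDM^{-1}(\theta) \bigr]_{1,1}\, \vbe(\theta),
\end{equation}
where the scalar prefactor is the reciprocal of the Schur complement $\mSi_{1,1} - \mSi_{1,2:\coefflen}\bigl[\mSi_{2:\coefflen,2:\coefflen}\bigr]^{-1}\mSi_{2:\coefflen,1}$. Equivalently, for each $r' \in \{2,\ldots,\coefflen\}$,
\begin{equation}
\bigl[ \vbe(\theta) \bigr]_{r'-1} = - \bigl[ \SDM^{-1}(\theta) \bigr]_{r',1} \big/ \bigl[ \SDM^{-1}(\theta) \bigr]_{1,1}.
\end{equation}
Note that this is, up to sign and complex conjugation, exactly the integrand appearing in \eqref{equ_condition_beta_min_rho} for $r=1$, a fact I would reuse below.

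The key observation is that the scalar prefactor never vanishes. Since $\SDM(\theta)$ is Hermitian with eigenvalues in $[1,U]$ by \eqref{equ_uniform_bound_eigvals_specdensmatrix}, its inverse is Hermitian with eigenvalues in $[1/U,1]$, so $\bigl[\SDM^{-1}(\theta)\bigr]_{1,1} \in [1/U,1]$ is bounded strictly away from zero for every $\theta$. Hence $\bigl[\vbe(\theta)\bigr]_{r'-1} = 0$ if and only if $\bigl[\SDM^{-1}(\theta)\bigr]_{r',1} = 0$ pointwise in $\theta$, and by Hermitian symmetry this holds if and only if $\bigl[\SDM^{-1}(\theta)\bigr]_{1,r'} = 0$. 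Because $\SDM(\theta)$ is a uniformly convergent (hence continuous) Fourier series with summable coefficients and is uniformly invertible under Assumption \ref{equ_asspt_bounded_eigvals}, both $\SDM^{-1}(\cdot)$ and $\vbe(\cdot)$ are continuous on $[0,1)$; a continuous component therefore has positive $L^2$-norm exactly when it is not identically zero. This yields
\begin{equation}
(r'-1) \in \gsupp(\vbe(\cdot)) \iff \bigl\| \bigl[ \SDM^{-1}(\cdot) \bigr]_{1,r'} \bigr\|_{L^2} > 0 \iff \bigl[ \SDM^{-1}(\cdot) \bigr]_{1,r'} \not\equiv 0.
\end{equation}

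Finally I would invoke the edge-set characterization \eqref{equ_charact_edge_set_covariance_matrix_k}: $r' \in \mathcal{N}(1)$ precisely when $\bigl[\SDM^{-1}(\cdot)\bigr]_{1,r'} \not\equiv 0$. Chaining the two equivalences gives $(r'-1) \in \gsupp(\vbe(\cdot)) \iff r' \in \mathcal{N}(1)$, i.e.\ $\gsupp(\vbe(\cdot)) = \mathcal{N}(1)-1$, which is \eqref{eq:suppeqbeta}. The only genuinely delicate point I anticipate is the passage between ``identically zero as a function of $\theta$'' and ``vanishing $L^2$-norm''; this is precisely where continuity of $\SDM^{-1}(\cdot)$ (from Assumption \ref{equ_asspt_bounded_eigvals}) is used, and one may alternatively establish the inclusion $\mathcal{N}(1)-1 \subseteq \gsupp(\vbe(\cdot))$ quantitatively and directly from the lower bound $\rho_{\text{min}}$ in \eqref{equ_condition_beta_min_rho}.
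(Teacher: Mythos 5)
Your proposal is correct and follows essentially the same route as the paper's proof: both apply the block (Schur-complement) inversion formula to the partitioned SDM to obtain $\bigl[\SDM^{-1}(\theta)\bigr]_{2:\coefflen,1} = -\bigl[\SDM^{-1}(\theta)\bigr]_{1,1}\,\vbe(\theta)$, use the strict positivity of $\bigl[\SDM^{-1}(\theta)\bigr]_{1,1}$ (guaranteed by Assumption \ref{equ_asspt_bounded_eigvals}) to equate the generalized supports, and conclude via the edge-set characterization \eqref{equ_charact_edge_set_covariance_matrix_k}. Your continuity argument reconciling ``vanishing $L^{2}$-norm'' with ``identically zero in $\theta$'' is a careful addition that the paper's proof leaves implicit, but it does not change the underlying approach.
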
 
\begin{proof}
Let us partition the SDM $\SDM(\theta)$ and its inverse $\SDM^{-1}(\theta)$ as 
\vspace*{-1mm}
\begin{equation}
\label{equ_convenient_partitioning_SDM}
\hspace*{-4mm}\begin{pmatrix} \gamma(\theta) & \herm{\mathbf{c}}(\theta) \\[2mm]  \mathbf{c}(\theta) & \mathbf{G}(\theta) \end{pmatrix} \! \defeq \! \SDM(\theta)  \mbox{, }  \begin{pmatrix} \tilde{\gamma}(\theta) & \herm{\tilde{\mathbf{c}}}(\theta) \\[2mm]  \tilde{\mathbf{c}}(\theta) & \widetilde{\mathbf{G}}(\theta) \end{pmatrix} \! \defeq \! \SDM^{-1}(\theta).
\vspace*{-1mm}
\end{equation} 
According to \eqref{equ_charact_edge_set_covariance_matrix_k}, %the problem of determining the neighborhood $\mathcal{N}(1)$ is equivalent to 
we have 
\vspace*{-3mm}
\begin{equation}
\label{equ_identity_gsupp_c_neighborhood}
\gsupp(\tilde{\mathbf{c}}(\cdot)) \stackrel{\eqref{equ_charact_edge_set_covariance_matrix_k}}{=} \mathcal{N}(1)\!-\!1,
\vspace*{0mm}
\end{equation} 
where $\tilde{\mathbf{c}}(\theta)$ is the lower left block of $\SDM^{-1}(\theta)$ (cf.\ \eqref{equ_convenient_partitioning_SDM}), which is seen as follows. 
Applying a well known formula for the inverse of a block matrix (cf.\ \cite[Fact 2.17.3 on p.\ 159]{bernstein09}) to the partitioning \eqref{equ_convenient_partitioning_SDM}, 
\begin{equation}
\label{equ_matrix_inversion_block_mtx_parameter_vector}
\tilde{\mathbf{c}}(\theta) = - \tilde{\gamma}(\theta) \mathbf{G}^{-1}(\theta) \mathbf{c}(\theta) 
\stackrel{\eqref{equ_def_parameter_vector_given_cov_matrix}}{=} -\vbe(\theta) \tilde{\gamma}(\theta).
\end{equation}
Note that $\tilde{\gamma}(\theta) \stackrel{\eqref{equ_convenient_partitioning_SDM}}{=} \big[  \SDM^{-1}(\theta)\big]_{1,1}>0$, since we assume $\SDM(\theta)$ to be strictly positive definite (cf.\ \eqref{equ_uniform_bound_eigvals_specdensmatrix}), implying in turn that $\SDM^{-1}(\theta)$ is also strictly positive definite. 
Therefore,
\begin{equation} 
%\label{equ_relation_gsupp_beta_neighborhood}
\gsupp({\bm \beta}(\cdot)) \stackrel{\eqref{equ_matrix_inversion_block_mtx_parameter_vector}}{=} \gsupp(\tilde{\mathbf{c}}(\cdot)) \stackrel{\eqref{equ_identity_gsupp_c_neighborhood}}{=} %\bigcup_{\theta \in [0,1)} \supp(\vbe(\theta))  \stackrel{\eqref{eq:suppeqbeta}}{=}  \bigcup_{\theta \in [0,1)} \supp(\tilde{\mathbf{c}}(\theta)) \stackrel{\eqref{equ_identity_gsupp_c_neighborhood}}{=} 
 \mathcal{N}(1)\!-\!1. \nonumber
\end{equation} 
\end{proof}}
Thus, the problem of determining the neighborhood $\mc N(1)$ of node $r\!=\!1$ has been reduced to that of finding the joint support of the parameter vectors $\{\vbe(\theta)\}_{\beta \in [0,1)}$ from the observation of the vectors $\{\vy(\theta)\}_{\theta \in [0,1)}$ given by \eqref{equ_def_signal_model_given_cov_matrix}. 

Recovering the vector ensemble $\{ \vbe(\theta) \}_{\theta \in [0,1)}$ with a small generalized support from the linear measurements $\{\vy(\theta)\}_{\theta \in [0,1)}$, given by \eqref{equ_def_signal_model_given_cov_matrix}, is an instance of a \emph{multitask learning problem} \cite{Lounici09,Argyriou07convexmulti-task,BuhlGeerBook,ObozWainJor11}, being, in turn, a special case of a block-sparse recovery problem \cite{BlockSparsityEldarTSP}. 
%In fact, this recovery problem is a generalization of the multiple measurement vector (MMV) problem, also referred to as the generalized MMV problem \cite{HeckelGMMVAllerton}. 
Compared to {existing work on multitask learning} \cite{Lounici09,Argyriou07convexmulti-task,BuhlGeerBook,ObozWainJor11}, the distinctive feature of the multitask learning problem given by \eqref{equ_def_signal_model_given_cov_matrix} is that we have a continuum of individual tasks indexed by $\theta \in [0,1)$. 
The closest to our setting is \cite{MishaliEldar2009,MishaliEldar2008}, where also multitask learning problems with a continuum of tasks have been considered. However, the authors of \cite{MishaliEldar2009,MishaliEldar2008} 
require the system matrix $\mX(\theta)$ to be the same for all tasks. 
To the best of our knowledge, general multitask learning problems with a continuum of tasks of the form \eqref{equ_def_signal_model_given_cov_matrix} have not been considered so far. 

\subsection{Multitask LASSO based GMS}
A popular approach for estimating a set of vectors with a {small joint support, based on linear measurements,} is the \emph{group LASSO} \cite{YuanLin2006}. 
Specializing the group LASSO to the multitask model \eqref{equ_def_signal_model_given_cov_matrix} yields the \emph{multitask LASSO} (mLASSO) \cite{BuhlGeerBook,Lee_adaptivemulti-task}. 
However, while \cite{BuhlGeerBook,Lee_adaptivemulti-task} {consider a finite number of tasks}, we consider a continuum of tasks 
indexed by the frequency $\theta \in [0,1)$. An obvious generalization of the mLASSO to our setting is 
\vspace*{0mm}
\begin{equation}
\label{equ_def_multitask_LASSO}
\hspace*{-2mm} \hat{\vbe}[\vy(\cdot),\mX(\cdot)]  \!\defeq\!  \hspace*{-2mm} \argmin_{ \vbed  \in \ell_{q}([0,1))} \big\| \vy(\cdot)\!-\!\mX(\cdot) \vbed(\cdot) \big\|_{2}^{2}  \!+\! \lambda \| \vbed(\cdot) \|_{1}. 
\end{equation} 
If the design parameter $\lambda\!>\!0$ in \eqref{equ_def_multitask_LASSO} is chosen suitably (cf.\ Section \ref{sec_var_sel_consist}), the generalized support of $\hat{\vbe}(\cdot)$ 
coincides with that of  the true parameter vector $\vbe(\cdot)$ in \eqref{equ_def_signal_model_given_cov_matrix}, i.e., 
\begin{equation}
\label{equ_relation_gsupp_hat_vbe_neighbor}
\gsupp( \hat{\vbe}(\cdot)) = \gsupp( \vbe (\cdot) ) \stackrel{\eqref{eq:suppeqbeta}}{=} \mathcal{N}(1)- 1. 
\end{equation} 

{Thus, we can determine the neighborhood $\mathcal{N}(1)$ via computing the mLASSO based on the observation vector and system matrix constructed 
via \eqref{equ_explicit_extression_y_DFT} and \eqref{equ_explicit_extression_X_DFT} from the observed data $\mathbf{D}=\big(\vx[1],\ldots,\vx[\samplesize]\big)$. The generalization to the determination of the neighborhood $\mathcal{N}(r)$ for an arbitrary node $r\in[\coefflen]$ is acomplished via \eqref{equ_relation_CIG_permuted} by using the permuted observation $\widetilde{\mathbf{D}} \!\defeq\! \mathbf{P}_{r} \big(\vx[1],\ldots,\vx[\samplesize]\big)$ in \eqref{equ_explicit_extression_y_DFT} and \eqref{equ_explicit_extression_X_DFT} instead of $\mathbf{D}$. We arrive at the following algorithm for estimating the CIG of the observed process.}
\begin{algorithm}
\label{alg:1}

\begin{enumerate}
\vspace*{3mm}
\item {Given a specific node $r \in [\coefflen]$, form the permuted data matrix $\widetilde{\mathbf{D}} \!=\! \mathbf{P}_{r} \big(\vx[1],\ldots,\vx[\samplesize]\big) $, 
and compute the observation vector $\vy(\theta)$ and system matrix 
$\mX(\theta)$ according to 
\begin{equation} 
\label{equ_explicit_extression_y_DFT_tilde}
\vy(\theta) \!=\!  \sqrt{\mathbf{W}(\theta)} \mathbf{F}^{T} \big(\widetilde{\mathbf{D}}_{1,1:\samplesize}\big)^{T} %\frac{\sqrt{W(\theta\!+\!\theta_{\freqbin})}}{\sqrt{\samplesize}}  \sum_{n=1}^{\samplesize} x_{1}[n] \exp\bigg(\!-\!j2\pi\frac{(n\!-\!1)(\freqbin\!-\!1)}{2\samplesize\!-\!1}\bigg) 
\end{equation} 
and 
\begin{equation}
\label{equ_explicit_extression_X_DFT_tilde}
\mX (\theta) \!=\! \sqrt{\mathbf{W}(\theta)} \mathbf{F}^{T} \big( \widetilde{\mathbf{D}}_{2:\coefflen,1:\samplesize}\big)^{T}.
\end{equation} 
}

\vspace{3mm}

\item {Based on the observation vector $\vy(\theta)$ and system matrix $\mX(\theta)$ given by \eqref{equ_explicit_extression_y_DFT_tilde} and \eqref{equ_explicit_extression_X_DFT_tilde},} compute the mLASSO estimate $\hat{{\bm \beta}}(\theta)$ according to \eqref{equ_def_multitask_LASSO} and  
estimate the neighborhood $\mathcal{N}(r)$ by the index set 
\vspace*{-1mm}
\begin{equation}
\label{equ_def_est_neighborhood_LASSO_CS_sel}
\widehat{\mathcal{N}}(r) = \{\Pi_{r}(r'+1)  \, | \, r' \in [p] \mbox{, }   \norm[L^2]{\hat{\beta}_{r'}(\cdot)}  > \eta\}, 
\vspace*{-1mm}
\end{equation}
for some suitably chosen threshold $\eta$. 

\vspace*{3mm} 

\item {Repeat step $1)$ and step $2)$ for all nodes $r \!\in\! [\coefflen]$ and combine the individual neighborhood estimates $\widehat{\mathcal{N}}(r)$ 
to obtain the final CIG estimate $\widehat{\mathcal{G}}=([\coefflen],\widehat{\edges})$.}
\end{enumerate}
\end{algorithm}
The proper choice for the mLASSO parameter $\lambda$ in \eqref{equ_def_multitask_LASSO} and the threshold $\eta$ in \eqref{equ_def_est_neighborhood_LASSO_CS_sel} will be discussed in Section \ref{sec_var_sel_consist}.

{
For the last step of Algorithm \ref{alg:1}, different ways of combining the individual neighborhood estimates $\widehat{\mathcal{N}}(r)$ to obtain the 
edge set of the CIG estimate $\widehat{\mathcal{G}}$ are possible. Two intuitive choices are the ``AND'' rule and the ``OR'' rule. 
For the AND (OR) rule, an edge $(r,r')$ is present in $\widehat{\mathcal{G}}$, i.e. $(r,r') \in \widehat{\edges}$, if and only if $r \in \widehat{\mathcal{N}}(r')$ and (or) $r' \in \widehat{\mathcal{N}}(r)$.} 

Note that the optimization in \eqref{equ_def_multitask_LASSO} has to be carried out over the Hilbert space $\ell_{q}([0,1))$ with inner product 
$\langle \mathbf{f}(\cdot), \mathbf{g}(\cdot) \rangle_{\ell_{q}} \defeq  \int_{\theta=0}^{1} \mathbf{g}^{H}(\theta) \mathbf{f}(\theta) d \theta$, and induced norm $\|Ê\mathbf{g}(\cdot) \|_{2} = \sqrt{\sum_{r} \|g_{r}(\cdot)\|_{L^{2}}}$.
Since the cost function in \eqref{equ_def_multitask_LASSO} is convex, continuous and coercive, i.e., $\lim\limits_{\|\vbe(\cdot)\| \rightarrow \infty} f[\vbe(\cdot)]Ê\!\rightarrow \! \infty$, 
it follows by convex analysis that a minimizer for \eqref{equ_def_multitask_LASSO} exists \cite{BossavitConvexAnal}. In the case of multiple solutions, we mean by $\hat{\vbe}(\cdot) =  \argmin_{ \vbed(\cdot) \in \ell_{q}([0,1))} f[\vbed(\cdot)]$ any of these solutions.\footnote{{Note that a sufficient condition for uniqueness of the solution to \eqref{equ_def_multitask_LASSO} would be strict convexity of the objective function. However, in the high-dimensional regime where $\samplesize \ll \coefflen$ the 
system matrix $\mathbf{X}(\theta) \in \mathbb{C}^{(2\samplesize\!-\!1)\times (\coefflen\!-\!1)}$ defined by \eqref{equ_explicit_extression_X_DFT_tilde} is singular and therefore the 
objective function in \eqref{equ_def_multitask_LASSO} is not strictly convex. Thus, in this regime, uniqueness of the solution to \eqref{equ_def_multitask_LASSO} requires additional assumptions such as, e.g., incoherence conditions \cite{BachConsistency2008,NegWain2011}. We emphasize, however, that our analysis does not require uniqueness of the solution to \eqref{equ_def_multitask_LASSO}.}}  

{Let us finally mention that, in principle, Algorithm \ref{alg:1} can also be applied to non-Gaussian processes. However, 
the resulting graph estimate $\widehat{\mathcal{G}}$ is then not related to a CIG anymore but to a \emph{partial correlation graph} of the process \cite{Dahlhaus2000}. 
By contrast to a CIG, which is based on the exact probability distribution of the process, a partial correlation graph encodes only the second order statistic, i.e., the partial correlation structure of a vector process. 
In the Gaussian case, however, these two concepts coincide. 
}

\subsection{Numerical Implementation}

In order to numerically solve the optimization problem \eqref{equ_def_multitask_LASSO} we will use a simple discretization approach. More precisely, we require the optimization variable 
$\vbe(\cdot) \in \ell_{q}([0,1))$ to be piecewise constant over the frequency intervals $[(\task\!-\!1)/\nrtasks,\task/\nrtasks)$, for $\task \in [\nrtasks]$, where the number 
$\nrtasks$ of intervals is chosen sufficiently large. As a rule of thumb, we will use $\nrtasks \approx 2 \mu_{x}$, since the SDM $\SDM(\theta)$ is approximately constant over frequency intervals smaller than $1/\mu_{x}$. This may be 
verified by the Fourier relationship \eqref{equ_def_spectral_density_matrix} {between the process SDM and ACF}. Thus, if we denote by $I_{\task}(\theta)$ the 
indicator function of the frequency interval $[(\task\!-\!1)/\nrtasks,\task/\nrtasks)$, we represent the optimization variable $\vbe(\cdot) \in \ell_{q}([0,1))$ as 
\begin{equation}
\label{equ_repr_opt_var_piecewise_constant}
\vbe(\theta) = \sum_{\task \in [\nrtasks]} Ê\vbe_{\task} I_{\task}(\theta),  
\end{equation}
{with the vector-valued expansion coefficients $\vbe_{\task}  \in \mathbb{C}^{q}$.}
Inserting \eqref{equ_repr_opt_var_piecewise_constant} into \eqref{equ_def_multitask_LASSO} yields the finite-dimensional mLASSO 
\begin{equation} 
\label{equ_discretized_mLASSO}
\hat{\vbe} \!=\! \argmin_{\vbe  =(\vbe_{1},\ldots,\vbe_{\nrtasks})^{T}} \hspace*{-2mm} \sum_{\task \in [\nrtasks]}\vbe^{H}_{\task} \mathbf{G}_{\task} \vbe_{\task}\! -\! 2 \Re \{ \mathbf{c}^{H}_{\task} \vbe_{\task} \} +  \lambda \| \vbe \|_{1} 
\end{equation} 
with $\mathbf{G}_{\task} \defeq \int_{\theta=(\task-1)/\nrtasks}^{\task/\nrtasks}  \mX^{H}(\theta)\mX(\theta) d \theta$ and $\mathbf{c}_{\task} \defeq \int_{\theta=(\task-1)/\nrtasks}^{\task/\nrtasks}   \mX^{H}(\theta)\vy(\theta) d \theta$. Here, we used $\| \vbe \|_{1} \defeq \sum_{r \in [q]} \| \vbe^{(r)} \|_{2}$ with {the vectors $\vbe^{(r)} \in \mathbb{C}^{\nrtasks}$ given elementwise as} $\big(  \vbe^{(r)} \big)_{\task} \defeq \big( \vbe_{\task} \big)_{r}$. Based on the solution $\hat{\vbe} = \big(\hat{\vbe}_{1},\ldots,\hat{\vbe}_{\nrtasks} \big)$ of \eqref{equ_discretized_mLASSO}, we replace the neighborhood estimate $\widehat{\mathcal{N}}(r)$ given by \eqref{equ_def_est_neighborhood_LASSO_CS_sel} in Algorithm \ref{alg:1} with 
\begin{equation}
\label{equ_def_est_neighborhood_LASSO_CS_sel_discr}
\widehat{\mathcal{N}}(r) = \{\Pi_{r}(r'+1)  \, | \, r' \in [p] \mbox{, }   (1/\sqrt{\nrtasks}) \| \hat{\vbe}^{(r')} \|_{2} > \eta\}, 
\end{equation} 
where $\hat{\vbe}^{(r)} \defeq  \big(  \big( \hat{\vbe}_{1} \big)_{r},\ldots,\big( \hat{\vbe}_{\nrtasks} \big)_{r} \big)$.

{
We note that Algorithm \ref{alg:1}, based on the discretized version \eqref{equ_discretized_mLASSO} of the mLASSO \eqref{equ_def_multitask_LASSO}, scales well with the problem dimensions, i.e., it can be 
implemented efficiently for large sample size $\samplesize$ 
and large number $\coefflen$ of process components. Indeed, 
the expressions \eqref{equ_explicit_extression_y_DFT_tilde} and \eqref{equ_explicit_extression_X_DFT_tilde} can be evaluated efficiently using FFT algorithms. For a fast implementation of the mLASSO \eqref{equ_discretized_mLASSO} we refer to \cite{QinGoldfarb}. 
}

\section{Selection Consistency of the Proposed Scheme} 
\label{sec_var_sel_consist} 
%%%%%%%%%%%%%%%%%%%%%%%%%%%%%%%%%%%%%%%%

We will now analyze the probability of Algorithm \ref{alg:1} to deliver a wrong CIG. {Our approach is to separately bound the probability that a specific neighborhood $\mathcal{N}(r)$, for an arbitrary but fixed node $r\!\in\![\coefflen]$, is estimated incorrectly by Algorithm \ref{alg:1}. Since the correct determination of all neighborhoods implies the delivery of the correct CIG, we can invoke a union bound over all $\coefflen$ neighborhoods to finally obtain an upper bound on the error probability of the GMS method. For clarity, we detail the analysis only for the specific neighborhood $\mathcal{N}(1)$, the generalization to an arbitrary neighborhood $\mathcal{N}(r)$ being trivially obtained by considering the permuted process $\tilde{\vx}[\timeidx] = \mathbf{P}_{r} \vx[\timeidx]$ (see our discussion around \eqref{equ_relation_CIG_permuted}). 
}

{
The high-level idea is to divide the analysis into a deterministic part and a stochastic part. 
The deterministic part consists of a set of sufficient conditions on the multitask learning problem \eqref{equ_def_signal_model_given_cov_matrix} such that the generalized support of the mLASSO $\hat{\vbe}[\mathbf{y}(\cdot),\mathbf{X}(\cdot)]$ (cf. \eqref{equ_def_multitask_LASSO}), coincides with the generalized support of the parameter vector $\vbe(\theta)$ in \eqref{equ_def_parameter_vector_given_cov_matrix}, which, in turn, is equal to $\mathcal{N}(1)\!-\!1$ (cf.\ \eqref{eq:suppeqbeta}). 
These conditions are stated in Theorem \ref{them_support_selection_consistency_multitask_LASSO} below. The stochastic part of the analysis amounts to controlling the probability that the 
sufficient conditions of Theorem \ref{them_support_selection_consistency_multitask_LASSO} are satisfied. This will be accomplished by a large deviation analysis of the BT estimator in \eqref{equ_est_BT_sdm_theta_t}. By combining these two parts, we straightforwardly obtain our main result, i.e., Theorem \ref{thm_main_result_performance_guarantee} which presents a condition on the sample size $\samplesize$ such that the error probability of our GMS method is upper bounded by a prescribed value.}

{\emph{Deterministic Part.}} The deterministic part of our analysis is based on the concept of the multitask compatibility condition \cite{BuhlGeerBook}. For a given index set $\mathcal{S} \subseteq [q]$ of size $\sparsity$,  the system matrix $\mathbf{X}(\theta) \in \mathbb{C}^{(2\samplesize\!-\!1) \times (\coefflen\!-\!1)}$, defined for $\theta \in [0,1)$, is said to satisfy the multitask compatibility condition 
with constant $\phi(\S)$ if 
\vspace*{0mm}
\begin{equation} 
\label{equ_def_multitask_compatibility_condition_bound}
\sparsity  \frac{\norm[2]{ \mX(\cdot) \vbed(\cdot) }^{2} }{\| \vbed_{\S}(\cdot) \|_{1}^{2}}  \geq \phi^{2}(\S) > 0 
\vspace*{0mm}
\end{equation} 
for all vectors $\vbed(\cdot) \in \mathbb{A}(\S) \setminus \{\mathbf{0}\}${, where 
\begin{equation}
\label{equ_norm_inequality_2_1_norm_compatibility}
\mathbb{A}(\S) \! \defeq \! \big\{  \vbed(\cdot)\!\in\!\ell_{q}([0,1))   \big| \| \vbed_{\comp{\S}}(\cdot) \|_{1}\!\leq\!3 \| \vbed_{\S}(\cdot) \|_{1} \}.
\end{equation}} 
Another quantity which is particularly relevant for the variable selection performance of the mLASSO is the minimum norm $\min_{r \in \gsupp({\vbe(\cdot)})} \| \beta_{r}(\cdot) \|_{L^{2}}$ of the non-zero blocks of the parameter vector $\vbe(\cdot) \in \ell_{q}([0,1))$ given by \eqref{equ_def_parameter_vector_given_cov_matrix}. {We require this quantity to be lower bounded by a known positive number $\beta_{\text{min}}$, i.e., 
\begin{equation}
\label{equ_beta_min_multitask_problem}
 \min_{r \in \gsupp({\vbe(\cdot)})} \| \beta_{r}(\cdot) \|_{L^{2}} \geq \beta_{\text{min}}.
\end{equation}}

Based on $\phi(\S)$ and $\beta_{\text{min}}$, the following result characterizes the ability of the mLASSO $\hat{\vbe}[\mathbf{y}(\cdot),\mathbf{X}(\cdot)]$ (cf. \eqref{equ_def_multitask_LASSO}) to correctly identify the generalized support $\gsupp(\vbed(\cdot))$, which is equal to $\mathcal{N}(1)\!-\!1$ (cf.\ \eqref{eq:suppeqbeta}).% based on observing $\hat{\mathbf{y}}(\theta)$ (cf.\eqref{equ_def_modified_problem_A} and \eqref{equ_perturbed_task_factor_ESDM}). 
%From Theorem \ref{thm_charact_error_multitask_LASSO}, we can immediately derive the following sufficient condition for successful variable selection, i.e., the correct detection of the generalized support $\gsupp({\vbe}) \defeq \{ r \in [q]  | \| {\vbe}_{r} \|_{2} > 0 \}$.
\begin{theorem} 
\label{them_support_selection_consistency_multitask_LASSO}
Consider the multitask learning model \eqref{equ_def_signal_model_given_cov_matrix} with parameter vector ${\bm \beta}(\cdot) \in \ell_{q}([0,1))$ and system matrix $\mX(\theta)$. 
The parameter vector $\vbed(\cdot)$ is assumed to have no more than $\sparsity$ non-zero components, i.e., 
\begin{equation}
\gsupp({\bm \beta}(\cdot)) \subseteq \S \mbox{, with } |\S| = \sparsity.  
\end{equation}  
Assume further that the system matrix possesses a positive multitask compatibility constant $\phi(\S)>0$ (cf.\ \eqref{equ_def_multitask_compatibility_condition_bound}), and the 
error term $\vep(\theta)$ in \eqref{equ_def_signal_model_given_cov_matrix} satisfies 
\begin{equation} 
\label{equ_condition_corr_error_corr_support_selection1}
\sup_{\theta \in [0,1)} \norm[\infty]{\herm{\vep}(\theta) \mX(\theta) } \leq \frac{\phi^{2}(\S) \beta_{\emph{min}}}{32\sparsity}. %\mbox{, and }\beta_{\emph{min}} >  \frac{8  \lambda s}{\hat{\phi}^{2}(\S)}. 
\end{equation}
Denote by $\hat{\vbe}[\mathbf{y}(\cdot),\mathbf{X}(\cdot)]$ the mLASSO estimate obtained from \eqref{equ_def_multitask_LASSO} with $\lambda = \phi^{2}(\S) \beta_{\emph{min}}/(8 \sparsity)$. 
Then, the index set 
\begin{align}
\label{eq:corcondeta_success}
\hat \S \defeq \big\{r \in [q] \,\,  | \,\, \| \hat{\beta}_{r}(\cdot) \|_{L^{2}} > \beta_{\emph{min}}/2 \big\},  %\mbox{, with } \eta \defeq %\frac{4  \lambda s }{\hat{\phi}^{2}(\S)},
\end{align}
coincides with the true generalized support of $\vbe(\cdot)$, i.e., $\hat \S =\gsupp({\vbe}(\cdot))$.
\end{theorem}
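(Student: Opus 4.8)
The plan is to run the classical Lasso ``basic inequality'' argument, transported to the continuum-of-tasks Hilbert space $\ell_{q}([0,1))$, and then to convert an aggregate ($\ell_{1}$-type) estimation-error bound into the per-block $L^{2}$ guarantees that the thresholding rule \eqref{eq:corcondeta_success} requires. Throughout I abbreviate the estimation error by $\Delta(\cdot) \defeq \hat{\vbe}(\cdot) - \vbe(\cdot)$ and the noise level by $T \defeq \sup_{\theta \in [0,1)} \norm[\infty]{\herm{\vep}(\theta) \mX(\theta)}$, so that hypothesis \eqref{equ_condition_corr_error_corr_support_selection1} reads $T \leq \lambda/4$ for the prescribed choice $\lambda = \phi^{2}(\S)\beta_{\text{min}}/(8\sparsity)$.

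First I would write down the optimality (basic) inequality for \eqref{equ_def_multitask_LASSO}: since $\vy(\cdot) - \mX(\cdot)\vbe(\cdot) = \vep(\cdot)$ by \eqref{equ_def_signal_model_given_cov_matrix}, expanding $\norm[2]{\vy(\cdot) - \mX(\cdot)\hat{\vbe}(\cdot)}^{2} = \norm[2]{\vep(\cdot) - \mX(\cdot)\Delta(\cdot)}^{2}$ and using that $\hat{\vbe}(\cdot)$ is a minimizer gives $\norm[2]{\mX(\cdot)\Delta(\cdot)}^{2} + \lambda \norm[1]{\hat{\vbe}(\cdot)} \leq 2\Re\langle \vep(\cdot), \mX(\cdot)\Delta(\cdot)\rangle_{\ell_{q}} + \lambda \norm[1]{\vbe(\cdot)}$. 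The cross term is then controlled by writing the inner product as $\sum_{r} \int_{0}^{1} [\herm{\vep}(\theta)\mX(\theta)]_{r}\, \Delta_{r}(\theta)\, d\theta$, applying Cauchy--Schwarz in $L^{2}$ block by block, and noting that $\norm[L^2]{[\herm{\vep}(\cdot)\mX(\cdot)]_{r}} \leq T$ because the integrand is pointwise below $T$ on an interval of unit length; this yields $2\Re\langle \vep(\cdot), \mX(\cdot)\Delta(\cdot)\rangle_{\ell_{q}} \leq 2T \norm[1]{\Delta(\cdot)}$.

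Next I would split the $\ell_{1}$-norms over $\S$ and $\comp{\S}$. Because $\gsupp(\vbe(\cdot)) \subseteq \S$ we have $\vbe_{\comp{\S}}(\cdot) = \mathbf{0}$, hence $\hat{\vbe}_{\comp{\S}}(\cdot) = \Delta_{\comp{\S}}(\cdot)$, and a reverse triangle inequality on the $\S$-block turns the basic inequality into $\norm[2]{\mX(\cdot)\Delta(\cdot)}^{2} + (\lambda - 2T)\norm[1]{\Delta_{\comp{\S}}(\cdot)} \leq (\lambda + 2T)\norm[1]{\Delta_{\S}(\cdot)}$. Since $T \leq \lambda/4$ forces $\lambda - 2T \geq \lambda/2 > 0$ and $(\lambda+2T)/(\lambda-2T) \leq 3$, this simultaneously certifies the cone membership $\Delta(\cdot) \in \mathbb{A}(\S)$ of \eqref{equ_norm_inequality_2_1_norm_compatibility} --- the constant $32 = 4 \cdot 8$ in \eqref{equ_condition_corr_error_corr_support_selection1} being exactly what makes the cone factor equal $3$ --- and leaves $\norm[2]{\mX(\cdot)\Delta(\cdot)}^{2} \leq \tfrac{3\lambda}{2}\norm[1]{\Delta_{\S}(\cdot)}$. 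Feeding the cone membership into the multitask compatibility condition \eqref{equ_def_multitask_compatibility_condition_bound} gives $\tfrac{\phi^{2}(\S)}{\sparsity}\norm[1]{\Delta_{\S}(\cdot)}^{2} \leq \norm[2]{\mX(\cdot)\Delta(\cdot)}^{2}$, and combining the two bounds and cancelling one factor of $\norm[1]{\Delta_{\S}(\cdot)}$ produces the key estimate $\norm[1]{\Delta_{\S}(\cdot)} \leq 3\lambda\sparsity/(2\phi^{2}(\S)) = 3\beta_{\text{min}}/16$, whence also $\norm[1]{\Delta_{\comp{\S}}(\cdot)} \leq 3\norm[1]{\Delta_{\S}(\cdot)}$ (the degenerate case $\norm[1]{\Delta_{\S}(\cdot)}=0$ collapsing to $\Delta(\cdot)=\mathbf{0}$ via the cone).

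It remains to convert these aggregate bounds into the thresholding conclusion \eqref{eq:corcondeta_success}. For a true edge $r \in \gsupp(\vbe(\cdot))$ the argument is comfortable: the reverse triangle inequality, the $\beta_{\text{min}}$-condition \eqref{equ_beta_min_multitask_problem}, and $\norm[L^2]{\Delta_{r}(\cdot)} \leq \norm[1]{\Delta_{\S}(\cdot)} \leq 3\beta_{\text{min}}/16$ give $\norm[L^2]{\hat{\beta}_{r}(\cdot)} \geq \beta_{\text{min}} - 3\beta_{\text{min}}/16 > \beta_{\text{min}}/2$, so $r \in \hat{\S}$. The hard part is the reverse inclusion: for $r \notin \gsupp(\vbe(\cdot))$ one has $\hat{\beta}_{r}(\cdot) = \Delta_{r}(\cdot)$, and one must show $\norm[L^2]{\Delta_{r}(\cdot)} \leq \beta_{\text{min}}/2$ so that $r \notin \hat{\S}$. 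For indices in $\S \setminus \gsupp(\vbe(\cdot))$ this is again immediate from $\norm[L^2]{\Delta_{r}(\cdot)} \leq \norm[1]{\Delta_{\S}(\cdot)}$, but for indices in $\comp{\S}$ the only available handle is the single aggregate bound on $\norm[1]{\Delta_{\comp{\S}}(\cdot)}$, and ruling out a spurious off-support block is precisely where the constants $32$, $8$, the cone factor $3$, and the threshold $\beta_{\text{min}}/2$ must be balanced tightly. I expect this complement step --- passing from $\ell_{1}$-control of the off-support error to uniform per-block $L^{2}$-control below the detection threshold --- to be the main obstacle, and the place where the specific numerical constants in the statement earn their keep; the continuum index $\theta \in [0,1)$ itself enters only through the unit-interval $L^{2}$ estimates above and otherwise leaves the finite-task bookkeeping unchanged.
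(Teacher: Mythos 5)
Your skeleton --- basic inequality, noise control under $T \leq \lambda/4$, cone membership, compatibility --- is exactly the machinery behind the paper's proof, which outsources it to Lemma \ref{thm_charact_error_multitask_LASSO} (a generalization of \cite[Thm.~6.1]{BuhlGeerBook}) and then finishes with the same per-block triangle-inequality endgame you describe. But your constant-tracking is too loose, and this creates a genuine gap at precisely the step you flag as the ``main obstacle.'' Your two-stage accounting gives $\norm[1]{\Delta_{\S}(\cdot)} \leq 3\lambda\sparsity/(2\phi^{2}(\S)) = 3\beta_{\text{min}}/16$ and then, via the cone factor $3$, only $\norm[1]{\Delta_{\comp{\S}}(\cdot)} \leq 9\beta_{\text{min}}/16$. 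Since $9\beta_{\text{min}}/16 > \beta_{\text{min}}/2$, nothing rules out a single spurious block $r \in \comp{\S}$ with $\norm[L^2]{\hat{\beta}_{r}(\cdot)} = \norm[L^2]{\Delta_{r}(\cdot)} > \beta_{\text{min}}/2$, so the claimed identity $\hat{\S} = \gsupp(\vbe(\cdot))$ does not follow. Moreover, no re-balancing of the constants inside your scheme can repair it: bounding the $\S$-block first and then multiplying by the cone factor inherently yields $\norm[1]{\Delta(\cdot)} \leq (1+3)\cdot\tfrac{3}{2}\lambda\sparsity/\phi^{2}(\S) = 6\lambda\sparsity/\phi^{2}(\S) = 3\beta_{\text{min}}/4$, whereas the thresholding argument needs the constant $4$ appearing in Lemma \ref{thm_charact_error_multitask_LASSO}, i.e., a bound of $4\lambda\sparsity/\phi^{2}(\S) = \beta_{\text{min}}/2$.

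The missing idea is to bound the \emph{full} error norm jointly with the prediction error rather than in two stages. From your display $\norm[2]{\mX(\cdot)\Delta(\cdot)}^{2} + (\lambda/2)\norm[1]{\Delta_{\comp{\S}}(\cdot)} \leq (3\lambda/2)\norm[1]{\Delta_{\S}(\cdot)}$, multiply by $2$ and add $\lambda\norm[1]{\Delta_{\S}(\cdot)}$ to both sides to obtain
\[
2\norm[2]{\mX(\cdot)\Delta(\cdot)}^{2} + \lambda\norm[1]{\Delta(\cdot)} \leq 4\lambda \norm[1]{\Delta_{\S}(\cdot)} \leq \frac{4\lambda\sqrt{\sparsity}}{\phi(\S)}\norm[2]{\mX(\cdot)\Delta(\cdot)},
\]
the last step being the compatibility condition \eqref{equ_def_multitask_compatibility_condition_bound}, legitimate because you already certified $\Delta(\cdot) \in \mathbb{A}(\S)$. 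Now absorb via $4uv \leq u^{2} + 4v^{2}$ with $u = \norm[2]{\mX(\cdot)\Delta(\cdot)}$ and $v = \lambda\sqrt{\sparsity}/\phi(\S)$: the quadratic term on the left soaks up $u^{2}$, leaving $\lambda\norm[1]{\Delta(\cdot)} \leq 4\lambda^{2}\sparsity/\phi^{2}(\S)$, i.e., $\norm[1]{\Delta(\cdot)} \leq 4\lambda\sparsity/\phi^{2}(\S) = \beta_{\text{min}}/2$ for the prescribed $\lambda = \phi^{2}(\S)\beta_{\text{min}}/(8\sparsity)$. This single uniform bound controls \emph{every} block, on- or off-support, since $\norm[L^2]{\Delta_{r}(\cdot)} \leq \norm[1]{\Delta(\cdot)}$ for all $r \in [q]$; your triangle-inequality conclusion together with \eqref{equ_beta_min_multitask_problem} then closes the proof exactly as in the paper (modulo the same $\leq$ versus $<$ bookkeeping that the paper itself glosses over between \eqref{equ_condition_corr_error_corr_support_selection1} and the strict hypothesis of Lemma \ref{thm_charact_error_multitask_LASSO}).
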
 
\begin{proof}
Appendix \ref{app_proof_them_support_selection_consistency_multitask_LASSO}.
\end{proof}

{\emph{Stochastic Part.}} %For proving variable selection consistency of Algorithm \ref{alg:1}, which means that $\prob \{\widehat{\mathcal{N}}(r)\!=\!\mathcal{N}(r) \} \!\rightarrow\!  1$ for $\samplesize \rightarrow \infty$, 
{We now show that,} for sufficiently large sample size $\samplesize$, the multitask learning problem \eqref{equ_def_signal_model_given_cov_matrix} satisfies the condition \eqref{equ_condition_corr_error_corr_support_selection1} of Theorem \ref{them_support_selection_consistency_multitask_LASSO} with high probability. 
To this end, we first verify that \eqref{equ_condition_corr_error_corr_support_selection1} is satisfied if the maximum SDM estimation error 
\vspace*{-2mm}
\begin{equation}
\label{equ_def_error_EST_E}
E \defeq \sup_{\theta \in [0,1)} \norm[\infty]{\mathbf{E}(\theta)} \mbox{, with } \mathbf{E}(\theta) \defeq \ESDM(\theta) - \SDM(\theta), 
\vspace*{-1mm}
\end{equation}
is small enough. We then characterize the large deviation behavior of $E$ to obtain an upper bound on the probability of Algorithm \ref{alg:1} to deliver a wrong neighborhood, i.e., 
we bound the probability $\prob \{\widehat{\mathcal{N}}(r) \neq \mathcal{N}(r) \}$, {for an arbitrary but fixed node $r\in[\coefflen]$.} 

In order to invoke Theorem \ref{them_support_selection_consistency_multitask_LASSO}, we need to ensure $\beta_{\text{min}} = \min\limits_{r \in \gsupp({\bm \beta}(\cdot))} \| \beta_{r}(\cdot) \|_{L^{2}}$ (with ${\bm \beta}(\cdot)$ given by \eqref{equ_def_parameter_vector_given_cov_matrix}) to be sufficiently large. This is accomplished by assuming \eqref{equ_condition_beta_min_rho}, which is valid for any process $\vx[\timeidx] \in \mathcal{M}$, and implying via \eqref{equ_matrix_inversion_block_mtx_parameter_vector} the lower bound 
\begin{equation}
\label{equ_lower_bound_rho_min_tilde_beta_min}
\beta_{\text{min}} \geq \rho_{\text{min}}.
\end{equation}

In order to ensure validity of \eqref{equ_condition_corr_error_corr_support_selection1}, we need the following relation between the maximum correlation $\sup_{\theta \in [0,1)} \norm[\infty]{\herm{\vep}(\theta) \mX(\theta) }$ and the estimation error $E$ in \eqref{equ_def_error_EST_E}. 
\begin{lemma}
\label{lem_relation_corr_hat_SDM_err}
Consider the multitask learning problem \eqref{equ_def_signal_model_given_cov_matrix}, {with observation vector $\mathbf{y}(\theta)$ and system matrix $\mX(\theta)$ given 
by \eqref{equ_explicit_extression_y_DFT_tilde} and \eqref{equ_explicit_extression_X_DFT_tilde}, based on the permuted observation $\widetilde{\mathbf{D}}=\mathbf{P}_{r} \big(\vx[1],\ldots,\vx[\samplesize]\big)$ of the process $\mathbf{x}[\timeidx] \in \mathcal{M}$}. We have 
\begin{equation}
\label{equ_max_corr_maximum_SDM_estimation_error}
\sup_{\theta \in [0,1)} \norm[\infty]{\herm{\vep}(\theta) \mX(\theta) } \leq 2 E \sqrt{s_{\emph{max}}} U.
\end{equation} 
\end{lemma}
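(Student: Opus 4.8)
The plan is to express the cross-correlation $\herm{\mX}(\theta)\vep(\theta)$ (whose $\infty$-norm agrees with that of $\herm{\vep}(\theta)\mX(\theta)$) directly in terms of sub-blocks of the SDM estimation error matrix $\mathbf{E}(\theta)$ from \eqref{equ_def_error_EST_E}, and then bound those sub-blocks by $E$. First I would insert the model \eqref{equ_def_signal_model_given_cov_matrix}, writing $\vep(\theta)=\vy(\theta)-\mX(\theta)\vbe(\theta)$, to obtain
\[
\herm{\mX}(\theta)\vep(\theta)=\herm{\mX}(\theta)\vy(\theta)-\herm{\mX}(\theta)\mX(\theta)\,\vbe(\theta).
\]
Partitioning $\ESDM(\theta)$ exactly as $\SDM(\theta)$ in \eqref{equ_convenient_partitioning_SDM} into blocks $\hat{\mathbf{c}}(\theta)$, $\hat{\mathbf{G}}(\theta)$, the Grammian identity \eqref{equ_grammian_of_mult_learning_problem_ESDM} identifies $\herm{\mX}(\theta)\vy(\theta)=\hat{\mathbf{c}}(\theta)$ and $\herm{\mX}(\theta)\mX(\theta)=\hat{\mathbf{G}}(\theta)$, so that $\herm{\mX}(\theta)\vep(\theta)=\hat{\mathbf{c}}(\theta)-\hat{\mathbf{G}}(\theta)\vbe(\theta)$.

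The crucial simplification comes from the definition \eqref{equ_def_parameter_vector_given_cov_matrix}, which is precisely the normal-equation relation $\mathbf{G}(\theta)\vbe(\theta)=\mathbf{c}(\theta)$, i.e.\ $\mathbf{c}(\theta)-\mathbf{G}(\theta)\vbe(\theta)=\mathbf{0}$. Subtracting this zero quantity, I would rewrite
\[
\herm{\mX}(\theta)\vep(\theta)=\big(\hat{\mathbf{c}}(\theta)-\mathbf{c}(\theta)\big)-\big(\hat{\mathbf{G}}(\theta)-\mathbf{G}(\theta)\big)\vbe(\theta),
\]
so the right-hand side involves only the error blocks $\hat{\mathbf{c}}(\theta)-\mathbf{c}(\theta)$ and $\hat{\mathbf{G}}(\theta)-\mathbf{G}(\theta)$, both of which are sub-blocks of $\mathbf{E}(\theta)=\ESDM(\theta)-\SDM(\theta)$ and hence have all entries bounded in magnitude by $E$.

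It then remains to apply the triangle inequality. The first term contributes at most $E$. For the second term I would use that $\vbe(\theta)$ is $\smax$-sparse for every $\theta$: its generalized support equals $\mathcal{N}(1)-1$ of size $|\mathcal{N}(1)|\leq\smax$ by Assumption \ref{asspt_max_node_degree}, and for each fixed $\theta$ only those components can be nonzero. Bounding $\norm[\infty]{(\hat{\mathbf{G}}(\theta)-\mathbf{G}(\theta))\vbe(\theta)}\leq\norm[\infty]{\hat{\mathbf{G}}(\theta)-\mathbf{G}(\theta)}\,\norm[1]{\vbe(\theta)}$ and applying Cauchy--Schwarz to the $\smax$-sparse vector gives $\norm[1]{\vbe(\theta)}\leq\sqrt{\smax}\,\norm[2]{\vbe(\theta)}$. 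Finally I would control $\norm[2]{\vbe(\theta)}=\norm[2]{\inv{\mathbf{G}}(\theta)\mathbf{c}(\theta)}\leq\norm[2]{\inv{\mathbf{G}}(\theta)}\,\norm[2]{\mathbf{c}(\theta)}\leq U$, since by Cauchy interlacing and Assumption \ref{equ_asspt_bounded_eigvals} the eigenvalues of the principal submatrix $\mathbf{G}(\theta)$ lie in $[1,U]$ (whence $\norm[2]{\inv{\mathbf{G}}(\theta)}\leq1$), while $\norm[2]{\mathbf{c}(\theta)}\leq\norm[2]{\SDM(\theta)\mathbf{e}_{1}}\leq U$.

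Combining, the second term is at most $E\sqrt{\smax}\,U$, and since $\sqrt{\smax}\,U\geq1$ the first term $E$ is likewise bounded by $E\sqrt{\smax}\,U$, giving $2E\sqrt{\smax}\,U$ after taking the supremum over $\theta\in[0,1)$. The only genuinely delicate point is the bookkeeping in the first step: correctly identifying $\herm{\mX}(\theta)\vy(\theta)$ and $\herm{\mX}(\theta)\mX(\theta)$ with the appropriate blocks of $\ESDM(\theta)$ through \eqref{equ_grammian_of_mult_learning_problem_ESDM}, and exploiting the cancellation $\mathbf{G}(\theta)\vbe(\theta)=\mathbf{c}(\theta)$ to reduce the expression to error blocks acting on a sparse, norm-bounded $\vbe(\theta)$. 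Once this reduction is in place, the remaining eigenvalue and sparsity estimates are routine.
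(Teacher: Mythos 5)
Your proposal is correct and follows essentially the same route as the paper's proof: the paper also reduces $\herm{\vep}(\theta)\mX(\theta)$ to the error blocks $\hat{\mathbf{c}}(\theta)-\mathbf{c}(\theta)$ and $\widehat{\mathbf{G}}(\theta)-\mathbf{G}(\theta)$ of $\ESDM(\theta)-\SDM(\theta)$ (there written entry-wise, using the identity $\big(\SDM\big)_{r+1,1}=\mathbf{g}_{r}^{H}\mathbf{G}^{-1}\mathbf{c}$, which is exactly your normal-equation cancellation $\mathbf{G}\vbe=\mathbf{c}$), then applies Cauchy--Schwarz over the $\smax$-sized support together with $\|\vbe(\theta)\|_{2}=\|\mathbf{G}^{-1}(\theta)\mathbf{c}(\theta)\|_{2}\leq U$ and absorbs the first term via $\sqrt{\smax}\,U\geq1$. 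Your vectorial bookkeeping and the explicit interlacing justification of $\|\mathbf{G}^{-1}(\theta)\|_{2}\leq 1$ are merely cleaner renderings of the same steps.
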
 
\begin{proof}
Appendix \ref{app_proof_lem_relation_corr_hat_SDM_err}.
\end{proof} 
%\leq \frac{\lambda}{4} 

Note that due to \eqref{equ_max_corr_maximum_SDM_estimation_error} and \eqref{equ_lower_bound_rho_min_tilde_beta_min}, a sufficient condition for \eqref{equ_condition_corr_error_corr_support_selection1} to be satisfied is 
\begin{equation}
\label{equ_suff_cond_max_corr_SDM_est}
E \leq \phi^{2}(\S) \rho_{\text{min}} / (64 U \sparsity^{3/2}).
\end{equation} 

The following result characterizes the multitask compatibility condition $\phi(\mathcal{S})$ of the system matrix $\mathbf{X}(\theta)$ given by \eqref{equ_explicit_extression_X_DFT}, for a process $\mathbf{x}[\timeidx]$ belonging to $\mathcal{M}$, i.e., in particular satisfying \eqref{equ_uniform_bound_eigvals_specdensmatrix}.%, we assume that 
\begin{lemma}
\label{lem_compat_hat_multitask_problem_satisfied}
Consider the multitask learning problem \eqref{equ_def_signal_model_given_cov_matrix} which is constructed according to \eqref{equ_explicit_extression_y_DFT}, \eqref{equ_explicit_extression_X_DFT}, based on the observed process $\mathbf{x}[\timeidx] \in \mathcal{M}$.
If the estimation error $E$ in \eqref{equ_def_error_EST_E} satisfies 
\begin{equation}
\label{equ_condition_max_diff_infty_norm_SDM_task}
E \leq 1/(32 \sparsity)  ,
\end{equation}
then, for any subset $\mathcal{S} \subseteq [\coefflen]$ with $|\mathcal{S}| \leq \sparsity$, the system matrix $\mX(\theta)${, given for any $\theta \!\in\![0,1)$ by \eqref{equ_explicit_extression_X_DFT},} satisfies the multitask compatibility condition \eqref{equ_def_multitask_compatibility_condition_bound} with a constant 
\begin{equation}
\label{equ_lower_bound_phi_tilde}
\phi(\mathcal{S}) \geq 1/\sqrt{2}.
\end{equation}
%for the index set $\mathcal{S} = \mathcal{N}(1)-1$.
\end{lemma}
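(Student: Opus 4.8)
The plan is to relate the multitask compatibility constant $\phi(\mathcal S)$ of the empirical system matrix $\mathbf X(\theta)$ to that of an idealized system matrix whose Grammian is exactly the true SDM $\SDM(\theta)$. The key observation is that by the identity \eqref{equ_grammian_of_mult_learning_problem_ESDM}, the relevant quadratic form controlling compatibility is $\norm[2]{\mX(\cdot)\vbed(\cdot)}^2 = \int_0^1 \vbed^H(\theta)\big[\ESDM(\theta)\big]_{2:\coefflen,2:\coefflen}\vbed(\theta)\,d\theta$, i.e., it is governed by the principal submatrix of the BT estimator $\ESDM(\theta)$. The strategy is therefore: first establish a \emph{population} lower bound using the true SDM, then control the \emph{deviation} using the hypothesis \eqref{equ_condition_max_diff_infty_norm_SDM_task}.

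First I would obtain a population bound. For any $\theta$, Assumption \ref{equ_asspt_bounded_eigvals} with $L=1$ gives $\SDM(\theta) \succeq \mathbf I$, and hence the principal submatrix $\big[\SDM(\theta)\big]_{2:\coefflen,2:\coefflen}$ also satisfies $\big[\SDM(\theta)\big]_{2:\coefflen,2:\coefflen} \succeq \mathbf I$ (a principal submatrix of a matrix that dominates the identity itself dominates the identity). This yields the population inequality $\int_0^1 \vbed^H(\theta)\big[\SDM(\theta)\big]_{2:\coefflen,2:\coefflen}\vbed(\theta)\,d\theta \geq \|\vbed(\cdot)\|_2^2$. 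Next I would insert the standard cone-and-norm comparison: for $\vbed(\cdot)\in\mathbb A(\mathcal S)$, one has $\|\vbed_{\mathcal S}(\cdot)\|_1^2 \leq \sparsity\,\|\vbed_{\mathcal S}(\cdot)\|_2^2 \leq \sparsity\,\|\vbed(\cdot)\|_2^2$ by Cauchy--Schwarz over the $\sparsity$ active blocks. Combining these two facts gives the population compatibility constant $\phi_{\mathrm{pop}}(\mathcal S) \geq 1$ directly from the definition \eqref{equ_def_multitask_compatibility_condition_bound}.

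The main work is transferring this to the empirical matrix. I would write the difference of quadratic forms as
\begin{equation}
\Big| \int_0^1 \vbed^H(\theta)\big[\mathbf E(\theta)\big]_{2:\coefflen,2:\coefflen}\vbed(\theta)\,d\theta \Big| \leq \sup_{\theta}\norm[\infty]{\mathbf E(\theta)} \int_0^1 \| \vbed(\theta)\|_1^2 \,d\theta, \nonumber
\end{equation}
where $\mathbf E(\theta)=\ESDM(\theta)-\SDM(\theta)$ and the entrywise bound on a quadratic form yields $|\vbed^H \mathbf M \vbed| \leq \norm[\infty]{\mathbf M}\,\|\vbed\|_1^2$. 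For $\vbed(\cdot)\in\mathbb A(\mathcal S)$ the cone condition gives $\|\vbed(\theta)\|_1 \leq 4\|\vbed_{\mathcal S}(\theta)\|_1$ pointwise is \emph{not} available, so instead I would keep the integral in the $\ell_2([0,1))$-norms: bound $\int_0^1\|\vbed(\theta)\|_1^2 d\theta$ by $\|\vbed(\cdot)\|_1^2$ and then use the cone bound $\|\vbed(\cdot)\|_1 \leq 4\|\vbed_{\mathcal S}(\cdot)\|_1 \leq 4\sqrt{\sparsity}\,\|\vbed(\cdot)\|_2$. This shows the empirical quadratic form differs from the population one by at most $E\cdot 16\sparsity\,\|\vbed(\cdot)\|_2^2$. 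Under the hypothesis $E \leq 1/(32\sparsity)$, this perturbation is at most $\tfrac12\|\vbed(\cdot)\|_2^2$, so the empirical form is $\geq \tfrac12\|\vbed(\cdot)\|_2^2$. Feeding this into \eqref{equ_def_multitask_compatibility_condition_bound} with the same norm comparison $\|\vbed_{\mathcal S}(\cdot)\|_1^2 \leq \sparsity\|\vbed(\cdot)\|_2^2$ gives $\phi^2(\mathcal S) \geq 1/2$, i.e.\ \eqref{equ_lower_bound_phi_tilde}.

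The step I expect to be the main obstacle is the correct handling of the mixed $\ell_1$-over-coordinates / $L^2$-over-frequency structure in the perturbation bound: the entrywise-norm estimate naturally produces $\int_0^1\|\vbed(\theta)\|_1^2\,d\theta$, and one must carefully pass to the $\|\cdot\|_1$ (a sum of $L^2$ norms) and $\|\cdot\|_2$ functionals defined in the Notation, using the integral Cauchy--Schwarz inequality rather than a pointwise one. Getting the constant $16\sparsity$ right (so that the threshold $1/(32\sparsity)$ yields exactly a factor $1/2$) is the delicate bookkeeping, but it is routine once the function-space norms are tracked correctly.
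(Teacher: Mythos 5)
Your proposal is correct and follows essentially the same route as the paper's own proof: both split the empirical quadratic form $\int_0^1 \vbed^H(\theta)\widehat{\mathbf{G}}(\theta)\vbed(\theta)\,d\theta$ (with $\widehat{\mathbf{G}}(\theta)=\mX^H(\theta)\mX(\theta)$, the lower-right block of $\ESDM(\theta)$) into a population term plus a perturbation, bound the perturbation via the entrywise estimate $|\vbed^H\mathbf{M}\vbed|\leq\|\mathbf{M}\|_\infty\|\vbed\|_1^2$ combined with the integral Cauchy--Schwarz inequality $\int_0^1\|\vbed(\theta)\|_1^2\,d\theta\leq\|\vbed(\cdot)\|_1^2$ and the cone bound $\|\vbed(\cdot)\|_1\leq 4\|\vbed_{\mathcal{S}}(\cdot)\|_1$, and lower-bound the population term using $\mathbf{G}(\theta)\succeq\mathbf{I}$ together with $\|\vbed_{\mathcal{S}}(\cdot)\|_1^2\leq\sparsity\|\vbed(\cdot)\|_2^2$, arriving at $\phi^2(\mathcal{S})\geq 1-1/2$. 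The only deviation is bookkeeping: you absorb the perturbation into $\tfrac{1}{2}\|\vbed(\cdot)\|_2^2$ before dividing by $\|\vbed_{\mathcal{S}}(\cdot)\|_1^2$, whereas the paper divides first and subtracts $1/2$ from the ratio; the two computations are arithmetically identical.
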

\begin{proof}
Appendix \ref{app_proof_lem_compat_hat_multitask_problem_satisfied}.
\end{proof} 
Combining Lemma \ref{lem_compat_hat_multitask_problem_satisfied} with the sufficient condition \eqref{equ_suff_cond_max_corr_SDM_est}, we have that the multitask learning problem \eqref{equ_def_signal_model_given_cov_matrix} 
satisfies the requirement \eqref{equ_condition_corr_error_corr_support_selection1} of Theorem \ref{them_support_selection_consistency_multitask_LASSO} if 
\begin{align}
\label{equ_suff_cond_error_compat_constant_correlation_simul}
E & \leq  \frac{\rho_{\text{min}}  }{128 U \sparsity^{3/2}}.
\end{align} 
Indeed, the validity of \eqref{equ_suff_cond_error_compat_constant_correlation_simul} implies \eqref{equ_condition_max_diff_infty_norm_SDM_task} since $\rho_{\text{min}} \leq U$ which can be verified from the assumption \eqref{equ_condition_beta_min_rho} and the relations $\big| \big(\SDM^{-1}(\theta) \big)_{r,r} \big| \geq \lambda_{\text{min}} \big( \SDM^{-1}(\theta) \big) \stackrel{\eqref{equ_uniform_bound_eigvals_specdensmatrix}}\geq 1/U$, and  $\big| \big(\SDM^{-1}(\theta) \big)_{r,r'} \big| \leq \lambda_{\text{max}} \big( \SDM^{-1}(\theta) \big) \stackrel{\eqref{equ_uniform_bound_eigvals_specdensmatrix}}\leq 1$
%Here, we also used the inequality $\rho_{\text{min}} \leq U$, which can be verified via \eqref{equ_uniform_bound_eigvals_specdensmatrix} (which implies 
%$\big| \big(\SDM^{-1}(\theta) \big)_{r,r'} \big| \geq \lambda \big( \SDM^{-1}(\theta) \big) \geq 1/U$). 
%with 
%\begin{equation}
%\end{equation} 

In what follows, we derive an upper bound on the probability that \eqref{equ_suff_cond_error_compat_constant_correlation_simul} is not satisfied for a process $\mathbf{x}[\timeidx] \in \mathcal{M}$. This will be done with the aid of 
\newcommand\m{\mu} 
\begin{lemma} 
\label{lem_large_deviations_SDM_est}
Let $\ESDM(\theta)$ be the estimate of $\SDM(\theta)$, obtained according to \eqref{equ_est_BT_sdm_theta_t} with sample size $\samplesize$ and window function $w[\cdot] \in \ell_{1}(\mathbb{Z})$. 
For $\nu \in [0,1/2)$, 
\begin{align}
\PR\{ E \geq \nu + \mu_{x}^{(h_{1})} \} \leq 2e^{-\frac{N \nu^2}{8 \| w[\cdot] \|^{2}_{1} U^2}  +2\log p + \log 2N }.
\label{eq:boundesterr}
\end{align}
where $\mu_{x}^{(h_{1})}$ denotes the ACF moment \eqref{equ_def_generic_moments_ACF} obtained for the weighting function 
\begin{equation}
\label{equ_def_weight_func_lem_large_deviations_SDM_est}
h_{1}[\lagvar] \defeq \begin{cases} \left| 1- w[\lagvar](1-|\lagvar|/ \samplesize) \right|  & \mbox{ for } |\lagvar| < \samplesize  \\ \hspace*{14mm} 1 & \mbox{ else.} \end{cases}
\end{equation} 
\end{lemma}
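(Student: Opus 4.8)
The plan is to separate the error $\mathbf{E}(\theta)=\ESDM(\theta)-\SDM(\theta)$ into a deterministic bias and a stochastic fluctuation, match the bias exactly to the moment $\mu_{x}^{(h_{1})}$, and then control the fluctuation by a union bound over matrix entries and lags combined with a Gaussian-chaos tail estimate. Writing $\mathbf{E}(\theta)=\big(\ESDM(\theta)-\EX\{\ESDM(\theta)\}\big)+\big(\EX\{\ESDM(\theta)\}-\SDM(\theta)\big)$, the second (bias) term is purely deterministic: since $\EX\{\EACF[\lagvar]\}=(1-|\lagvar|/\samplesize)\ACF[\lagvar]$ for $|\lagvar|<\samplesize$, a comparison with the Fourier series \eqref{equ_def_spectral_density_matrix} shows that its entrywise magnitude is at most $\sum_{|\lagvar|<\samplesize}\big|1-w[\lagvar](1-|\lagvar|/\samplesize)\big|\,\norm[\infty]{\ACF[\lagvar]}+\sum_{|\lagvar|\geq\samplesize}\norm[\infty]{\ACF[\lagvar]}$, which is exactly $\mu_{x}^{(h_{1})}$ with $h_{1}$ as defined in \eqref{equ_def_weight_func_lem_large_deviations_SDM_est}. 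Hence $E\leq\mu_{x}^{(h_{1})}+\sup_{\theta}\norm[\infty]{\ESDM(\theta)-\EX\{\ESDM(\theta)\}}$ surely, and it remains to bound the probability that the fluctuation term exceeds $\nu$.

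For the fluctuation I would avoid any discretization of $\theta$ by exploiting the lag representation of each centered entry, $Z_{r,r'}(\theta)=\sum_{\lagvar=-(\samplesize-1)}^{\samplesize-1}w[\lagvar]e^{-j2\pi\theta\lagvar}\Delta_{r,r'}[\lagvar]$ with $\Delta_{r,r'}[\lagvar]\defeq[\EACF[\lagvar]]_{r,r'}-[\ACF[\lagvar]]_{r,r'}$. The triangle inequality removes the continuum of frequencies at once: $\sup_{\theta}|Z_{r,r'}(\theta)|\leq\|w[\cdot]\|_{1}\max_{|\lagvar|<\samplesize}|\Delta_{r,r'}[\lagvar]|$. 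This is precisely the step that produces the factor $\|w[\cdot]\|_{1}^{2}$ in the exponent of \eqref{eq:boundesterr}: after extracting one factor $\|w[\cdot]\|_{1}$, the per-lag deviation level is $\nu/\|w[\cdot]\|_{1}$, and its square enters the subsequent Gaussian tail. A union bound over the $\coefflen^{2}$ entries $(r,r')$ and the $2\samplesize-1$ lags then reduces the whole claim to a single scalar estimate, with the prefactors $\coefflen^{2}=e^{2\log p}$ and $2\samplesize-1\leq 2\samplesize=e^{\log 2N}$ accounting for the two constant terms in the exponent of \eqref{eq:boundesterr}.

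The remaining scalar deviation bound is the main obstacle. Each $\Delta_{r,r'}[\lagvar]$ is a centered quadratic form $(1/\samplesize)\big(\mathbf{v}^{T}\mathbf{B}\mathbf{v}-\EX\{\mathbf{v}^{T}\mathbf{B}\mathbf{v}\}\big)$ in the jointly Gaussian vector $\mathbf{v}=(\vx[1],\ldots,\vx[\samplesize])$, where $\mathbf{B}$ is the symmetrized selection matrix extracting the products $x_{r}[\timeidx+\lagvar]x_{r'}[\timeidx]$; it has at most $2\samplesize$ nonzero entries of modulus $1/2$, so $\|\mathbf{B}\|_{F}^{2}\leq\samplesize/2$ and $\|\mathbf{B}\|_{2}=O(1)$. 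The crucial structural input is that the block-Toeplitz covariance $\mSi$ of $\mathbf{v}$ satisfies $\|\mSi\|_{2}\leq\sup_{\theta}\lambda_{\max}(\SDM(\theta))\leq U$ by Assumption \ref{equ_asspt_bounded_eigvals}. I would then invoke the standard moment-generating-function bound for Gaussian chaos (Hanson--Wright), for which the variance proxy is $2\|\mSi^{1/2}\mathbf{B}\mSi^{1/2}\|_{F}^{2}\leq 2U^{2}\|\mathbf{B}\|_{F}^{2}\leq U^{2}\samplesize$, giving variance at most $U^{2}/\samplesize$ for $\Delta_{r,r'}[\lagvar]$, and a sub-exponential scale of order $U/\samplesize$.

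Optimizing the Chernoff bound and using the hypothesis $\nu\in[0,1/2)$ --- which keeps the argument in the sub-Gaussian regime so that the sub-gamma correction factor $(1-c|s|)^{-1}$ can be absorbed into the constant --- yields the two-sided estimate $\PR\{|\Delta_{r,r'}[\lagvar]|\geq\nu/\|w[\cdot]\|_{1}\}\leq 2\exp\big(-\samplesize\nu^{2}/(8\|w[\cdot]\|_{1}^{2}U^{2})\big)$. The delicate point is the bookkeeping of constants through the Hanson--Wright step so that the clean $8U^{2}$ emerges; I expect the symmetrization of $\mathbf{B}$ and the block-Toeplitz bound $\|\mSi\|_{2}\leq U$ to require the most care. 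Multiplying this scalar bound by the two-sided factor already displayed, by the entry count $e^{2\log p}$ and by the lag count $e^{\log 2N}$, and combining with the deterministic bias reduction of the first paragraph, delivers \eqref{eq:boundesterr} and completes the proof.
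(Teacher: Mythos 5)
Your proposal is correct and, at the structural level, follows the paper's proof: the same bias/fluctuation split with the bias matched entrywise to $\mu_{x}^{(h_{1})}$, the same $\| w[\cdot]\|_{1}$-based reduction from the continuum of frequencies to a maximum over the $2\samplesize-1$ lags, and the same union bounds over lags and over the $\coefflen^{2}$ matrix entries producing the $\log(2\samplesize)$ and $2\log \coefflen$ terms. Where you genuinely depart from the paper is the final scalar tail estimate. The paper does not cite Hanson--Wright: it writes the centered lag statistic as a \emph{bilinear} form $\mathbf{x}_{k}^{T}\mathbf{J}_{\lagvar}\mathbf{x}_{l}$ in the two component vectors, bounds only the \emph{marginal} Toeplitz covariances $\|\mathbf{C}_{k}\|_{2},\|\mathbf{C}_{l}\|_{2}\leq U$ via Gray's scalar Toeplitz lemma applied to the diagonal SDM entries, passes to an innovation representation $\mathbf{x}_{k}=\mathbf{A}\mathbf{v}$, $\mathbf{x}_{l}=\mathbf{B}\mathbf{v}$ with standard normal $\mathbf{v}\in\mathbb{R}^{2\samplesize}$, symmetrizes, and then proves a self-contained Chernoff/MGF bound (Lemma \ref{lem_large_dev_gauss_quadratic_form_first_lemma}, used through Lemma \ref{lem_quadratic_form_real_valued_x_y}) requiring only the spectral-norm control $\|\mathbf{M}\|_{2}\leq U$ of the symmetrized matrix. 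You instead whiten the full stacked vector, which requires the block-Toeplitz bound $\|\mSi\|_{2}\leq\sup_{\theta}\lambda_{\max}(\SDM(\theta))\leq U$ --- true, by the same Parseval argument underlying Gray's lemma, but a strictly stronger input than the marginal bounds the paper uses --- and then invoke an off-the-shelf Hanson--Wright/Laurent--Massart inequality with a Frobenius-norm variance proxy. Your constant bookkeeping does survive: with $\|\mathbf{B}\|_{F}^{2}\leq \samplesize$, $\|\mathbf{B}\|_{2}\leq 1$, $\tilde{\nu}=\nu/\|w[\cdot]\|_{1}<1/2$ (using $\|w[\cdot]\|_{1}\geq w[0]=1$), $U\geq1$, and $t=\samplesize\tilde{\nu}^{2}/(8U^{2})$, the deviation level $2\|\mSi^{1/2}\mathbf{B}\mSi^{1/2}\|_{F}\sqrt{t}+2\|\mSi^{1/2}\mathbf{B}\mSi^{1/2}\|_{2}\,t$ stays below $\samplesize\tilde{\nu}$, so the clean $8U^{2}$ in the exponent is recovered with room to spare. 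The trade-off between the two routes: yours is shorter if one is willing to cite a sharp-constant Gaussian-chaos inequality, whereas the paper's is self-contained and needs only marginal covariance control rather than a bound on the joint block-Toeplitz covariance.

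One notational slip worth fixing: you define $\Delta_{r,r'}[\lagvar]\defeq[\EACF[\lagvar]]_{r,r'}-[\ACF[\lagvar]]_{r,r'}$, but the lag coefficients of the fluctuation term $\ESDM(\theta)-\EX\{\ESDM(\theta)\}$ are the empirical correlations centered at their \emph{own} means, i.e. $[\EACF[\lagvar]]_{r,r'}-(1-|\lagvar|/\samplesize)[\ACF[\lagvar]]_{r,r'}$; the triangular factor $(1-|\lagvar|/\samplesize)$ belongs to the bias you already absorbed into $\mu_{x}^{(h_{1})}$. Your third paragraph treats $\Delta_{r,r'}[\lagvar]$ as exactly centered, which is the right object, so this is a definition typo rather than a gap, but as written your second and third paragraphs are inconsistent with each other.
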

\begin{proof}
Appendix \ref{app_proof_lem_large_deviations_SDM_est}. 
\end{proof} 
{\emph{Main Result.}}
{By Lemma \ref{lem_large_deviations_SDM_est}, we can characterize the probability of the condition \eqref{equ_suff_cond_error_compat_constant_correlation_simul} to hold. Since validity of \eqref{equ_suff_cond_error_compat_constant_correlation_simul} allows to invoke Theorem \ref{them_support_selection_consistency_multitask_LASSO}, we arrive at }
%Combining the sufficient condition \eqref{equ_suff_cond_error_compat_constant_correlation_simul}, allowing to invoke Theorem , 
%with  yields our main result. 
\begin{theorem} 
\label{thm_main_result_performance_guarantee}
Consider a process $\mathbf{x}[\timeidx] \in \mathcal{M}$ and the corresponding SDM estimate \eqref{equ_est_BT_sdm_theta_t}. 
Then, if 
\begin{align}
\label{equ_condition_sample_size_succ_recovery}
\frac{\samplesize  (\rho_{\emph{min}}/256)^2}{8 \sparsity^3 \| w[\cdot] \|^{2}_{1} U^4} - \log(2 \samplesize)  & \geq \log(2\coefflen^{2}/\delta) \mbox{, and }  \\
\mu_{x}^{(h_{1})}& \leq  \frac{\rho_{\emph{min}}  }{256 U \sparsity^{3/2}}, \label{equ_condition_sample_size_succ_recovery_2}
\end{align}
the probability of Algorithm \ref{alg:1}, using $\lambda \! =\! \rho_{\emph{min}}/(16 \sparsity)$ in \eqref{equ_def_multitask_LASSO} and $\eta\!=\!\rho_{\emph{min}}/2$ in \eqref{equ_def_est_neighborhood_LASSO_CS_sel}, selecting the neighborhood of node $r \in [p]$ not correctly, i.e., 
$\widehat{\mathcal{N}}(r) \neq \mathcal{N}(r)$, is upper bounded as $\prob \{ \widehat{\mathcal{N}}(r) \neq \mathcal{N}(r) \} \leq \delta$.
\end{theorem}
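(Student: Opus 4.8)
The plan is to chain together the deterministic success condition (Theorem \ref{them_support_selection_consistency_multitask_LASSO}) with the large-deviation control of the SDM estimator (Lemma \ref{lem_large_deviations_SDM_est}), using the intermediate lemmas to translate between the quantities. The end goal is to show that under the two hypotheses \eqref{equ_condition_sample_size_succ_recovery}--\eqref{equ_condition_sample_size_succ_recovery_2}, the event $\{E \leq \rho_{\text{min}}/(128 U \smax^{3/2})\}$ (i.e.\ the sufficient condition \eqref{equ_suff_cond_error_compat_constant_correlation_simul}) holds with probability at least $1-\delta$, and that on this event Algorithm \ref{alg:1} recovers $\mathcal{N}(r)$ exactly. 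By the permutation argument around \eqref{equ_relation_CIG_permuted}, it suffices to treat $r=1$.

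\emph{Step 1 (deterministic reduction).} First I would verify that the event \eqref{equ_suff_cond_error_compat_constant_correlation_simul} implies all hypotheses of Theorem \ref{them_support_selection_consistency_multitask_LASSO}. As already established in the excerpt, \eqref{equ_suff_cond_error_compat_constant_correlation_simul} implies \eqref{equ_condition_max_diff_infty_norm_SDM_task} (because $\rho_{\text{min}} \leq U$), so Lemma \ref{lem_compat_hat_multitask_problem_satisfied} gives $\phi(\mathcal{S}) \geq 1/\sqrt{2}$; it also implies \eqref{equ_suff_cond_max_corr_SDM_est} via Lemma \ref{lem_relation_corr_hat_SDM_err}, which in turn yields the correlation bound \eqref{equ_condition_corr_error_corr_support_selection1}. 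Here I take $\mathcal{S}=\gsupp(\vbe(\cdot))=\mathcal{N}(1)-1$, which has size at most $\smax$ by Assumption \ref{asspt_max_node_degree}. The lower bound $\beta_{\text{min}} \geq \rho_{\text{min}}$ from \eqref{equ_lower_bound_rho_min_tilde_beta_min} lets me substitute $\rho_{\text{min}}$ for $\beta_{\text{min}}$ throughout. With $\phi^2(\mathcal{S}) = 1/2$ and $\beta_{\text{min}}=\rho_{\text{min}}$, the prescribed mLASSO parameter $\lambda = \phi^2(\mathcal{S})\beta_{\text{min}}/(8\smax) = \rho_{\text{min}}/(16\smax)$ matches the value in the theorem statement, and the threshold $\eta = \rho_{\text{min}}/2 = \beta_{\text{min}}/2$ matches the selection rule \eqref{eq:corcondeta_success}. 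Consequently, on the event \eqref{equ_suff_cond_error_compat_constant_correlation_simul}, Theorem \ref{them_support_selection_consistency_multitask_LASSO} guarantees $\hat{\mathcal{S}}=\gsupp(\vbe(\cdot))$, and hence $\widehat{\mathcal{N}}(1)=\mathcal{N}(1)$ via \eqref{equ_relation_gsupp_hat_vbe_neighbor}.

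\emph{Step 2 (probabilistic control).} The remaining task is to bound $\prob\{E > \rho_{\text{min}}/(128 U \smax^{3/2})\}$ using Lemma \ref{lem_large_deviations_SDM_est}. I would apply \eqref{eq:boundesterr} with the deviation parameter chosen so that the target threshold splits as $\nu + \mu_x^{(h_1)}$. Concretely, set $\nu = \rho_{\text{min}}/(256 U \smax^{3/2})$; then hypothesis \eqref{equ_condition_sample_size_succ_recovery_2} ensures $\mu_x^{(h_1)} \leq \rho_{\text{min}}/(256 U \smax^{3/2})$, so $\nu + \mu_x^{(h_1)} \leq 2\cdot\rho_{\text{min}}/(256 U \smax^{3/2}) = \rho_{\text{min}}/(128 U \smax^{3/2})$, exactly the threshold in \eqref{equ_suff_cond_error_compat_constant_correlation_simul}. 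Thus $\prob\{E \geq \rho_{\text{min}}/(128 U \smax^{3/2})\} \leq \prob\{E \geq \nu + \mu_x^{(h_1)}\}$, and plugging $\nu$ into the exponent of \eqref{eq:boundesterr} gives a bound of the form $2\exp\!\big(-N\nu^2/(8\|w[\cdot]\|_1^2 U^2) + 2\log p + \log 2N\big)$. Substituting $\nu^2 = (\rho_{\text{min}}/256)^2/(U^2 \smax^3)$ produces the exponent $-N(\rho_{\text{min}}/256)^2/(8\smax^3\|w[\cdot]\|_1^2 U^4) + 2\log p + \log 2N$. Hypothesis \eqref{equ_condition_sample_size_succ_recovery} is precisely the statement that this exponent is $\leq -\log(2p^2/\delta) + 2\log p + \log 2N - \log 2N = \log(\delta/2)$ after rearrangement, so the whole probability is at most $2\cdot(\delta/2) = \delta$. (I should double-check the bookkeeping of the $\log 2N$ and $\log 2$ prefactor terms against \eqref{equ_condition_sample_size_succ_recovery}, matching $\log(2N)$ and the leading factor $2$ so that they cancel correctly into $\delta$.)

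\emph{Combining and the main obstacle.} Putting the two steps together: with probability at least $1-\delta$ the event \eqref{equ_suff_cond_error_compat_constant_correlation_simul} holds, and on that event Algorithm \ref{alg:1} returns $\widehat{\mathcal{N}}(r)=\mathcal{N}(r)$; hence $\prob\{\widehat{\mathcal{N}}(r)\neq\mathcal{N}(r)\}\leq\delta$, which is the claim. The genuinely substantive work is all delegated to the earlier lemmas (the factorization of the BT estimator, the compatibility constant, the correlation-to-error bound, and especially the large-deviation bound), so at the level of Theorem \ref{thm_main_result_performance_guarantee} itself there is no deep obstacle — it is an assembly argument. \textbf{The one place demanding care} is the constant-chasing: ensuring the factor-of-two split of the threshold into $\nu$ and $\mu_x^{(h_1)}$, the matching of $\lambda$ and $\eta$ to the values forced by $\phi^2=1/2$ and $\beta_{\text{min}}=\rho_{\text{min}}$, and the exact cancellation of the $\log(2N)$ and prefactor terms so that the exponent condition \eqref{equ_condition_sample_size_succ_recovery} delivers precisely $\delta$ rather than some looser multiple. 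I would therefore treat the proof as a short sequence of substitutions, verifying each numerical constant lines up with the lemma hypotheses rather than reproving anything.
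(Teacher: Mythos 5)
Your proposal is correct and is essentially the paper's own argument: the paper proves Theorem \ref{thm_main_result_performance_guarantee} by exactly this assembly, reducing on the event \eqref{equ_suff_cond_error_compat_constant_correlation_simul} to Theorem \ref{them_support_selection_consistency_multitask_LASSO} (via Lemma \ref{lem_relation_corr_hat_SDM_err} and Lemma \ref{lem_compat_hat_multitask_problem_satisfied}, taking $\phi^{2}(\mathcal{S})=1/2$ and $\beta_{\text{min}}=\rho_{\text{min}}$ as the certified constants) and then invoking Lemma \ref{lem_large_deviations_SDM_est} with $\nu=\rho_{\text{min}}/(256\,U\smax^{3/2})$, so that \eqref{equ_condition_sample_size_succ_recovery_2} gives the split $\nu+\mu_{x}^{(h_{1})}\leq\rho_{\text{min}}/(128\,U\smax^{3/2})$ and \eqref{equ_condition_sample_size_succ_recovery} makes the resulting bound equal $2e^{\log(\delta/2)}=\delta$. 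Your constant bookkeeping (the factor-of-two split, $\lambda=\rho_{\text{min}}/(16\smax)$, $\eta=\rho_{\text{min}}/2$, and the cancellation of the $2\log\coefflen$ and $\log(2\samplesize)$ terms) matches the paper's.
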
 

{Note that Theorem \ref{thm_main_result_performance_guarantee} applies to the infinite dimensional mLASSO optimization problem in \eqref{equ_def_multitask_LASSO}, thereby ignoring any discretization or numerical implementation issue. Nevertheless, if the discretization is fine enough, i.e., the number $\nrtasks$ of frequency intervals used for the discretized mLASSO \eqref{equ_discretized_mLASSO} is sufficiently large,  
we expect that Theorem \ref{thm_main_result_performance_guarantee} accurately predicts the performance of the GMS method obtained by using Algorithm \ref{alg:1} with 
the discretized mLASSO \eqref{equ_discretized_mLASSO} instead of the infinite dimensional mLASSO \eqref{equ_def_multitask_LASSO}.} 

{
Furthermore, Theorem \ref{thm_main_result_performance_guarantee} considers the probability of (the first two steps of) Algorithm \ref{alg:1} to fail in selecting the correct neighborhood $\mathcal{N}(r)$ of a specific node $r$. 
Since any reasonable combination strategy in step $3$ of Algorithm \ref{alg:1} will yield the correct CIG if all neighborhoods are estimated correctly, we obtain, via a union bound over all nodes $r \in [\coefflen]$, the following bound on the probability that $p$ applications of Algorithm 1 (one for each node) 
yields a wrong CIG.}
\begin{corollary} 
\label{cor_main_result_performance_guarantee}
Consider a process $\mathbf{x}[\timeidx] \in \mathcal{M}$ and the corresponding SDM estimate \eqref{equ_est_BT_sdm_theta_t}. 
Then, if 
\begin{align}
\label{equ_condition_sample_size_succ_recovery}
\frac{\samplesize (\rho_{\emph{min}}/256)^2 }{8 \sparsity^3 \| w[\cdot] \|^{2}_{1} U^4} - \log(2 \samplesize)  & \geq \log(2\coefflen^{3}/\delta) \mbox{, and }  \\
\mu_{x}^{(h_{1})}& \leq  \frac{\rho_{\emph{min}}  }{256 U \sparsity^{3/2}}, \label{equ_condition_sample_size_succ_recovery_2}
\end{align}
the probability of Algorithm \ref{alg:1}, applied sequentially to all nodes $r \in[p]$, using $\lambda \! =\! \rho_{\emph{min}}/(16 \sparsity)$ in \eqref{equ_def_multitask_LASSO} and $\eta\!=\!\rho_{\emph{min}}/2$ in \eqref{equ_def_est_neighborhood_LASSO_CS_sel}, yielding a wrong CIG, i.e., $\widehat{\mathcal{G}} \! \neq \! \mathcal{G}$, is upper bounded as $\prob \{ \widehat{\mathcal{G}}_{x} \! \neq \! \mathcal{G}_{x} \} \leq \delta$.
\end{corollary}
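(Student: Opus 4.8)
The plan is to deduce the corollary from Theorem~\ref{thm_main_result_performance_guarantee} by a straightforward union bound over the $\coefflen$ nodes, with the per-node confidence level suitably rescaled. First I would observe, as noted in the discussion preceding the corollary, that any reasonable combination rule in step~$3$ of Algorithm~\ref{alg:1} (e.g.\ the AND or the OR rule) returns the correct CIG whenever every individual neighborhood estimate is correct, since correctness of all $\coefflen$ neighborhood estimates is by construction sufficient for correctness of the assembled edge set $\widehat{\edges}$. Contrapositively, a wrong CIG can occur only if at least one neighborhood is estimated incorrectly, so the failure event is contained in a union,
\[
\{ \widehat{\mathcal{G}}_{x} \neq \cig \} \subseteq \bigcup_{r \in [\coefflen]} \{ \widehat{\mathcal{N}}(r) \neq \mathcal{N}(r) \},
\]
and hence, by the union bound,
\[
\prob \{ \widehat{\mathcal{G}}_{x} \neq \cig \} \leq \sum_{r \in [\coefflen]} \prob \{ \widehat{\mathcal{N}}(r) \neq \mathcal{N}(r) \}.
\]

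The key step is to control each summand by invoking Theorem~\ref{thm_main_result_performance_guarantee}, which is stated for an arbitrary but fixed node $r$, after replacing the target failure probability $\delta$ in the theorem with $\delta/\coefflen$. Since the mLASSO parameter $\lambda = \rho_{\text{min}}/(16 \sparsity)$ in \eqref{equ_def_multitask_LASSO} and the threshold $\eta = \rho_{\text{min}}/2$ in \eqref{equ_def_est_neighborhood_LASSO_CS_sel} specified in the corollary coincide exactly with those prescribed by the theorem, and the smoothness requirement \eqref{equ_condition_sample_size_succ_recovery_2} on $\mu_{x}^{(h_{1})}$ is verbatim identical in both statements, the only thing left to verify is that the corollary's sample-size condition implies the theorem's sample-size condition at confidence level $\delta/\coefflen$. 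Substituting $\delta \mapsto \delta/\coefflen$ into the theorem's first condition turns its right-hand side into $\log\!\big(2\coefflen^{2}/(\delta/\coefflen)\big) = \log(2\coefflen^{3}/\delta)$, which is precisely the right-hand side appearing in the corollary; the left-hand side is unchanged. Thus the corollary's hypotheses are exactly the theorem's hypotheses instantiated at level $\delta/\coefflen$.

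Consequently, Theorem~\ref{thm_main_result_performance_guarantee} yields $\prob \{ \widehat{\mathcal{N}}(r) \neq \mathcal{N}(r) \} \leq \delta/\coefflen$ for each $r \in [\coefflen]$, and summing the $\coefflen$ terms in the union bound gives $\prob \{ \widehat{\mathcal{G}}_{x} \neq \cig \} \leq \delta$, as claimed. I do not expect a genuine obstacle here: the argument is a routine union bound, and the single point requiring care is the bookkeeping of how dividing the per-node budget by $\coefflen$ inflates the logarithmic term in the sample-size requirement from $\log(2\coefflen^{2}/\delta)$ to $\log(2\coefflen^{3}/\delta)$. The only modeling remark worth stating explicitly is that the containment step above is valid for \emph{any} monotone combination rule, so the bound holds regardless of whether the AND or the OR rule is used to assemble $\widehat{\mathcal{G}}_{x}$ from the neighborhood estimates.
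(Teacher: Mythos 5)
Your proof is correct and follows exactly the paper's own route: the paper derives the corollary from Theorem \ref{thm_main_result_performance_guarantee} by noting that any reasonable combination rule in step 3 yields the correct CIG whenever all neighborhoods are correct, and then applying a union bound over the $\coefflen$ nodes with per-node level $\delta/\coefflen$, which is precisely how the theorem's $\log(2\coefflen^{2}/\delta)$ inflates to the corollary's $\log(2\coefflen^{3}/\delta)$. Your bookkeeping of the rescaled confidence level and the observation that condition \eqref{equ_condition_sample_size_succ_recovery_2} is unaffected are both accurate.
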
 

According to \eqref{equ_condition_sample_size_succ_recovery}, neglecting the term $\log(2\samplesize)$ and assuming $\rho_{\text{min}}$ fixed, the sample size $\samplesize$ has to grow polynomially with the maximum node degree and logarithmically with the number of process components $\coefflen$. 
%This dependence of the sample size on the maximum node degree is also observed in \cite{RavWainLaff2010} for the case of i.i.d. samples of a binary Ising model. 
{This polynomial and logarithmic scaling of the sample size $\samplesize$ on the maximum node degree $\sparsity$ and number of process components $\coefflen$, respectively, is a typical requirement for accurate GMS in the high-dimensional regime \cite{RavWainLaff2010,MeinBuhl2006,RavWainRaskYu2011}.}

Note that, according to \eqref{equ_condition_sample_size_succ_recovery}, the sample size $\samplesize$ has to grow with the squared $\ell_{1}$ norm $\| w[\cdot] \|^{2}_{1}$ of the window function $w[\cdot]$ employed in the BT estimator \eqref{equ_est_BT_sdm_theta_t}. 
For the inequality \eqref{equ_condition_sample_size_succ_recovery_2} to hold, one typically has to use a window function $w[\cdot]$ whose effective support matches those of the process ACF $\ACF[\lagvar]$. Therefore, 
Theorem \ref{thm_main_result_performance_guarantee} suggests that the sample size has to grow with the square of the effective process correlation width (effective size of the ACF support), which is quantified by $\mu_{x}$. 
However, some first results on the fundamental limits of GMS for time series in indicate that the required sample size should be effectively independent of the correlation width $\mu_{x}$ \cite{HannakJung2014conf}. 

One explanation of the discrepancy between the sufficient condition \eqref{equ_condition_sample_size_succ_recovery} and the lower bounds on the required sample size is that the derivation of Theorem \ref{thm_main_result_performance_guarantee} is based on requiring the SDM estimator $\ESDM(\theta)$, given by \eqref{equ_est_BT_sdm_theta_t}, to be accurate \emph{simultaneously} for all $\theta \! \in \! [0,1)$. According to \cite{CaiRenZhou}, the achievable uniform estimation accuracy, measured by the minimax risk, depends inversely on the correlation width $\mu_{x}$. However, the analysis in \cite{HannakJung2014conf} suggests that it is not necessary to accurately estimate the SDM $\SDM(\theta)$ for all $\theta$ simultaneously. {Indeed, for a process $\vx[\timeidx]$ with underlying CIG $\cig$, the SDM values $\SDM(\theta)$ are coupled over frequency $\theta \in [0,1)$ via the relation \eqref{equ_charact_edge_set_covariance_matrix_k}.} 
Due to this coupling, the SDM needs to be estimated accurately only on average (over frequency $\theta$). A more detailed performance analysis of the selection scheme in Algorithm \ref{alg:1}, taking the coupling effect due to \eqref{equ_charact_edge_set_covariance_matrix_k} into account, is an interesting direction 
for future work. 

\section{Numerical Experiments}
\label{sec_numerical_experiments}

{
The performance of the GMS method given by Algorithm \ref{alg:1} is assessed by two complementary numerical experiments. In the first experiment we 
measure the ability of our method to correctly identify the edge set of the CIG of a synthetically generated process. In a second experiment, we apply our 
GMS method to electroencephalography (EEG) measurements, demonstrating that the resulting CIG estimate may be used for detecting the eye state (open/closed) of a person.} 

\subsection{Synthetic Process} 

We generated a Gaussian process $\mathbf{x}[n]$ of dimension $p=64$ by applying a finite impulse 
response filter $g[\lagvar]$ of length $2$ to a zero-mean stationary white Gaussian noise process $\mathbf{e}[n] \sim \mathcal{N}(\mathbf{0},\mathbf{C}_{0})$. %, i.e., 
The covariance matrix $\mathbf{C}_{0}$ was chosen such that the resulting CIG $\cig=([p],\edges)$ satisfies \eqref{equ_def_maximum_node_degree} with $s_{\text{max}}=3$.
The non-zero filter coefficients $g[0]$ and $g[1]$ are chosen such that the magnitude of the associated transfer function is uniformly bounded from above and below by positive constants, thereby ensuring condition \eqref{equ_uniform_bound_eigvals_specdensmatrix}. %and \eqref{equ_charact_edge_set_covariance_matrix_k} (for any $F$) are satisfied.  

{We then computed the CIG estimate $\widehat{\mathcal{G}}_{x}$ using Algorithm \ref{alg:1} based on the discretized version \eqref{equ_discretized_mLASSO} of the mLASSO (with $\nrtasks \!=\! 4$) and the window function $w[\lagvar] = \exp(- \lagvar^2 / 44 )$.} In particular, we applied 
the \emph{alternating direction method of multipliers (ADMM)} to the optimization problem \eqref{equ_discretized_mLASSO} (cf.\ \cite[Sec. 6.4]{DistrOptStatistLearningADMM}).\footnote{{We used the all-zero initialization for the ADMM variables in our experiments. In general, the convergence of the ADMM implementation for LASSO type problems of the form \eqref{equ_discretized_mLASSO} is not sensitive to the precise initialization of the optimization variables \cite{HongLuo2013,DistrOptStatistLearningADMM}.}}
We set $\lambda = c_{1} \phi^2_{\text{min}} \rho_{\text{min}} / (18 s_{\text{max}} \nrtasks)$ and $\eta = \rho_{\text{min}}/2$, where $c_{1}$ was varied in the range $c_{1} \in [10^{-3},10^{3}]$. 

In Fig.\ \ref{fig_ROC}, we show receiver operating characteristic (ROC) curves with the average fraction of false alarms {$P_{fa} \defeq \frac{1}{M} \sum_{l \in [M]} \frac{|\widehat{\edges}_{l} \setminus \edges|}{\coefflen(\coefflen\!-\!1)/2\!-\!|\edges|}$} and the average fraction of correct decisions {$P_{d} \defeq \frac{1}{M} \sum_{l \in [M]} \frac{|\widehat{\edges}_{l}|}{|\edges|}$} for varying mLASSO parameter $\lambda$. Here, $\widehat{\mathcal{E}}_{l}$ denotes the edge set estimate obtained from Algorithm \ref{alg:1} during the $l$-th simulation run. We averaged over $M=10$ i.i.d.\ simulation runs. 
As can be seen from Fig.\ \ref{fig_ROC}, our selection scheme yields reasonable performance even if $\samplesize\!=\!32$ only for a $64$-dimensional process. We also adapted an existing VAR-based network learning method \cite{Nowak2011} in order to 
estimate the underlying CIG. The resulting ROC curves are also shown in Fig.\ \ref{fig_ROC}. Note that the performance obtained for the VAR-based method is similar to a pure guess. The inferior performance of the VAR-based method is due to 
a model mismatch since the simulated process is not a VAR process {but a moving average process}.
\begin{figure}
\vspace{-1mm}
\centering
\psfrag{FA}[c][c][.85]{\uput{4mm}[270]{0}{\hspace{0mm}$P_{fa}$}}
\psfrag{ROC}[c][c][.9]{\uput{2.5mm}[90]{0}{}}
\psfrag{data1}[c][c][.9]{\hspace*{7mm} $N\!=\!128$}
\psfrag{data2}[c][c][.9]{\hspace*{5mm} $N\!=\!64$}
\psfrag{data3}[c][c][.9]{\hspace*{5mm} $N\!=\!32$}
\psfrag{data4}[c][c][.9]{\hspace*{12mm} $N\!=\!128$,VAR}
\psfrag{data5}[c][c][.9]{\hspace*{10mm} $N\!=\!64$,VAR}
\psfrag{data6}[c][c][.9]{\hspace*{10mm} $N\!=\!32$,VAR}
\psfrag{x_0}[c][c][.9]{\uput{0.3mm}[270]{0}{$0$}}
\psfrag{x_0_1}[c][c][.9]{\uput{0.3mm}[270]{0}{$0.1$}}
\psfrag{x_0_2}[c][c][.9]{\uput{0.3mm}[270]{0}{$0.2$}}
\psfrag{x_0_3}[c][c][.9]{\uput{0.3mm}[270]{0}{$0.3$}}
\psfrag{x_0_4}[c][c][.9]{\uput{0.3mm}[270]{0}{$0.4$}}
\psfrag{x_0_5}[c][c][.9]{\uput{0.3mm}[270]{0}{$0.5$}}
\psfrag{x_0_6}[c][c][.9]{\uput{0.3mm}[270]{0}{$0.6$}}
\psfrag{x_0_7}[c][c][.9]{\uput{0.3mm}[270]{0}{$0.7$}}
\psfrag{x_0_8}[c][c][.9]{\uput{0.3mm}[270]{0}{$0.8$}}
\psfrag{x_0_9}[c][c][.9]{\uput{0.3mm}[270]{0}{$0.9$}}
\psfrag{x_1}[c][c][.9]{\uput{0.3mm}[270]{0}{$1$}}
\psfrag{y_0}[c][c][.9]{\uput{0.3mm}[180]{0}{$0$}}
\psfrag{y_0_1}[c][c][.9]{\uput{0.3mm}[180]{0}{$0.1$}}
\psfrag{y_0_2}[c][c][.9]{\uput{0.3mm}[180]{0}{$0.2$}}
\psfrag{y_0_3}[c][c][.9]{\uput{0.3mm}[180]{0}{$0.3$}}
\psfrag{y_0_4}[c][c][.9]{\uput{0.3mm}[180]{0}{$0.4$}}
\psfrag{y_0_5}[c][c][.9]{\uput{0.3mm}[180]{0}{$0.5$}}
\psfrag{y_0_6}[c][c][.9]{\uput{0.3mm}[180]{0}{$0.6$}}
\psfrag{y_0_7}[c][c][.9]{\uput{0.3mm}[180]{0}{$0.7$}}
\psfrag{y_0_8}[c][c][.9]{\uput{0.3mm}[180]{0}{$0.8$}}
\psfrag{y_0_9}[c][c][.9]{\uput{0.3mm}[180]{0}{$0.9$}}
\psfrag{y_1}[c][c][.9]{\uput{0.3mm}[180]{0}{$1$}}
\psfrag{PD}[c][c][.9]{\uput{4mm}[90]{0}{\hspace{0mm}$P_{d}$}}
%% $v(\hat{\mathbf{x}}_{\text{HT}}(\cdot); \mathbf{x}_0) \mbox{ ; } L_{{\bm \gamma}(\cdot),\mathbf{x}_{0}}$
\centering
\includegraphics[height=6cm,width=9.2cm]{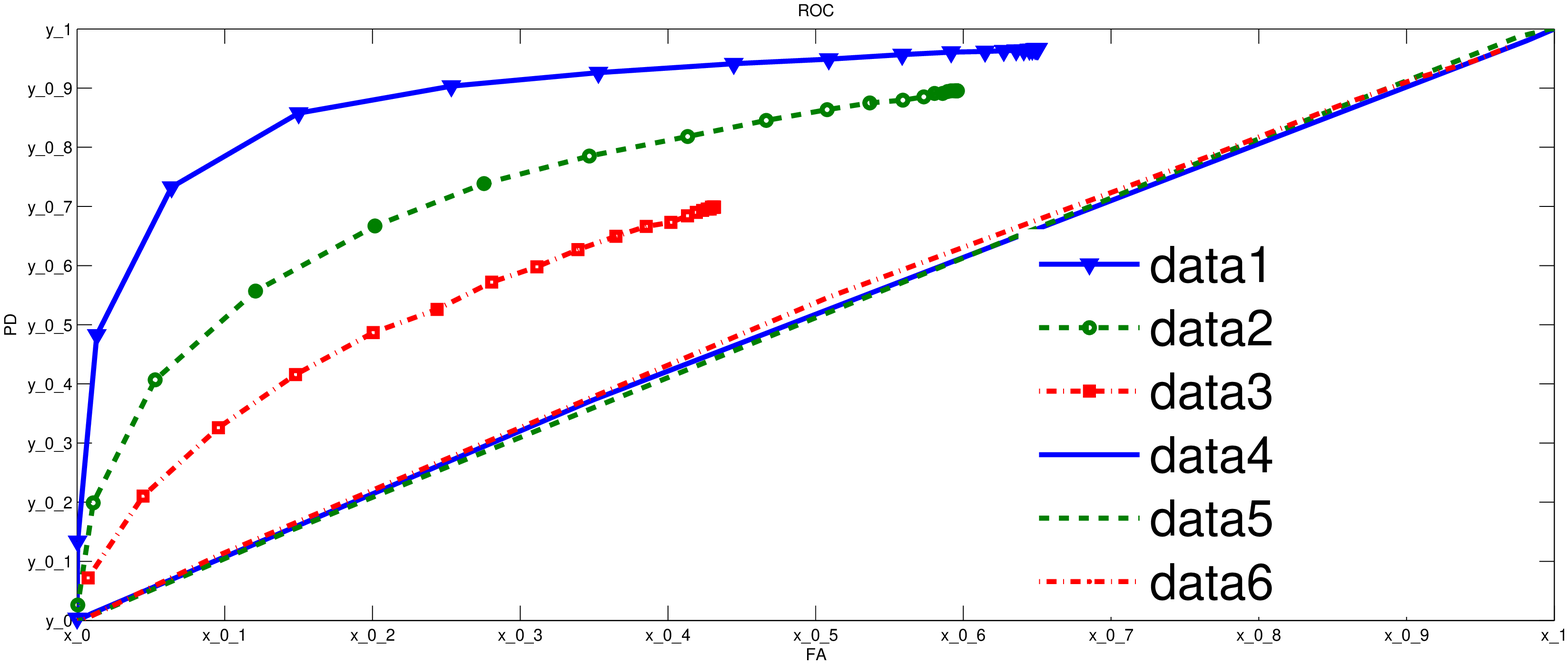}
\vspace{2mm}
  \caption{ROC curves for the compressive selection scheme given by Algorithm \ref{alg:1} and for a VAR-model based GMS scheme presented in \cite{Nowak2011}.} %%  and $\sigma^{2}=1$
\label{fig_ROC}
\vspace*{-1mm}
\end{figure}

We also evaluated the empirical detection probability $P_{d}$ for fixed mLASSO parameter $\lambda=\rho_{\text{min}}/10$ and varying rescaled sample size $\tau \! \defeq \! \samplesize / ( \log(\coefflen) s_{\text{max}}^{3})$. According to Fig.\ \ref{fig_P_d_vs_rescaled_sample_size}, and as suggested by the bound \eqref{equ_condition_sample_size_succ_recovery} of Theorem \ref{thm_main_result_performance_guarantee}, for a fixed squared norm $\|w[\cdot]\|_{1}^{2}$ (the window function $w[\lagvar]$ employed in \eqref{equ_est_BT_sdm_theta_t} is fixed throughout the simulation), the rescaled sample size $\tau=\samplesize / ( \log(\coefflen) s_{\text{max}}^{3})$ seems to be an accurate performance indicator. In particular, the selection scheme in Algorithm \ref{alg:1} works well only if $\tau \gg 1$. 

\vspace*{2mm} 
\begin{figure}
\centering
\psfrag{xlabel}[c][c][.85]{\uput{3.8mm}[270]{0}{\hspace{0mm}$\log(\tau)$}}
%\psfrag{title}[c][c][.9]{\uput{1mm}[90]{0}{$P_{d}$ vs.\ rescaled sample size}}
\psfrag{title}[c][c][.9]{\uput{1mm}[90]{0}{}}
\psfrag{x_0}[c][c][.9]{\uput{0.3mm}[270]{0}{$0$}}
\psfrag{x_m_1}[c][c][.9]{\uput{0.3mm}[270]{0}{$-1$}}
\psfrag{x_m_0_5}[c][c][.9]{\uput{0.3mm}[270]{0}{$-0.5$}}
\psfrag{x_m_2}[c][c][.9]{\uput{0.3mm}[270]{0}{$-2$}}
\psfrag{x_m_3}[c][c][.9]{\uput{0.3mm}[270]{0}{$-3$}}
\psfrag{x_m_4}[c][c][.9]{\uput{0.3mm}[270]{0}{$-4$}}
\psfrag{x_0_5}[c][c][.9]{\uput{0.3mm}[270]{0}{$0.5$}}
\psfrag{x_1}[c][c][.9]{\uput{0.3mm}[270]{0}{$1$}}
\psfrag{x_2}[c][c][.9]{\uput{0.3mm}[270]{0}{$2$}}
\psfrag{x_3}[c][c][.9]{\uput{0.3mm}[270]{0}{$3$}}
\psfrag{y_0}[c][c][.9]{\uput{0.3mm}[180]{0}{$0$}}
\psfrag{data1}[c][c][.9]{\hspace*{6mm}$p=64$}
\psfrag{data2}[c][c][.9]{\hspace*{6mm}$p=128$}
\psfrag{data3}[c][c][.9]{\hspace*{6mm}$p=160$}
\psfrag{y_0_1}[c][c][.9]{\uput{0.3mm}[180]{0}{$0.1$}}
\psfrag{y_0_2}[c][c][.9]{\uput{0.3mm}[180]{0}{$0.2$}}
\psfrag{y_0_3}[c][c][.9]{\uput{0.3mm}[180]{0}{$0.3$}}
\psfrag{y_0_4}[c][c][.9]{\uput{0.3mm}[180]{0}{$0.4$}}
\psfrag{y_0_5}[c][c][.9]{\uput{0.3mm}[180]{0}{$0.5$}}
\psfrag{y_0_6}[c][c][.9]{\uput{0.3mm}[180]{0}{$0.6$}}
\psfrag{y_0_7}[c][c][.9]{\uput{0.3mm}[180]{0}{$0.7$}}
\psfrag{y_0_8}[c][c][.9]{\uput{0.3mm}[180]{0}{$0.8$}}
\psfrag{y_0_9}[c][c][.9]{\uput{0.3mm}[180]{0}{$0.9$}}
\psfrag{y_1}[c][c][.9]{\uput{0.3mm}[180]{0}{$1$}}
\psfrag{ylabel}[c][c][.9]{\uput{6mm}[90]{0}{\hspace{0mm}$P_{d}$}}
%% $v(\hat{\mathbf{x}}_{\text{HT}}(\cdot); \mathbf{x}_0) \mbox{ ; } L_{{\bm \gamma}(\cdot),\mathbf{x}_{0}}$
\centering
\includegraphics[height=6cm,width=9.2cm]{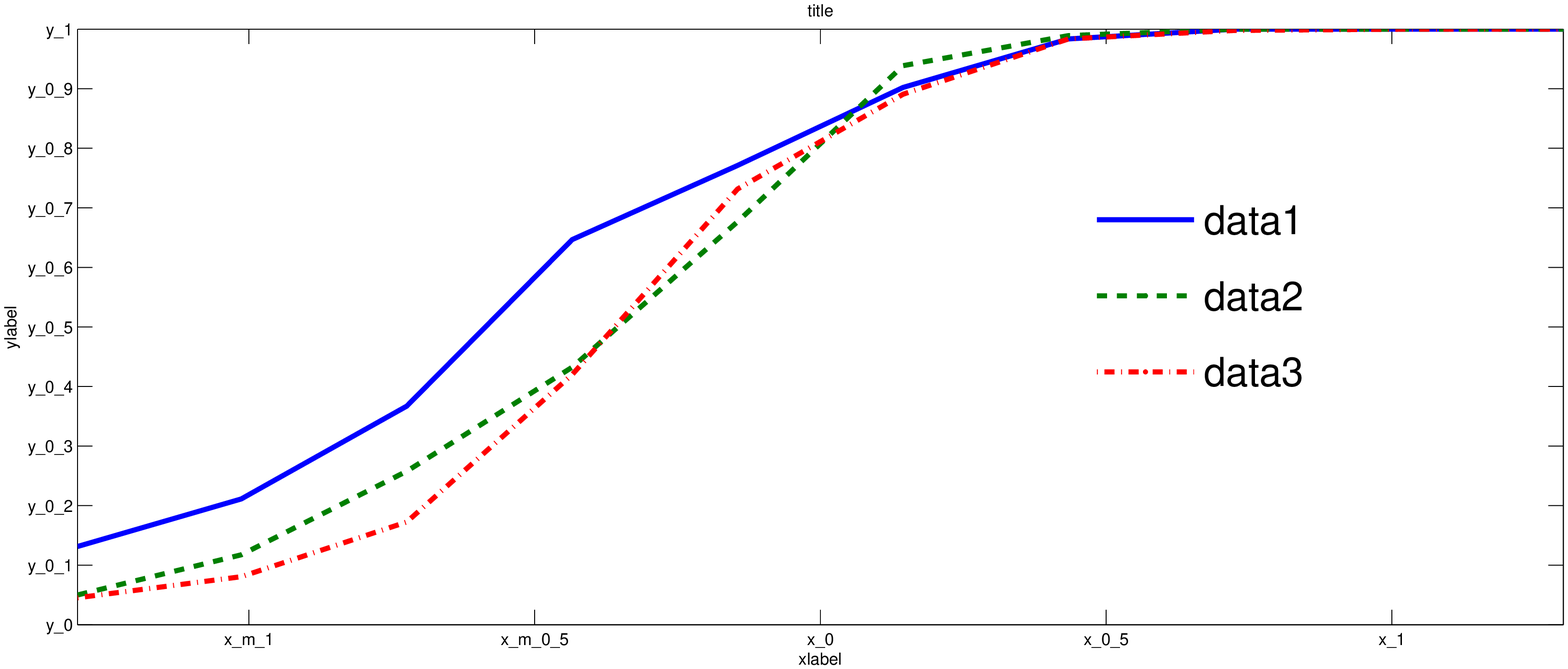}
\caption{Empirical detection probability $P_{d}$ vs. rescaled sample size $\tau = \samplesize/(\log(\coefflen)s_{\text{max}}^{3})$.}
\label{fig_P_d_vs_rescaled_sample_size}
\vspace*{3mm}
\end{figure}

{
\subsection{Eye State Detection} 
In this experiment, we evaluate the applicability of our GMS method for the problem of eye state detection based on EEG measurement data. 
This problem is relevant, e.g., for medical care or for driving drowsiness detection \cite{WangGuan2014}. We used the EEG dataset donated by 
Oliver Roesler from Baden-Wuerttemberg Cooperative State University (DHBW), Stuttgart, Germany, and available at the UCI machine learning repository \cite{BacheLichman2013}.
The dataset consists of 14980 time samples, each sample {being a vector made up of 14 feature values}. The true eye state was detected via a camera during the EEG recording. %For the output, Ò1Ó indicates the eye-closed and Ò0Ó the eye-open state. 
}

{
As a first processing step, given the raw data, we removed parts of the time series which contain outliers. In a second step we performed a detrending operation by applying a 
boxcar filter of length $5$. Based on the true eye state signal, which is equal to one if the eye was open and equal to zero if it was closed,  we extracted two data blocks $\mathbf{D}_{0}$, $\mathbf{D}_{1}$, one corresponding to each state. 
We then applied Algorithm \ref{alg:1} with the discretized mLASSO \eqref{equ_discretized_mLASSO} (with $\nrtasks=5$) instead of \eqref{equ_def_multitask_LASSO} and using the OR-rule in the third step, i.e., $\widehat{\mathcal{G}}$ contains the edge $(r,r')$ if either $r \in \widehat{\mathcal{N}}(r')$ or $r' \in \mathcal{N}(r)$. For the window function in the BT estimator \eqref{equ_est_BT_sdm_theta_t} we used the choice $w[\lagvar] = \exp(-(\lagvar/59)^2)$. 
In Fig. \ref{fig_eye_state}, we show the two CIG estimates obtained for each of the two data blocks $\mathbf{D}_{0},\mathbf{D}_{1} \in \mathbb{R}^{14 \times 1024}$ each corresponding to a sample size of $\samplesize=1024$. As evident from Fig. \ref{fig_eye_state}, 
the resulting graph estimates are significantly different for each of the two eye states. In particular, the graph obtained for the ``eye closed'' state contains much more edges 
which are moreover localized at few nodes having relatively high degree. Thus, the CIG estimate delivered by Algorithm \ref{alg:1} could serve as an indicator  
for the eye state of a person based on EEG measurements.}

\begin{figure}
\hspace*{-5mm}
\begin{minipage}{0.4\columnwidth} 
\psfrag{(a)}[c][c][.9]{\uput{0.5mm}[270]{0}{(a) ``eye open''}}
\includegraphics[width=5cm] {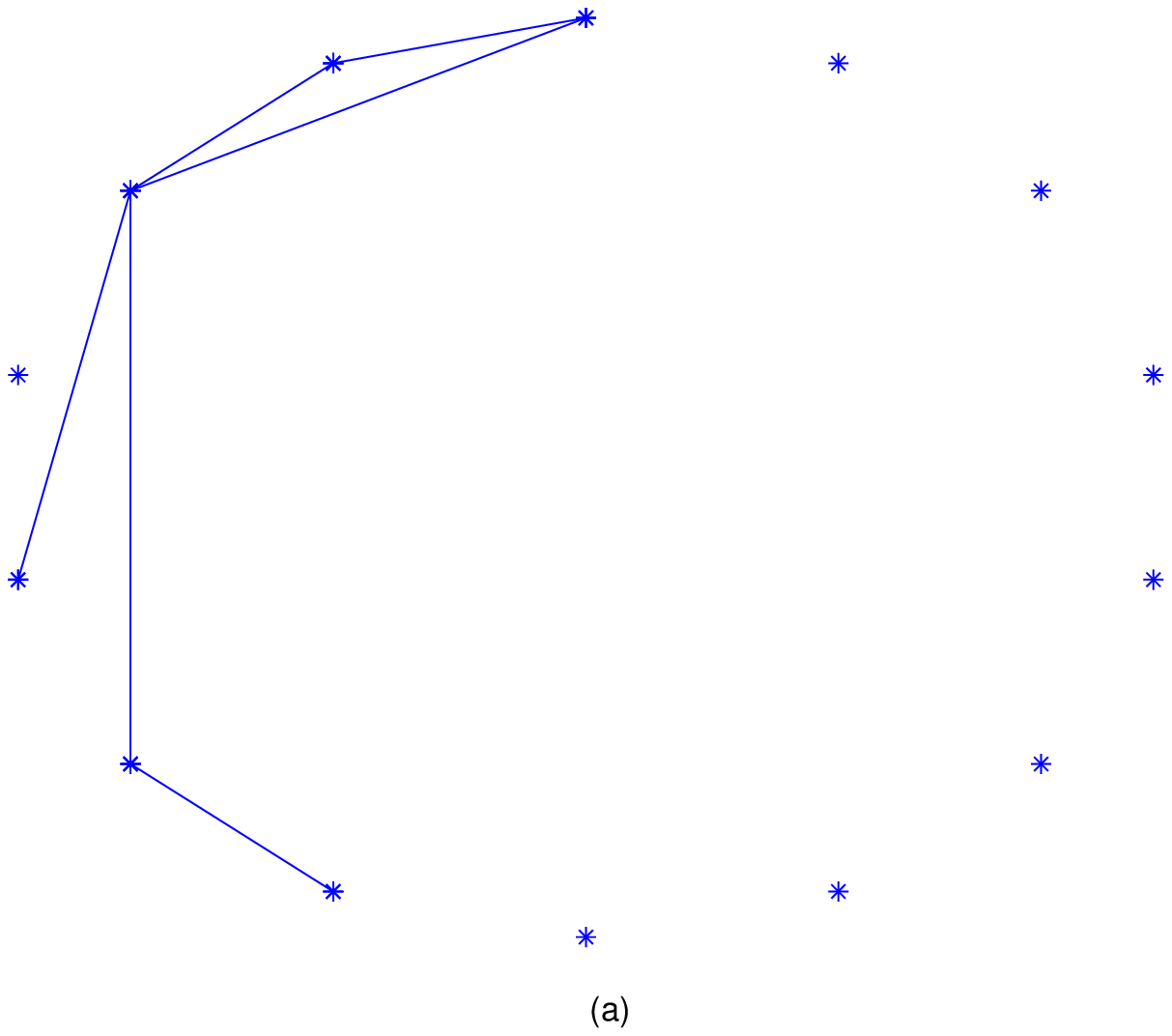}
	% \caption{Der Text}
	% \label{Text}
	\end{minipage} \hspace*{8mm}
	\begin{minipage}{0.4\columnwidth} 
	\psfrag{(b)}[c][c][.9]{\uput{0.5mm}[270]{0}{(b) ``eye closed''}}
	  \includegraphics[width=5cm] {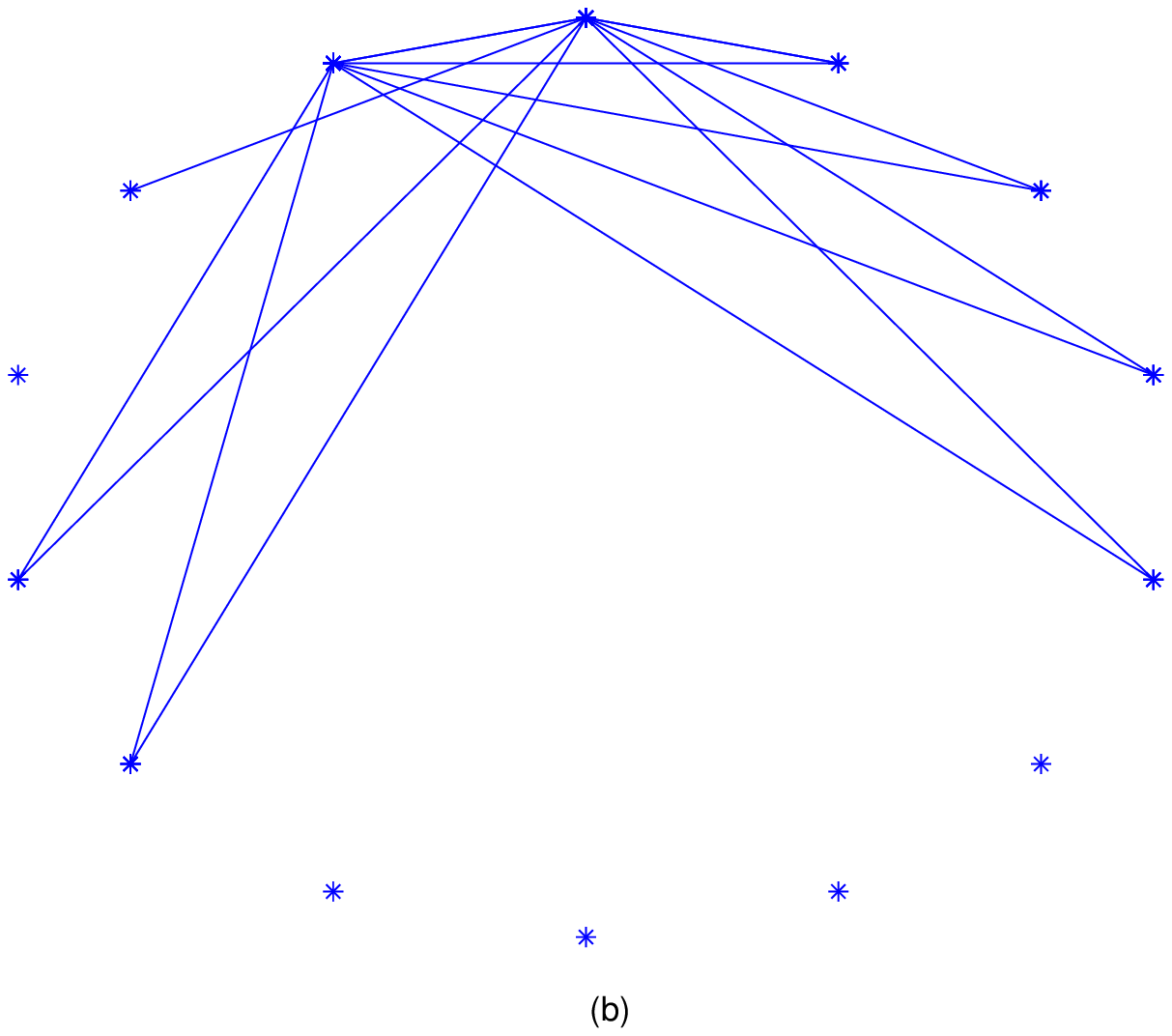}
	% \caption{Der Text}
	% \label{Text}
	\end{minipage}
\vspace*{3mm}
\caption{Resulting CIG estimate for the EEG time series under different eye states.} \label{fig_eye_state}
\end{figure}

\section{Conclusion}

We proposed a nonparametric compressive selection scheme for inferring the CIG of a stationary discrete-time Gaussian vector process. This selection scheme is based on combining a BT estimator for the SDM with the mLASSO. The key idea behind this novel selection scheme is the formulation of 
the GMS problem for a stationary vector process as a multitask learning problem. This formulation lends itself to applying mLASSO to GMS for stationary vector processes. Drawing on an established performance characterization \cite{BuhlGeerBook} of the mLASSO, we derived sufficient conditions on the observed sample size such that the probability of selecting a wrong CIG does not exceed a given (small) threshold. Some numerical experiments validate our theoretical performance analysis and show superior performance compared to an existing (VAR-based) method in case of model mismatch. 

{Our work may serve as the basis for some interesting avenues of further research, e.g., extending the concept of a CIG to 
processes with a singular SDM or introducing the notion of a frequency dependent CIG. Moreover, we expect that our frequency domain approach to GMS for stationary vector processes can be extended easily to non-stationary vector processes by using time-frequency concepts (e.g., based on underspread assumptions).}

\section{Acknowledgment}

The author is grateful to R.\ Heckel who performed a careful review of some early manuscripts, thereby pointing to some errors 
in the consistency analysis and the formulation of Lemma \ref{thm_charact_error_multitask_LASSO}. 
Moreover, some helpful comments from and discussions with H.\ B{\"o}lcskei and F.\ Hlawatsch, resulting in an improved presentation of the main ideas, are appreciated sincerely.

\appendices 

\section{Proof of Lemma \ref{lem_repr_ESDM_factor}}
\label{lem_repr_ESDM_factor_proof}

Let $\tilde{x}_{r}[\timeidx]$ and $\tilde{x}_{t}[\timeidx]$ denote $(2\samplesize\!-\!1)$-periodic discrete-time signals, with one period given by  
\begin{equation}
\label{equ_def_periodic_2N_x_mn}
\tilde{x}_{\{r,t\}}[\timeidx-1] \defeq \begin{cases} (\mathbf{x}[\timeidx])_{\{r,t\}} & \mbox{ for }  \timeidx  \in [\samplesize],  \\Ê
0 & \mbox{ for }  \timeidx \in [2\samplesize\!-\!1] \setminus [\samplesize] 
\end{cases}
\end{equation} 
and corresponding DFTs 
\begin{align} 
\tilde{X}_{\{r,t\}}[k] &\!\defeq\! \sum_{\timeidx=0}^{2\samplesize-2} \tilde{x}_{\{r,t\}}[\timeidx] \exp(- j 2 \pi k \timeidx/(2\samplesize\!-\!1)) \nonumber \\ 
& \stackrel{\eqref{equ_def_periodic_2N_x_mn}}{=} 
\!\sum_{\timeidx \in [\samplesize]} \big( \vx[\timeidx] \big)_{\{r,t\}}  \exp(- j 2 \pi k (\timeidx\!-\!1)/(2\samplesize\!-\!1)), \nonumber
\end{align} 
{for $k=0,\ldots,2\samplesize\!-\!2$.} 
Note that 
\begin{equation}
\label{equ_identity_element_DF}
\tilde{X}_{\{r,t\}}[k] = \big( \mathbf{D} \mathbf{F} \big)_{\{r,t\},k+1}.
\end{equation}  

Let us verify the equivalence of \eqref{equ_est_BT_factorization} and \eqref{equ_est_BT_sdm_theta_t} entry-wise. To this end{, for arbitrary but fixed $r,t \in [\coefflen]$,} consider the entry $\hat{s} \!\defeq\! \big( \ESDM(\theta) \big)_{r,t}$ of the SDM estimate given by \eqref{equ_est_BT_sdm_theta_t}. By inspecting \eqref{equ_est_BT_sdm_theta_t}, 
\begin{equation}
\label{equ_est_BT_sdm_theta_equiv_formula1}
\hat{s} = (1/\samplesize) \hspace*{-2mm} \sum_{\lagvar=-\samplesize\!+\!1}^{\samplesize\!-\!1}  \hspace*{-2mm} w[\lagvar] \exp(-2\pi \lagvar \theta) \cdot (\tilde{x}_{r} \otimes \tilde{x}_{t})[\lagvar], 
\end{equation} 
where $(\tilde{x}_{r} \!\otimes\! \tilde{x}_{t})[\lagvar] \!=\! \sum_{\timeidx=0}^{2 \samplesize \!-\!2} \tilde{x}_{r}[\timeidx+\lagvar] \tilde{x}_{t}[\timeidx]$ denotes the periodic autocorrelation function of $\tilde{x}_{r}[\timeidx]$ and $\tilde{x}_{t}[\timeidx]$. 
The DFTs $W[k]$ and $V[k]$ of the $(2\samplesize\!-\!1)$-periodic signals $w[\lagvar]\exp(-2\pi \lagvar \theta)$ and $(\tilde{x}_{r} \otimes \tilde{x}_{t})[\lagvar]$ are given by \cite[Ch. 8]{OppenheimSchaferBuck1998}, using $\theta_{k} \!\defeq\!2\pi(k\!-\!1)/(2\samplesize\!-\!1)$, 
\begin{equation} 
\label{equ_expr_DFT_W_V} 
W[k] = W(\theta + \theta_{k+1}) \mbox{ and } V[k] =  \tilde{X}_{r}[k]\tilde{X}^{*}_{t}[k],
\end{equation} 
respectively.
Using again \cite[Ch. 8]{OppenheimSchaferBuck1998}, we obtain from \eqref{equ_est_BT_sdm_theta_equiv_formula1} that
\begin{align}
\hat{s} % & =  (1/\samplesize) \sum_{\lagvar=-\samplesize\!+\!1}^{\samplesize\!-\!1} w[\lagvar] \exp(-2\pi \lagvar \theta) \cdot (\tilde{x}_{r} \otimes \tilde{x}_{l})[\lagvar] \nonumber \\[3mm]
 &  = \frac{1}{\samplesize(2\samplesize\!-\!1)} \sum_{k=0}^{2\samplesize\!-\!2} W[k]V^{*}[k] \nonumber \\[3mm]
 &  \stackrel{\eqref{equ_expr_DFT_W_V}}{=}\frac{1}{\samplesize(2\samplesize\!-\!1)}\sum_{k\in [2\samplesize\!-\!1]} W(\theta + \theta_{k}) \tilde{X}^{*}_{r}[k]\tilde{X}_{t}[k] \nonumber \\[3mm]
& \stackrel{\eqref{equ_identity_element_DF},\eqref{equ_def_diagonal_matrix_weight_function}}{=} \frac{1}{\samplesize} \sum_{k\in [2\samplesize\!-\!1]} (\mathbf{D}\mathbf{F})_{t,k} \big(\mathbf{W}(\theta)\big)_{k,k} \big( (\mathbf{D}\mathbf{F})^{H}\big)_{k,r}.
\end{align}  
Note that the last expression is nothing but the $(r,t)$-th entry of the RHS in \eqref{equ_est_BT_factorization}.

\section{Proof of Theorem \ref{them_support_selection_consistency_multitask_LASSO}}
\label{app_proof_them_support_selection_consistency_multitask_LASSO}

We will need the following lemma, which is a straightforward generalization of \cite[Thm.~6.1]{BuhlGeerBook}. 
\begin{lemma}
\label{thm_charact_error_multitask_LASSO}
Consider the multitask learning problem \eqref{equ_def_signal_model_given_cov_matrix} with parameter vector $\vbe(\cdot) \in \ell_{q}([0,1))$, observation vector $\mathbf{y}(\theta)$ and system matrix $\mX(\theta)$ defined by
\eqref{equ_def_parameter_vector_given_cov_matrix}, \eqref{equ_explicit_extression_y_DFT_tilde} and \eqref{equ_explicit_extression_X_DFT_tilde}, respectively.
Suppose,  
\begin{align}
\sup_{\theta \in [0,1)} \norm[\infty]{\herm{\vep}(\theta) \mX(\theta)  } < \frac{\lambda}{4} \mbox{, and } \gsupp({\bm \beta}(\cdot)) \subseteq \S, 
\label{eq:noiscorrbound}
\end{align}
with an index set $\S \subseteq [q]$ of size $\sparsity \! = \! |\S|$. 
If the system matrix $\mathbf{X}(\theta)$ possesses a positive multitask compatibility constant $\phi(\S)>0$, the mLASSO estimate $\hat{\vbe}[\mathbf{y}(\cdot),\mathbf{X}(\cdot)]$ given by \eqref{equ_def_multitask_LASSO} satisfies 
\vspace*{-2mm}
\begin{align}
\label{eq:condcharacterrormultitask}
\norm[1]{\vbe(\cdot) - \hat{ \vbe}(\cdot)} <  \frac{4 \lambda \sparsity}{\phi^{2}(\S)}. 
\end{align}
\end{lemma}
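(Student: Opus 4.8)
The plan is to transplant the standard ``basic inequality'' proof of the LASSO $\ell_{1}$-error bound (cf.\ \cite[Thm.~6.1]{BuhlGeerBook}) to the continuum-of-tasks Hilbert space $\ell_{q}([0,1))$. Write $\bm{\Delta}(\cdot) \defeq \hat{\vbe}(\cdot) - \vbe(\cdot)$ for the estimation error. Since $\hat{\vbe}(\cdot)$ minimizes the mLASSO objective in \eqref{equ_def_multitask_LASSO}, evaluating that objective at $\hat{\vbe}(\cdot)$ and at the true $\vbe(\cdot)$ yields the basic inequality $\norm[2]{\vy(\cdot)\!-\!\mX(\cdot)\hat{\vbe}(\cdot)}^{2} + \lambda \norm[1]{\hat{\vbe}(\cdot)} \leq \norm[2]{\vy(\cdot)\!-\!\mX(\cdot)\vbe(\cdot)}^{2} + \lambda\norm[1]{\vbe(\cdot)}$. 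Substituting the model \eqref{equ_def_signal_model_given_cov_matrix}, so that $\vy(\cdot)\!-\!\mX(\cdot)\hat{\vbe}(\cdot) = \vep(\cdot)\!-\!\mX(\cdot)\bm{\Delta}(\cdot)$ and $\vy(\cdot)\!-\!\mX(\cdot)\vbe(\cdot) = \vep(\cdot)$, and expanding the squared $\ell_{2}$-norm, I would cancel the common term $\norm[2]{\vep(\cdot)}^{2}$ to arrive at $\norm[2]{\mX(\cdot)\bm{\Delta}(\cdot)}^{2} + \lambda\norm[1]{\hat{\vbe}(\cdot)} \leq 2\Re\langle \vep(\cdot),\mX(\cdot)\bm{\Delta}(\cdot)\rangle + \lambda\norm[1]{\vbe(\cdot)}$, where $\langle\cdot,\cdot\rangle$ denotes the inner product of $\ell_{q}([0,1))$.

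The next step is to control the cross term. Using the inner-product definition, $\langle\vep(\cdot),\mX(\cdot)\bm{\Delta}(\cdot)\rangle = \int_{0}^{1} \herm{\bm{\Delta}}(\theta)\herm{\mX}(\theta)\vep(\theta)\,d\theta$. Interchanging the finite sum over components with the integral over $\theta$ and applying the entrywise (Hölder-type) bound, its real part is at most $\sup_{\theta}\norm[\infty]{\herm{\vep}(\theta)\mX(\theta)} \int_{0}^{1}\sum_{r} |\Delta_{r}(\theta)|\,d\theta$. Here lies the one genuinely continuum-specific manoeuvre: the inner integral is an $L^{1}$-in-frequency quantity, whereas the norm $\norm[1]{\bm{\Delta}(\cdot)} = \sum_{r}\norm[L^2]{\Delta_{r}(\cdot)}$ measures each block in $L^{2}$. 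Because $[0,1)$ carries unit Lebesgue measure, Cauchy--Schwarz gives $\int_{0}^{1}|\Delta_{r}(\theta)|\,d\theta \leq \norm[L^2]{\Delta_{r}(\cdot)}$, so the cross term is bounded by $\sup_{\theta}\norm[\infty]{\herm{\vep}(\theta)\mX(\theta)}\,\norm[1]{\bm{\Delta}(\cdot)} < (\lambda/4)\norm[1]{\bm{\Delta}(\cdot)}$ by hypothesis \eqref{eq:noiscorrbound}. Inserting this estimate turns the basic inequality into $\norm[2]{\mX(\cdot)\bm{\Delta}(\cdot)}^{2} + \lambda\norm[1]{\hat{\vbe}(\cdot)} < (\lambda/2)\norm[1]{\bm{\Delta}(\cdot)} + \lambda\norm[1]{\vbe(\cdot)}$.

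To finish, I would exploit $\gsupp(\vbe(\cdot))\subseteq\S$. Splitting each $\ell_{1}$-norm over $\S$ and $\comp{\S}$, using $\norm[1]{\hat{\vbe}(\cdot)} \geq \norm[1]{\vbe_{\S}(\cdot)} - \norm[1]{\bm{\Delta}_{\S}(\cdot)} + \norm[1]{\bm{\Delta}_{\comp{\S}}(\cdot)}$ together with $\vbe_{\comp{\S}}(\cdot)=\mathbf{0}$, and cancelling the common $\lambda\norm[1]{\vbe_{\S}(\cdot)}$, leads to $\norm[2]{\mX(\cdot)\bm{\Delta}(\cdot)}^{2} + (\lambda/2)\norm[1]{\bm{\Delta}_{\comp{\S}}(\cdot)} < (3\lambda/2)\norm[1]{\bm{\Delta}_{\S}(\cdot)}$. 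Two consequences follow. First, dropping the nonnegative prediction term shows $\norm[1]{\bm{\Delta}_{\comp{\S}}(\cdot)} < 3\norm[1]{\bm{\Delta}_{\S}(\cdot)}$, i.e.\ $\bm{\Delta}(\cdot)\in\mathbb{A}(\S)$ (cf.\ \eqref{equ_norm_inequality_2_1_norm_compatibility}), which licenses the multitask compatibility condition \eqref{equ_def_multitask_compatibility_condition_bound} and hence $\norm[1]{\bm{\Delta}_{\S}(\cdot)} \leq \sqrt{\sparsity}\,\norm[2]{\mX(\cdot)\bm{\Delta}(\cdot)}/\phi(\S)$. Second, adding $\lambda\norm[1]{\bm{\Delta}_{\S}(\cdot)} + (\lambda/2)\norm[1]{\bm{\Delta}_{\comp{\S}}(\cdot)}$ to both sides reconstitutes $\norm[2]{\mX(\cdot)\bm{\Delta}(\cdot)}^{2} + \lambda\norm[1]{\bm{\Delta}(\cdot)}$ on the left, while the cone bound makes the right side at most $4\lambda\norm[1]{\bm{\Delta}_{\S}(\cdot)} \leq 4\lambda\sqrt{\sparsity}\,\norm[2]{\mX(\cdot)\bm{\Delta}(\cdot)}/\phi(\S)$. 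A final application of the elementary inequality $2ab\leq a^{2}+b^{2}$ absorbs $\norm[2]{\mX(\cdot)\bm{\Delta}(\cdot)}^{2}$ and yields $\lambda\norm[1]{\bm{\Delta}(\cdot)} < 4\lambda^{2}\sparsity/\phi^{2}(\S)$; dividing by $\lambda$ gives exactly \eqref{eq:condcharacterrormultitask}.

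I expect the main obstacle to be the continuum of tasks rather than the algebra: one must ensure that every $\ell_{q}([0,1))$-norm and the Hilbert-space inner product are well defined and that the interchange of the component sum with the $\theta$-integral (Fubini) is justified by the integrability of the functions involved. The crux is the $L^{1}$-versus-$L^{2}$-in-frequency reconciliation in the cross-term bound, which the unit measure of $[0,1)$ settles cleanly via Cauchy--Schwarz. Once this estimate is in hand, the cone argument and the compatibility step are identical to the finite-task case of \cite{BuhlGeerBook}, so no further difficulty is anticipated; I note in passing that the same chain in fact delivers the slightly sharper constant $2$ in place of $4$, so the stated bound holds comfortably.
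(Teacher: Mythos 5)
Your proof is correct and follows exactly the route the paper intends: the paper gives no proof of this lemma at all, asserting only that it is a straightforward generalization of \cite[Thm.~6.1]{BuhlGeerBook}, and your basic-inequality/cone/compatibility argument---with the unit-measure Cauchy--Schwarz step that reconciles the $L^{1}$-in-frequency cross term with the block norm $\norm[1]{\bm{\Delta}(\cdot)}$---is precisely that generalization written out in full. Two trivial remarks: the degenerate case $\hat{\vbe}(\cdot)=\vbe(\cdot)$ should be dispatched separately so that the strict inequalities (and the application of the compatibility condition, which requires $\bm{\Delta}(\cdot)\in\mathbb{A}(\S)\setminus\{\mathbf{0}\}$) are legitimate, and your closing claim that the ``same chain'' yields the sharper constant $2$ actually requires a slight variant of it---adding $(\lambda/2)\norm[1]{\bm{\Delta}_{\S}(\cdot)}$ rather than $\lambda\norm[1]{\bm{\Delta}_{\S}(\cdot)}$ to both sides, so that the left-hand side reconstitutes $(\lambda/2)\norm[1]{\bm{\Delta}(\cdot)}$ and the right-hand side becomes $2\lambda\norm[1]{\bm{\Delta}_{\S}(\cdot)}$---not literally the chain you wrote, which delivers the stated constant $4$.
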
 

Evaluating Lemma \ref{thm_charact_error_multitask_LASSO} for the specific choice $\lambda = \frac{\phi^{2}(\S) \beta_{\text{min}}}{8 \sparsity}$, we have that, under condition \eqref{equ_condition_corr_error_corr_support_selection1} (which ensures \eqref{eq:noiscorrbound}), the mLASSO estimate $\hat{\vbe}[\mathbf{y}(\cdot),\mathbf{X}(\cdot)]$ satisfies 
\begin{align}
\label{equ_proof_mLASSO_error_2_1_norm_error_lower_beta_min_2}
\norm[1]{\vbe(\cdot) - \hat{ \vbe}(\cdot)} < \beta_{\text{min}}/2. 
\end{align}
This implies, in turn, for any $r \in \gsupp(\vbe(\cdot))$, 
\begin{equation}
\norm[L^2] {\hat{\beta}_{r}(\cdot)} \!\geq\! \norm[L^{2}] {\beta_{r}(\cdot)} - | \norm[L^2] {{\beta}_{r}(\cdot)} -\norm[L^2] {\hat{\beta}_{r}(\cdot)} |  
\!\stackrel{\eqref{equ_beta_min_multitask_problem},\eqref{equ_proof_mLASSO_error_2_1_norm_error_lower_beta_min_2}}{>}\! \beta_{\text{min}}/2 \nonumber
\end{equation}
and similarly for any $r \in [\coefflen] \setminus \gsupp(\vbe(\cdot))$, 
\begin{equation}
\norm[L^2] {\hat{\beta}_{r}(\cdot)} \leq \norm[L^{2}] {\beta_{r}(\cdot)} + | \norm[L^2] {{\beta}_{r}(\cdot)} - \norm[L^2] {\hat{\beta}_{r}(\cdot)} |  \stackrel{\eqref{equ_proof_mLASSO_error_2_1_norm_error_lower_beta_min_2}}{<}  \beta_{\text{min}}/2. \nonumber
\end{equation} 
Thus, the set $\{ r: \norm[L^2] {\hat{\beta}_{r}(\cdot)} \geq\beta_{\text{min}}/2 \}$ coincides with the true generalized support $\gsupp(\vbe(\cdot))$.

\section{Proof of Lemma \ref{lem_relation_corr_hat_SDM_err}} 
\label{app_proof_lem_relation_corr_hat_SDM_err} 

{Let us recall the partitioning \eqref{equ_convenient_partitioning_SDM} of the SDM: 
\begin{equation}
\begin{pmatrix} \gamma(\theta) & \herm{\mathbf{c}}(\theta) \\[2mm]  \mathbf{c}(\theta) & \mathbf{G}(\theta) \end{pmatrix} \! \defeq \! \SDM(\theta). \nonumber
\vspace*{-1mm}
\end{equation} 
Analogously, we partition the SDM estimate $\ESDM(\theta)$} given by \eqref{equ_est_BT_sdm_theta_t} as 
\begin{equation}
\label{equ_convenient_partitioning_ESDM}
\begin{pmatrix} \hat{\gamma}(\theta) & \herm{\hat{\mathbf{c}}}(\theta) \\[2mm]  \hat{\mathbf{c}}(\theta) & \widehat{\mathbf{G}}(\theta) \end{pmatrix} \! \defeq \! \ESDM(\theta).
\end{equation} 
{For the sake of light notation, we consider throughout the remainder of this proof an arbitrary but fixed frequency $\theta$ and drop the argument of the 
frequency dependent variables, e.g., $\SDM(\theta)$, $\mathbf{G}(\theta)$, $\mathbf{c}(\theta)$, $\ESDM(\theta)$, $\widehat{\mathbf{G}}(\theta)$, $\hat{\mathbf{c}}(\theta)$ and so on.}

If we define the matrix $\mathbf{J} \in \mathbb{R}^{(\coefflen\!-\!1)\times \coefflen}$ by setting $J_{k,l}\! =\! 1$ if $l\!=\!k+1$ and $J_{k,l}\!=\!0$ else, we have 
\begin{equation}
\label{equ_relation_c_J_SDM_e_1}
\mathbf{c}  = \mathbf{J} \SDM \mathbf{e}_{1}. 
\end{equation} 

{
Consider the system matrix $\mX$ given by \eqref{equ_explicit_extression_X_DFT_tilde} and note that, 
by comparing \eqref{equ_grammian_of_mult_learning_problem_ESDM} with \eqref{equ_convenient_partitioning_ESDM},} we have 
\begin{equation}
\label{equ_grammian_X_widehat_G}
\mX^{H} \mX = \widehat{\mG}. 
\end{equation}
In what follows, %we denote the $r$th column of $\mathbf{X}(\theta)$ and $\mathbf{X}(\theta)$ by $\hat{\mathbf{x}}_{r}$ and $\mathbf{x}_{r}$, respectively. Moreover, 
we denote the $r$th columns of $\mX$, $\mG$ and $\widehat{\mG}$ by $\vx_{r}$, $\mathbf{g}_{r}$ and $\hat{\mathbf{g}}_{r}$, respectively.  

We also require a helpful identity for certain sub-matrices of the SDM:
\vspace*{-2mm}
\begin{equation}
\label{equ_identity_SDM_r_1}
\big( \SDM \big)_{r\!+\!1,1} = \mathbf{g}_{r}^{H} \mathbf{G}^{-1} \mathbf{c}.
\vspace*{-1mm}
\end{equation} 
This can be verified by
\vspace*{-2mm}
\begin{align}
 \mathbf{g}_{r}^{H} \mathbf{G}^{-1}  \mathbf{c} 
  & =  \mathbf{e}_{r}^{H} \mG \mathbf{G}^{-1}  \mathbf{c}   \nonumber \\[3mm]
 & \stackrel{\eqref{equ_relation_c_J_SDM_e_1}}{=} \mathbf{e}_{r}^{H} \mG \mG^{-1} \mathbf{J} \SDM \mathbf{e}_{1}  \nonumber \\[3mm]
 & =  \mathbf{e}_{r}^{H} \mathbf{J} \SDM \mathbf{e}_{1} \nonumber \\[3mm]
 & =  \big( \SDM \big)_{r\!+\!1,1}. \nonumber
 \vspace*{-1mm}
\end{align}
Note that 
\vspace*{-2mm}
\begin{align}
\label{equ_bound_corr_mag_1}
 |\herm{\mathbf{x}}_{r} \vep|  & \stackrel{\eqref{equ_def_signal_model_given_cov_matrix}}{=} |\herm{\mathbf{x}}_{r}(\mathbf{y} - \mathbf{X} \vbe )|  \\[3mm]
 & \stackrel{\eqref{equ_grammian_of_mult_learning_problem_ESDM},\eqref{equ_def_parameter_vector_given_cov_matrix}}{=}
 \big|  \big( \ESDM\big)_{r\!+\!1,1} \hspace*{-2mm}- \! \big( \hat{\mathbf{g}}_{r}\!-\!\mathbf{g}_{r} \big)^{H}  \mathbf{G}^{-1}\mathbf{c} 
\! - \! \mathbf{g}_{r}^{H}  \mathbf{G}^{-1} \mathbf{c} \big|. \nonumber
\end{align} 
Combining \eqref{equ_bound_corr_mag_1} with \eqref{equ_identity_SDM_r_1}, 
\vspace*{-1mm}
\begin{align}
\label{equ_bound_corr_mag_112}
 |\herm{\mathbf{x}}_{r} \vep | & \! =\!  \big|   \big( \ESDM  \! -Ê\! \SDM \big)_{r\!+\!1,1} \! - \! \big( \hat{\mathbf{g}}_{r} \!-\!\mathbf{g}_{r}\big)^{H}  \mathbf{G}^{-1} \mathbf{c} \big| \nonumber 
 \\[1mm]
 &  \stackrel{\eqref{equ_def_parameter_vector_given_cov_matrix}}{\leq}  \big|   \big( \ESDM  \! -Ê\! \SDM \big)_{r\!+\!1,1}\big| \! + \! \big| \big( \hat{\mathbf{g}}_{r}^{H} \!-\!\mathbf{g}_{r}^{H} \big) \vbe \big|.
\end{align} 
Applying the Cauchy-Schwarz inequality to the second term in \eqref{equ_bound_corr_mag_112} and using 
\vspace*{-2mm}
\begin{equation} 
| \gsupp(\vbe(\cdot)) | \stackrel{\eqref{eq:suppeqbeta}}{=} |\mathcal{N}(1)| \stackrel{\eqref{equ_def_maximum_node_degree}}{\leq} \maxdegree, 
\end{equation} 
we obtain 
\vspace*{-2mm}
\begin{align}
\label{equ_bound_corr_mag_1123}
 |\herm{\mathbf{x}}_{r} \vep| \! \leqÊ\! \big \|  \SDM \!-\!  \ESDM  \big\|_{\infty}(1 \! +\! \sqrt{s_{\text{max}}} \| \vbe \|_{2}).
 \vspace*{-2mm}
\end{align}
%where $(a)$ is due to the Cauchy-Schwarz inequality and $| \mathcal{S}| = |\mathcal{N}(1)| \stackrel{\eqref{equ_def_maximum_node_degree}}{\leq} s_{\text{max}}$.
Inserting the bound 
\vspace*{-2mm}
\begin{align}
\| {\bm \beta} \|_{2} & \stackrel{\eqref{equ_def_parameter_vector_given_cov_matrix}}{=} \big\| \mathbf{G}^{-1} \mathbf{c} \big\|_{2} %\nonumber \\[3mm]Ê
%& \leq \big\| \mathbf{G}^{-1}(\theta) \big\|_{2}  \big\| \mathbf{c}(\theta) \big\|_{2} \nonumber \\[3mm]Ê
 \stackrel{\eqref{equ_uniform_bound_eigvals_specdensmatrix}}{\leq} U,  \nonumber
\end{align}
into \eqref{equ_bound_corr_mag_1123}, finally yields
\vspace*{-1mm}
\begin{align} 
 |\herm{\mathbf{x}}_{r} \vep| & \!\leq\!  \big \|  \SDM \!-\!  \ESDM  \big\|_{\infty} ( 1 + \sqrt{s_{\text{max}}} U ) \nonumber \\[2mm]
 & \!\leq\! 2  \big \|  \SDM \!-\!  \ESDM  \big\|_{\infty} \sqrt{s_{\text{max}}} U.\label{equ_upper_bound_mag_correlation}
 \vspace{-3mm}
\end{align}

\section{Proof of Lemma \ref{lem_compat_hat_multitask_problem_satisfied}} 
\label{app_proof_lem_compat_hat_multitask_problem_satisfied}
%Let us define the shorthand $q\!\defeq \! p-1$.
%Consider the multitask learning problem \eqref{equ_def_signal_model_given_cov_matrix}.
We first state an inequality which applies to any vector function ${\bm \beta}(\cdot) \in \ell_{q}([0,1))$ for some $q$.
In particular, 
\begin{align}
\label{equ_bound_mixed_2_1_norm_with_ell1_norm}
\int_{\theta=0}^{1} \| {\bm \beta}(\theta) \|^{2}_{1} d \theta & = \int_{\theta=0}^{1} \sum_{r \in [q]} | \beta_{r}(\theta)|  \sum_{r' \in [q]} | \beta_{r'}(\theta)|  d \theta \nonumber \\[1mm] 
%& = \sum_{r \in [q]} \sum_{r' \in [q]} \int_{\theta=0}^{1} | \beta_{r}(\theta)|  | \beta_{r'}(\theta)| d \theta \nonumber \\[3mm]
& \stackrel{(a)}{\leq}   \sum_{r \in [q]} \sum_{r' \in [q]} \|\beta_{r}(\cdot)\|_{L^2}\|\beta_{r'}(\cdot)\|_{L^2} \nonumber \\[1mm]
%& =  \big( \sum_{r \in [q]}  \|\beta_{r}(\cdot)\|_{L^2} \big)^{2} \nonumber \\[3mm]
& =  \| {\bm \beta}(\cdot) \|^{2}_{1},
\end{align} 
where step $(a)$ is due to the Cauchy-Schwarz inequality.
This, in turn, implies for any ${\bm \beta}'(\cdot) \in \mathbb{A}(\mathcal{S})$ (cf. \eqref{equ_norm_inequality_2_1_norm_compatibility}) that 
\begin{equation}
\label{equ_inequality_beta_prime_ell1_norm_ell_2_1_S}
\int_{\theta=0}^{1} \| {\bm \beta}'(\theta) \|^{2}_{1} d \theta \stackrel{\eqref{equ_bound_mixed_2_1_norm_with_ell1_norm}}{\leq} \| {\bm \beta}'(\cdot) \|^{2}_{1}
 \stackrel{\eqref{equ_norm_inequality_2_1_norm_compatibility}}{\leq} 16  \| {\bm \beta}'_{\mathcal{S}}(\cdot) \|^{2}_{1}.
\end{equation} 

%Recall the partitioning \eqref{equ_convenient_partitioning_ESDM} of the SDM estimate $\ESDM(\theta)$ which is related to the multitask learning problem \eqref{equ_def_signal_model_given_cov_matrix} via \eqref{equ_grammian_of_mult_learning_problem_ESDM}. 
Observe that   
\vspace*{-2mm}
\begin{align}
\label{equ_identy_proof_widehat_mX_vbe}
\norm[2]{ \mX(\cdot) \vbe(\cdot)}^2 
&= \int_{\theta=0}^{1} \herm{\vbe}(\theta)  \herm{\mX}(\theta) \mX(\theta) \vbe(\theta) d \theta \nonumber \\[1mm]
%&= \int_{\theta=0}^{1} \herm{\vbe}(\theta) \mathbf{G}(\theta) \vbe(\theta) +  \herm{\vbe}(\theta) (  \herm{\mX}(\theta) \mX(\theta)   - \mathbf{G}(\theta)  ) \vbe(\theta) d \theta  \nonumber \\[3mm]
%&= \norm[2]{ \mX \vbe}^2 +  \sum_{\theta \in [\nrtasks]}  \herm{\big(\vbe(\task) \big)} \big[ \herm{\big(\widetilde{\mX}^{(\task)}\big)} \widetilde{\mX}^{(\task)}   - \herm{\big(\mX^{(\task)}\big)} \mX^{(\task)} \big] \vbe(\task) \nonumber \\[3mm]
& \hspace*{-20mm} \stackrel{\eqref{equ_grammian_X_widehat_G}}{=} \! \int_{\theta=0}^{1} \hspace*{-5mm} \herm{\vbe}(\theta) \mathbf{G}(\theta) \vbe(\theta) d\theta \!+ \!\herm{\vbe}(\theta) \big[ \widehat{\mathbf{G}}(\theta) \!  -\! \mathbf{G}(\theta) \big] \vbe(\theta) d \theta.
\vspace*{-1mm}
\end{align} 
Since $\herm{\mathbf{a} }\mathbf{M} \mathbf{a} \leq \norm[\infty]{\mathbf{M}} \norm[1]{\mathbf{a}}^{2}$ for any vector $\mathbf{a} \in \mathbb{C}^{q}$ and matrix $\mathbf{M} \in \mathbb{C}^{q \times q}$, 
we obtain further 
\begin{align}
\label{equ_lower_bound_quadratic_form_13455}
& \norm[2]{ \mX(\cdot) \vbe(\cdot)}^2 
 \stackrel{\eqref{equ_identy_proof_widehat_mX_vbe}}{\geq} \nonumber \\[3mm]
& \hspace*{-2mm} \int_{\theta=0}^{1} \hspace*{-4mm} \herm{\vbe}(\theta) \mathbf{G}(\theta) \vbe(\theta) \! - \! \norm[\infty]{\widehat{\mathbf{G}}(\theta) \! -\! \mathbf{G}(\theta)} \norm[1]{\vbe(\theta)}^{2} d \theta \geq \nonumber \\[1mm]
& \hspace*{-2mm} \int_{\theta=0}^{1} \hspace*{-5mm} \herm{\vbe}(\theta) \mathbf{G}(\theta) \vbe(\theta) d\theta \!-\hspace*{-2mm}\sup_{\theta \in [0,1)}\hspace*{-2mm}\norm[\infty]{\ESDM(\theta) \! -\! \SDM(\theta)} \hspace*{-2mm} \int_{\theta=0}^{1}\hspace*{-3mm} \norm[1]{\vbe(\theta)}^{2} d \theta \stackrel{\eqref{equ_condition_max_diff_infty_norm_SDM_task}}{\geq}\nonumber \\[1mm]
& \hspace*{-2mm} \int_{\theta=0}^{1} \herm{\vbe}(\theta) \mathbf{G}(\theta) \vbe(\theta) d\theta  -   \frac{1}{32 s_{\text{max}}}  \norm[1]{\vbe(\cdot)}^{2} .
\end{align} 
Combining \eqref{equ_lower_bound_quadratic_form_13455} with \eqref{equ_inequality_beta_prime_ell1_norm_ell_2_1_S}, we have for any $\vbe(\cdot) \in \mathbb{A}(\mathcal{S})$, 
\begin{align}
s_{\text{max}} \frac{\norm[2]{ \mX(\cdot) \vbe(\cdot)}^2}{\| \vbe_{\mathcal{S}}(\cdot) \|^{2}_{1}}  & \! \geq \! s_{\text{max}} \frac{\int_{\theta=0}^{1} \herm{\vbe}(\theta) \mathbf{G}(\theta) \vbe(\theta) d\theta}{\| \vbe_{\mathcal{S}}(\cdot) \|^{2}_{1}} \! - \!  1/2Ê\nonumber \\[1mm]
& \stackrel{\eqref{equ_uniform_bound_eigvals_specdensmatrix}}{\geq}  1 -  1/2 =  1/2.
\end{align}

\section{Proof of Lemma \ref{lem_large_deviations_SDM_est}}
\label{app_proof_lem_large_deviations_SDM_est}

We will establish Lemma \ref{lem_large_deviations_SDM_est} by bounding $\big|\big(\ESDM(\theta) \! -\! \SDM(\theta)\big)_{k,l}\big|$ for a fixed pair $k,l \in [\coefflen]$ and then appealing to a union bound over all pairs $k,l \in [\coefflen]$. 

Set $\hat \sigma(\theta) \!\defeq\![\ESDM(\theta)]_{k,l}$, $\bar \sigma(\theta) \!\defeq\! \expect \{ [\ESDM(\theta)]_{k,l} \}$, $\sigma(\theta)\!\defeq\![\SDM(\theta)]_{k,l}$ and the bias $b(\theta)\!\defeq\! \sigma(\theta) \!-\!\expect\{ \hat{\sigma}(\theta) \}$. %Note that, for simplicity of exposition, the notation $\hat \sigma(\theta)$ and $\sigma(\theta)$ does not reflect the dependence on $k,l$. 
By the triangle inequality,
\begin{align}
& \PR\{\sup_{\theta \in [0,1)} \! |\hat \sigma \!-\! \sigma | \!\geq\! \nu \! +\! \mu_{x}^{(h_{1})} \}  
 \leq \nonumber \\[1mm] 
&  \PR\{ \sup_{\theta \in [0,1)} \! |\hat \sigma(\theta) \!-\! \bar \sigma(\theta) | \!+\! \sup_{\theta \in [0,1)}\! |b(\theta)| \!\geq\! \nu \! +\! \mu_{x}^{(h_{1})} \} \leq \nonumber \\[1mm]
& \PR\{ \sup_{\theta \in [0,1)} |\hat \sigma(\theta) \! -\! \bar \sigma(\theta) | \! \geq\! \nu \},
\label{eq:prhatsms}
\vspace*{-2mm}
\end{align}
where the last inequality holds since, for any $\theta \in [0,1)$, the bias satisfies $|b(\theta)| \leq \mu_{x}^{(h_{1})}$, which is verified next. 

With $\widetilde{\mathcal{N}} \defeq \{-\samplesize\!+\!1,\ldots,\samplesize\!-\!1\}$ and 
\vspace*{-1mm}
\begin{align}
\label{equ_expression_expect_ESDM}
\EX\{\ESDM(\theta)\} & \!\stackrel{\eqref{equ_est_BT_sdm_theta_t}}{=}\! \EX\bigg\{\frac{1}{N} \!\sum_{\lagvar=0}^{\samplesize-1} w[\lagvar] \hspace*{-4mm}\sum_{n \in [\samplesize\!-\!|\lagvar|]} \hspace*{-4mm}\vx[n+\lagvar] \transp{\vx}[\timeidx]   e^{-j 2 \pi \theta \lagvar } 
  \nonumber \\
  & \hspace*{-10mm} + \frac{1}{N} \! \sum_{\lagvar=-N+1}^{-1} \! w[\lagvar] \! \sum_{n\in [N-|\lagvar|]} \!\vx[\timeidx] \transp{\vx}[n-\lagvar]   e^{-j 2 \pi \theta \lagvar } \bigg\} \nonumber \\[3mm]
%& =\frac{1}{N}  \sum_{\lagvar \in \widetilde{\mathcal{N}}} w[\lagvar] \sum_{n\in [N-|\lagvar|]} \ACF[\lagvar]   e^{-j 2 \pi \theta \lagvar }  \nonumber \\[3mm]
 %+ \frac{1}{N} \sum_{\lagvar=-N+1}^{-1} w[\lagvar] \sum_{n\in [N-|\lagvar|]} \ACF[\lagvar]   e^{-j 2 \pi \theta \lagvar } 
&=  \sum_{\lagvar \in \widetilde{\mathcal{N}} } w[\lagvar](1-|\lagvar|/N)  \ACF[\lagvar] e^{-j 2 \pi \theta \lagvar} \nonumber \\[3mm]
& \stackrel{\eqref{equ_finite_support_window_w}}{=}  \sum_{\lagvar \in \mathbb{Z}} w[\lagvar](1-|\lagvar|/N)  \ACF[\lagvar] e^{-j 2 \pi \theta \lagvar}, \nonumber
\vspace*{-1mm}
\end{align}
we obtain
\vspace*{-1mm}
\begin{align}
|\sigma(\theta)\!-\!\bar \sigma(\theta)| 
& \stackrel{\eqref{equ_def_weight_func_lem_large_deviations_SDM_est}}{=}  |     \sum_{\lagvar \in \mathbb{Z}}h_{1}[\lagvar] \big[ \ACF[\lagvar] \big]_{k,l} e^{-j 2 \pi \theta \lagvar}| 
& \stackrel{\eqref{equ_def_generic_moments_ACF}}{\leq} \mu_{x}^{(h_{1})}.  \\[-8mm]% \nonumber \\Ê
\nonumber
\end{align}

Similarly,
\vspace*{-1mm}
\begin{align}
\label{equ_expression_error_sigma_k_l_linear}
\hat{\sigma} (\theta) \!-\!  \bar{\sigma}(\theta)  & \! \stackrel{\eqref{equ_est_BT_sdm_theta_t}}{=}\!
(1/\samplesize) \!\sum_{\lagvar \in \widetilde{\mathcal{N}}} \hspace*{-2mm}w[\lagvar]  q_{k,l}[\lagvar]  e^{-j 2 \pi \theta \lagvar},   \\[-8mm]
% & +  \sum_{\lagvar=-N+1}^{-1} w[\lagvar] (1/N) (\vx_{k}^{T} \mathbf{J}^{T}_{\lagvar} \vx_{l} -\expect \{  \vx_{k}^{T} \mathbf{J}^{T}_{\lagvar} \vx_{l}\})    e^{-j 2 \pi \theta \lagvar }
\vspace*{-4mm} \nonumber
\nonumber
\end{align}
where $q_{k,l}[\lagvar] \! \defeq \!  \mathbf{x}_{k}^{T} \mathbf{J}_{\lagvar} \vx_{l}\!-\!\expect \{  \mathbf{x}_{k}^{T} \mathbf{J}_{\lagvar} \vx_{l}\}$. Here, $\vx_k \! \defeq \! \transp{(x_k[1],\ldots,x_k[\samplesize])} \in \mathbb{R}^{\samplesize}$, $\vx_l \! \defeq \! \transp{(x_l[1],\ldots,x_l[\samplesize])} \in \mathbb{R}^{\samplesize}$ and the matrix $\mathbf{J}_{\lagvar} \in \{0,1\}^{\samplesize \times \samplesize}$ is defined element-wise as $\big( \mathbf{J}_{\lagvar} \big)_{v,w}\!=\!1$ for $w\!-\!vÊ\!=\! \lagvar$ and $\big( \mathbf{J}_{\lagvar} \big)_{v,w}\!=\!0$ else. Note that $\mathbf{J}_{\lagvar} = \mathbf{J}_{-\lagvar}^{T}$ and $\| \mathbf{J}_{\lagvar} \|_{2} \leq 1$.
%If there exists an upper bound $\eta$ such that
By \eqref{equ_expression_error_sigma_k_l_linear}, for any $\theta \in [0,1)$, 
\begin{align}
\label{equ_upper_bound_excess_error_ell1norm_w_eta}
|\hat{\sigma}(\theta) \!-\! \expect \{ \hat{\sigma}(\theta) \} | & \!\leq\!  (1 / \samplesize) \! \sum_{\lagvar \in \widetilde{\mathcal{N}}} \! w[\lagvar] |q_{k,l}[\lagvar] |. \\[-8mm]
\nonumber
\end{align} 

In order to upper bounding the probability $\PR\{ \sup_{\theta \in [0,1)} |\hat \sigma(\theta) \! -\! \bar \sigma(\theta) | \! \geq\! \nu \}$, we now bound the probability of the event 
\begin{equation}
\label{equ_upper_bound_eta_single_correlations}
 (1/\samplesize) |q_{k,l}[\lagvar] |  \geq \tilde{\nu} 
\end{equation} 
by first considering the large deviation behavior of $(1/\samplesize) | q_{k,l}[\lagvar]  |$ for a specific $\lagvar$ and then using a union bound over all $\lagvar \in \widetilde{\mathcal{N}}$. 

Since we assume the process $\vx[\timeidx]$ to be Gaussian and stationary, the random vectors $\mathbf{x}_{k}$ and $\mathbf{x}_{l}$ in \eqref{equ_upper_bound_eta_single_correlations} are zero-mean normally distributed with Toeplitz covariance matrices $\mathbf{C}_{k} \!=\! \expect \{ \mathbf{x}_{k} \transp{\mathbf{x}_{k}} \}$ and 
$\mathbf{C}_{l} \!=\! \expect \{ \mathbf{x}_{l} \transp{\mathbf{x}_{l}} \}$, whose first row is given by  
$\big\{ \big(\mathbf{R}_{x}[\lagvar]\big)_{k,k} \big\}_{m \in [\samplesize]}$ and $\big\{ \big(\mathbf{R}_{x}[\lagvar]\big)_{l,l} \big\}_{m \in [\samplesize]}$, respectively. 
According to \cite[Lemma 4.1]{GrayToepliz}, and due to the 
Fourier relationship \eqref{equ_def_spectral_density_matrix}, 
%\begin{equation}
%\nonumber
%\big( \SDM(\theta) \big)_{r,r} = \sum_{\lagvar=-\infty}^{\infty} \big(\ACF[\lagvar]\big)_{r,r} \exp(-j 2 \pi \theta \lagvar) \mbox{, for every } r \in [p], 
%\end{equation}
we can bound 
the spectral norm of $\mathbf{C}_{k}$ as 
\vspace*{-2mm}
\begin{equation}
\| \mathbf{C}_{k} \|_{2} \leq \max_{\theta \in [0,1]} \big| \big(\SDM(\theta) \big)_{k,k} \big| \stackrel{(a)}{\leq} U. \nonumber
\vspace*{-1mm}
\end{equation} 
Here, step $(a)$ follows from \eqref{equ_uniform_bound_eigvals_specdensmatrix} together with the matrix norm inequality $\|\cdot\|_{\infty} \leq \|\cdot\|_{2}$ \cite[p.\ 314]{Horn85}. 
Similarly, one can also verify that $\| \mathbf{C}_{l} \|_{2} \leq U$. 
 
Therefore, for any $\tilde{\nu} < 1/2$, we can invoke Lemma \ref{lem_quadratic_form_real_valued_x_y} with the choices $\mathbf{x} \!=\! \mathbf{x}_{k}$, $\vy \!=\! \mathbf{x}_{l}$, $\lambda_{\text{max}} \!=\! U \!\geq\! 1$, $\mathbf{Q} \!=\! \mathbf{J}_{\lagvar}$ and $\lambda_{\text{max}}' \!=\! \| \mathbf{J}_{\lagvar} \|_{2} \leq 1$, yielding
\vspace*{-2mm}
\begin{align} 
\label{eq:conc}
\prob \{ (1/\samplesize) | q_{k,l}[\lagvar] |  \!\geq\!   \tilde{\nu} \}  
& \!\stackrel{\eqref{equ_lem_quadratic_form_real_valued_x_y}}{\leq}\! 2 \exp\bigg( - \frac{\samplesize \tilde{\nu}^{2}}{8 U^{2}} \bigg).  \\[-7mm]
\nonumber
\vspace*{-3mm}
\end{align} 
Then, by a union bound over all $\lagvar \in \widetilde{\mathcal{N}}$, 
\vspace*{-2mm}
\begin{equation}
\label{equ_upper_bound_prob_max_lagvar_quad_form_eta}
\prob\{ \max_{\lagvar \in \widetilde{\mathcal{N}}}\frac{1}{\samplesize} |  q_{k,l}[\lagvar] | \geq \tilde{\nu} \}\!\leq\!  2 \exp\bigg( \!-\! \frac{\samplesize \tilde{\nu}^{2}}{8 U^{2}} \!+\! \log(2\samplesize) \!\bigg), 
\vspace*{-1mm}
\end{equation}
and, in turn, 
\vspace*{-2mm}
\begin{align}
\label{equ_bound_sup_sigma_over_expect}
\hspace*{-2.5mm}\prob \{ \! \sup_{\theta \in [0,1)}\! | \hat{\sigma}(\theta)\!-\!\bar{\sigma}(\theta) | \! \geq\! \nu \} & \!\stackrel{\eqref{equ_upper_bound_excess_error_ell1norm_w_eta}}{\leq}\! \prob
\big\{ \max_{\lagvar \in \widetilde{\mathcal{N}}} \frac{1}{\samplesize} |q_{k,l}[\lagvar] | \!\geq\! \frac{\nu}{\|w[\cdot]\|_{1}} \big\} \nonumber \\[1mm] 
& \hspace*{-25mm} \stackrel{\eqref{equ_upper_bound_prob_max_lagvar_quad_form_eta}}{\leq}  2 \exp( - \samplesize \nu^{2} /(8  \|w[\cdot]\|^{2}_{1} U^{2}) + \log 2N) ).
\end{align} 
%%%%%%%%%
Applying \eqref{equ_bound_sup_sigma_over_expect} to \eqref{eq:prhatsms}, we have for any $\nu \leq 1/2$ that 
\vspace*{-2mm}
\[
\PR\{\sup_{\theta \in [0,1)} |\hat \sigma(\theta) \!-\! \sigma(\theta) | \geq \nu \!+\!  \mu_{x}^{(h_{1})} \} \! \leq\!  2 e^{-\frac{\samplesize \nu^2}{8  \| w [\cdot] \|_{1}^2 U^2}+ \log(2N)}.
\vspace*{-1mm}
\]
Another application of the union bound (over all $\coefflen^2$ pairs $(k,l) \in [\coefflen] \times [\coefflen]$) finally yields \eqref{eq:boundesterr}. 

\section{Large Deviations of a Gaussian Quadratic Form} 
\begin{lemma}
\label{lem_large_dev_gauss_quadratic_form_first_lemma}
Consider the quadratic form $q \! \defeq \! \mathbf{w}^{T} \mathbf{Q} \mathbf{w}$ with real-valued standard normal vector $\mathbf{w} \sim \mathcal{N}(\mathbf{0},\mathbf{I})$ 
and a real-valued symmetric matrix $\mathbf{Q} \in \mathbb{R}^{\samplesize \times \samplesize}$ with $\| \mathbf{Q} \|_{2} \leq \lambda_{\emph{max}}$. 
For any $\nu <1/2$, 
\vspace*{-1mm}
\begin{equation}
\prob\{ q - \expect\{q\}  \geq \samplesize \nu \} \leq  \exp \big( - \samplesize \nu^{2} / (8 \max\{\lambda^{2}_{\emph{max}},1\}) \big).
\vspace*{-1mm}
\end{equation}
\end{lemma}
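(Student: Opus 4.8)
The plan is to apply the Laplace-transform (Chernoff) method after reducing the quadratic form to a weighted sum of independent chi-squares. First I would diagonalize $\mathbf{Q} = \mathbf{U}\,\diag(\lambda_1,\dots,\lambda_N)\,\mathbf{U}^{T}$ with $\mathbf{U}$ orthogonal and $|\lambda_i|\le\lambda_{\max}$. Since the standard Gaussian is rotationally invariant, $\mathbf{z} \defeq \mathbf{U}^{T}\mathbf{w}$ is again $\mathcal{N}(\mathbf{0},\mathbf{I})$, so $q = \sum_{i} \lambda_i z_i^2$ with i.i.d.\ $z_i$, giving $\expect\{q\} = \sum_i \lambda_i$ and the centered form $q - \expect\{q\} = \sum_i \lambda_i (z_i^2 - 1)$, a sum of independent centered $\chi^2_1$ variables.

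For $t > 0$, Markov's inequality applied to $e^{t(q-\expect\{q\})}$ together with independence gives $\prob\{q-\expect\{q\}\ge N\nu\}\le e^{-tN\nu}\prod_i \expect\{e^{t\lambda_i(z_i^2-1)}\}$. I would then use the standard chi-square moment generating function $\expect\{e^{s(z^2-1)}\} = e^{-s}(1-2s)^{-1/2}$, valid for $s<1/2$, so that each factor contributes $g(s) \defeq -s - \tfrac12\log(1-2s)$ to the exponent, evaluated at $s=t\lambda_i$ (finite as long as $t|\lambda_i|<1/2$).

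The quantitative heart of the argument is the elementary bound $g(s)\le 2s^2$ on a neighborhood of the origin, say for $|s|\le 1/8$: this follows from $g(0)=g'(0)=0$ and $g''(s)=2/(1-2s)^2$, which is at most $32/9<4$ on $[0,1/8]$ (and at most $2$ for $s\le 0$), so Taylor's theorem yields $g(s)\le 2s^2$ there. Writing $M \defeq \max\{\lambda_{\max},1\}$, so that $|\lambda_i|\le M$ and $\sum_i\lambda_i^2\le NM^2$, summation gives $\sum_i g(t\lambda_i)\le 2NM^2 t^2$ whenever $tM\le 1/8$, and hence $\prob\{q-\expect\{q\}\ge N\nu\}\le \exp(-tN\nu + 2NM^2 t^2)$. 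Choosing $t = \nu/(4M^2)$ to minimize the exponent produces exactly $-N\nu^2/(8M^2)$, which is the claimed bound since $M^2=\max\{\lambda_{\max}^2,1\}$.

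I expect the only subtle point to be coordinating the constant in $g(s)\le 2s^2$ with the admissibility of the optimizing $t$: one must check that $t=\nu/(4M^2)$ indeed satisfies $tM=\nu/(4M)\le 1/8$, which is where the hypotheses $\nu<1/2$ and the truncation $M\ge 1$ enter (they guarantee every argument $s=t\lambda_i$ lies in $[-1/8,1/8]$, so both the MGF and the quadratic bound are legitimate). A looser constant than $2$ in the quadratic bound would degrade the final exponent, so the delicate bookkeeping is to restrict $s$ to a small enough range that the constant $1/8$ in the exponent comes out exactly as stated.
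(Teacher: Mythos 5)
Your proposal is correct and follows essentially the same route as the paper's proof: eigendecomposition plus rotational invariance to reduce $q$ to $\sum_i \lambda_i z_i^2$, the Chernoff bound with the chi-square MGF, the quadratic bound $-s-\tfrac12\log(1-2s)\le 2s^2$ on the centered cumulant, and the identical optimizing parameter $t=\nu/(4\max\{\lambda_{\max}^2,1\})$. The only cosmetic difference is how that quadratic bound is justified (you use Taylor's theorem with a second-derivative estimate on $|s|\le 1/8$, whereas the paper invokes the elementary inequality $\log(1-a) > -a-a^2$ for $|a|<1/2$); both yield the same constant and both admissibility checks go through.
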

\begin{proof} 
Our argument closely follows the techniques used in \cite{Bento2010}. 
In what follows, we will use the eigenvalue decomposition of $\mathbf{Q}$, i.e., 
\vspace*{-2mm}
\begin{equation} 
\label{equ_evd_q_proof_quadratic_form}
\mathbf{Q} = \sum_{l \in [\samplesize]} q_{l} \mathbf{v}_{l} \mathbf{v}_{l}^{T}, 
\vspace*{-2mm}
\end{equation} 
with eigenvalues $q_{l} \in \mathbb{R}$ and eigenvectors $\{ \mathbf{v}_{l} \}_{l \in [\samplesize]}$ forming an orthonormal basis for $\mathbb{R}^{\samplesize}$ \cite{golub96}. 
Note that, for any $l \in [\samplesize]$, we have $ |q_{l}| \! \leq \! \|\mathbf{Q} \|_{2} \! \leq \! \lambda_{\text{max}}$.
Based on \eqref{equ_evd_q_proof_quadratic_form}, we can rewrite the quadratic form $q= \mathbf{w}^{T} \mathbf{Q} \mathbf{w}$ as 
\vspace*{-1mm}
\begin{equation} 
\label{equ_quadratic_form_sum_iid_square_Gaussian}
q= \sum_{l \in [\samplesize]}  q_{l} z_{l}^{2},
\vspace*{-1mm}
\end{equation}
with i.i.d.\ standard Gaussian random variables $z_{l} \!\sim\! \mathcal{N}(0,1)$, for $l \in [\samplesize]$. 
We then obtain 
\begin{align} 
\label{equ_quadratic_form_tail_bound_1}
\prob \{ q - \expect\{q\}  \geq N \nu \} & = \prob \{ \mathbf{w}^{T} \mathbf{Q} \mathbf{w} - \expect\{\mathbf{w}^{T} \mathbf{Q} \mathbf{w}\} \geq N\nu \} \nonumber \\[1mm]
%&  \prob \{ \sum_{l \in [\samplesize]}  q_{l} z_{l}^{2} - \expect\{ \sum_{l \in [\samplesize]}  q_{l} z_{l}^{2}\} \geq N\nu \} \nonumber \\[3mm]
& \hspace*{-10mm} \stackrel{\eqref{equ_quadratic_form_sum_iid_square_Gaussian}}{=}   \prob\{ \sum_{l \in [\samplesize]}  q_{l} (z_{l}^{2} - 1) \geq N\nu \} \nonumber \\[1mm]
%& =  \prob\{ \sum_{l \in [\samplesize]}  q_{l} (z_{l}^{2} - 1) - N\nu \geq 0  \}  \nonumber \\[3mm]
& \hspace*{-10mm}\stackrel{\gamma>0}{=} \prob\big\{ \gamma \big[ \sum_{l \in [\samplesize]}  q_{l} (z_{l}^{2} - 1) - N\nu \big] \geq 0  \big\} \nonumber \\[1mm]
& \hspace*{-10mm} \leq  \expect \big\{ \exp \big( \gamma \big[ \sum_{l \in [\samplesize]}  q_{l} (z_{l}^{2} - 1) - N\nu \big] \big)  \big\}, \\[-8mm]
\nonumber
\end{align}
for any positive $\gamma>0$.
In what follows, we set 
\begin{equation}
\label{equ_choice_for_gamma_eigth_lambda_max}
\gamma = \nu / (4 \max\{ \lambda_{\text{max}}^2,1 \} ), 
\end{equation} 
which implies, since $|q_{l}| < \lambda_{\text{max}}$ and $\nu < 1/2$ by assumption, 
\begin{equation}
\label{equ_how_small_we_choose_gamma}
2 |q_{l}| \gamma = 2 |q_{l}| \nu /(4\max\{ \lambda_{\text{max}}^2,1 \}) < 1/2. %  \min \{ 1/(8\lambda_{\text{max}}), \nu/(4 \lambda_{\text{max}}^{2}) \}. 
\end{equation}
Due to \eqref{equ_how_small_we_choose_gamma}, we also have $|\gamma q_{l}| < 1/2$ and can therefore use the identity 
\begin{equation}
\label{equ_identiy_exp_z_2} 
\expect\{ \exp( a  z_{l}^2) \}  = \frac{1}{\sqrt{1 - 2a}},
\end{equation}
valid for a standard Gaussian random variable $z_{l} \sim \mathcal{N}(0,1)$ and $|a| < 1/2$. 
Observe that
\begin{align}
\label{equ_bound_prob_deviation_mean_nu_first_step}
\prob \{ q \!-\! \expect\{q\}  \!\geq\! \samplesize\nu \} 
& \! \stackrel{\eqref{equ_quadratic_form_tail_bound_1}}{\leq}\!   \expect \big\{ \exp \big( \gamma \big[ \sum_{l \in [\samplesize]}  q_{l} (z_{l}^{2} \!-\! 1)\!-\! \samplesize \nu \big] \big)  \big\} \nonumber \\Ê
& \hspace*{-30mm} =  \exp \big(- \gamma \big[ \sum_{l \in [\samplesize]}  q_{l} \!+\! \samplesize \nu \big] \big)    \expect \big\{ \exp \big( \gamma  \sum_{l \in [\samplesize]}  q_{l} z_{l}^{2} \big)   \big\}. 
\end{align} 
Since the variables $z_{l}$ are i.i.d.,
\begin{align} 
\label{equ_bound_prob_deviation_mean_nu_second_step}
\expect \big\{ \! \exp \! \big( \gamma \! \sum_{l \in [\samplesize]} \! q_{l} z_{l}^{2} \big)   \big\} %& = \prod_{l \in [\samplesize]} \expect \big\{ \exp \big( \gamma  q_{l} z_{l}^{2} \big)   \big\}  \nonumber \\Ê
%& =  \exp \bigg( \sum_{l \in [\samplesize]} \log \expect \big\{ \exp ( \gamma  q_{l} z_{l}^{2} ) \big\}  \bigg)\nonumber \\Ê
& \!\stackrel{\eqref{equ_identiy_exp_z_2}}{=}\!  \exp \bigg( \!\sum_{l \in [\samplesize]} \!-\!(1/2) \log(1\!-\!2\gamma q_{l})  \bigg). 
\end{align}
Inserting \eqref{equ_bound_prob_deviation_mean_nu_second_step} into \eqref{equ_bound_prob_deviation_mean_nu_first_step} yields
\vspace*{-1mm}
\begin{align}
\prob \{ q\!-\!\expect\{q\}\!\geq \! \samplesize \nu \}  & \leq \nonumber \\Ê
%\exp \big(- \gamma \big[ \sum_{l \in [\samplesize]}  q_{l}  + N\nu \big] \big)  \exp \big( - \frac{1}{2}  \sum_{l \in [\samplesize]} \log(1-2\gamma q_{l}) \big) \nonumber \\[3mm]Ê
& \hspace*{-20mm}  \exp \big(-   \sum_{l \in [\samplesize]}  \big[ \gamma q_{l}   \!+\! \frac{1}{2}  \log(1\!-\!2\gamma q_{l})\big] \!-\! \gamma \samplesize \nu\big). \label{equ_bound_prob_deviation_mean_nu}
\vspace*{-1mm}
\end{align}
By \eqref{equ_how_small_we_choose_gamma}, we can then apply the inequality 
$\log(1-a) > - a - a^{2}$ (valid for $|a| < 1/2$) to \eqref{equ_bound_prob_deviation_mean_nu}, yielding  
\begin{align}
\label{equ_bound_prob_deviation_mean_nu_333}
\prob \{ q - \expect\{q\}  \geq \samplesize\nu \}   & \nonumber \\[1mm]Ê%& \stackrel{\eqref{equ_bound_prob_deviation_mean_nu}}{\leq}  \exp \big(- \gamma  \sum_{l \in [\samplesize]}  q_{l}  - \gamma N \nu - \frac{1}{2}  \sum_{l \in [\samplesize]} \log(1-2\gamma q_{l}) \big) \nonumber \\[3mm] 
& \hspace*{-30mm} \leq  \exp \big(\!   \sum_{l \in [\samplesize]} - \gamma q_{l} \!+\!  \gamma q_{l}\! +\!2 \gamma^{2} q_{l}^{2}  \! -\! \gamma \samplesize \nu  \big) \nonumber \\[1mm]Ê
%& \hspace*{-30mm}=   \exp \big( - \gamma N\nu + 2  \sum_{l \in [\samplesize]}  \gamma^{2} q_{l}^{2} \big) \nonumber \\[3mm]Ê
& \hspace*{-30mm} \stackrel{|q_{l}| \leq \lambda_{\text{max}}}{\leq}  \exp \big( - \samplesize (\gamma \nu - 2\gamma^{2} \lambda^{2}_{\text{max}} ) \big). 
\end{align} 
Putting together the pieces, 
\begin{align}
 \prob \{ q - \expect\{q\}  \geq \samplesize \nu \} & \stackrel{\eqref{equ_bound_prob_deviation_mean_nu_333}}{\leq}  \exp \big( - \samplesize (\gamma \nu - 2\gamma^{2} \lambda^{2}_{\text{max}} ) \big) \nonumber \\[3mm]
 & \hspace*{-20mm} \stackrel{\eqref{equ_choice_for_gamma_eigth_lambda_max}}{\leq}  \exp \big( - \samplesize (\gamma \nu - (1/2) \gamma \nu \lambda^{2}_{\text{max}} / \max \{ \lambda^{2}_{\text{max}},1 \} ) \big) \nonumber \\[3mm]
 & \hspace*{-20mm} \leq \exp \big( - \samplesize \gamma \nu/2 \big)  \nonumber \\[3mm]
 & \hspace*{-20mm} \stackrel{\eqref{equ_choice_for_gamma_eigth_lambda_max}}{=} \exp \big( - \samplesize \nu^{2} / (8 \max \{ \lambda^{2}_{\text{max}},1 \}) \big).
\end{align}
\end{proof}

\begin{lemma}
\label{lem_quadratic_form_real_valued_x_y}
Consider two real-valued zero-mean random vectors $\mathbf{x} \in \mathbb{R}^{N}$ and $\vy \in \mathbb{R}^{N}$, such that the stacked vector $\mathbf{z} \defeq \big( \mathbf{x}^{T} \,\, \vy^{T}\big)^{T} \in \mathbb{R}^{2\samplesize}$ is zero-mean multivariate normally distributed, i.e., $\mathbf{z} \sim \mathcal{N}(\mathbf{0}, \mathbf{C}_{z})$ with covariance matrix $\mathbf{C}_{z} \defeq \expect\{ \mathbf{z} \mathbf{z}^{T} \}$. Let the individual covariance matrices satisfy $\| \mathbf{C}_{x} \|_{2} \leq \lambda_{\emph{max}}$, $\| \mathbf{C}_{y} \|_{2} \leq \lambda_{\emph{max}}$. We can then characterize the large deviations of the quadratic form $q\!\defeq\!\vy^{T} \mathbf{Q} \vx$, with an arbitrary (possibly non-symmetric) real-valued matrix $\mathbf{Q} \in \mathbb{R}^{\samplesize \times \samplesize}$ satisfying $ \| \mathbf{Q}\|_{2} \leq \lambda'_{\emph{max}}$, as  
\begin{equation} 
\label{equ_lem_quadratic_form_real_valued_x_y}
 \prob\{ | q - \expect\{ q\}|Ê\! \geq \! \samplesize \nu \} \!\leq\! 2\exp \big( - \samplesize \nu^{2} / (8 \max \{ \lambda'^{2}_{\emph{max}} \lambda^{2}_{\emph{max}},1\}) \big), 
\end{equation} 
valid for any $\nu < 1/2$. 
\end{lemma}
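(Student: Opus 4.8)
The plan is to recast the bilinear form $q=\transp{\vy}\mathbf{Q}\vx$ as an ordinary quadratic form in a single standard Gaussian vector and then reuse the Chernoff argument behind Lemma~\ref{lem_large_dev_gauss_quadratic_form_first_lemma}. First I would \emph{symmetrize}: since $q$ is a scalar, $q=\transp{\vy}\mathbf{Q}\vx=\tfrac12(\transp{\vy}\mathbf{Q}\vx+\transp{\vx}\transp{\mathbf{Q}}\vy)=\transp{\mathbf{z}}\tilde{\mathbf{Q}}\mathbf{z}$ with the stacked vector $\mathbf{z}=\transp{(\transp{\vx}\;\transp{\vy})}\sim\mathcal{N}(\mathbf{0},\mathbf{C}_{z})$ and the symmetric matrix $\tilde{\mathbf{Q}}\defeq\tfrac12\begin{pmatrix}\mathbf{0}&\transp{\mathbf{Q}}\\\mathbf{Q}&\mathbf{0}\end{pmatrix}$. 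Writing $\mathbf{z}=\mathbf{C}_{z}^{1/2}\mathbf{w}$ with $\mathbf{w}\sim\mathcal{N}(\mathbf{0},\mathbf{I}_{2\samplesize})$ then gives $q=\transp{\mathbf{w}}\mathbf{Q}'\mathbf{w}$ with the symmetric matrix $\mathbf{Q}'\defeq\mathbf{C}_{z}^{1/2}\tilde{\mathbf{Q}}\mathbf{C}_{z}^{1/2}$, which is exactly the situation treated in Lemma~\ref{lem_large_dev_gauss_quadratic_form_first_lemma}, albeit in dimension $2\samplesize$.

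Two spectral quantities of $\mathbf{Q}'$ enter the Chernoff bound. The first is its operator norm: using that the nonzero eigenvalues of $2\tilde{\mathbf{Q}}$ are $\pm$ the singular values of $\mathbf{Q}$, so that $\|\tilde{\mathbf{Q}}\|_{2}=\tfrac12\|\mathbf{Q}\|_{2}$, together with $\|\mathbf{C}_{z}\|_{2}\le\max\{\|\mathbf{C}_{x}\|_{2},\|\mathbf{C}_{y}\|_{2}\}+\|\mathbf{C}_{xy}\|_{2}\le2\lambda_{\text{max}}$ (the cross-norm bound $\|\mathbf{C}_{xy}\|_{2}\le\sqrt{\|\mathbf{C}_{x}\|_{2}\|\mathbf{C}_{y}\|_{2}}\le\lambda_{\text{max}}$ following from $\mathbf{C}_{z}\succeq\mathbf{0}$), one obtains $\|\mathbf{Q}'\|_{2}\le\|\mathbf{C}_{z}\|_{2}\|\tilde{\mathbf{Q}}\|_{2}\le\lambda_{\text{max}}\lambda'_{\text{max}}=:B$, hence $|q'_{l}|\le B$ for every eigenvalue $q'_{l}$ of $\mathbf{Q}'$. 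The second, and more delicate, quantity is the variance proxy $\sum_{l}(q'_{l})^{2}=\trace(\mathbf{Q}'^{2})$. A crude bound $\trace(\mathbf{Q}'^{2})\le 2\samplesize B^{2}$ (dimension $\times\|\mathbf{Q}'\|_{2}^{2}$) would only yield the exponent constant $1/16$; to reach the claimed $1/8$ I would instead exploit the anti-diagonal structure of $\tilde{\mathbf{Q}}$. Writing $\trace(\mathbf{Q}'^{2})=\trace\big((\tilde{\mathbf{Q}}\mathbf{C}_{z})^{2}\big)=\tfrac14\big[\trace\big((\transp{\mathbf{Q}}\mathbf{C}_{yx})^{2}\big)+\trace\big((\mathbf{Q}\mathbf{C}_{xy})^{2}\big)+2\trace(\transp{\mathbf{Q}}\mathbf{C}_{y}\mathbf{Q}\mathbf{C}_{x})\big]$, the first two terms are each at most $\samplesize B^{2}$ (via $|\trace(\mathbf{A}^{2})|\le\|\mathbf{A}\|_{F}^{2}$, $\|\mathbf{A}\mathbf{B}\|_{F}\le\|\mathbf{A}\|_{2}\|\mathbf{B}\|_{F}$ and $\|\mathbf{C}_{xy}\|_{F}\le\sqrt{\samplesize}\lambda_{\text{max}}$), while the third is nonnegative and at most $2\samplesize B^{2}$ (bounding $\mathbf{C}_{x},\mathbf{C}_{y}\preceq\lambda_{\text{max}}\mathbf{I}$ and using $\|\mathbf{Q}\|_{F}^{2}\le\samplesize(\lambda'_{\text{max}})^{2}$); altogether $\trace(\mathbf{Q}'^{2})\le\samplesize B^{2}$.

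Finally I would repeat the Chernoff computation of Lemma~\ref{lem_large_dev_gauss_quadratic_form_first_lemma} verbatim, but feeding in the sharper variance proxy $\sum_{l}(q'_{l})^{2}\le\samplesize B^{2}$ in place of $2\samplesize B^{2}$: with the same choice $\gamma=\nu/(4\max\{B^{2},1\})$, the bound $2\gamma|q'_{l}|<1/2$ still holds for $\nu<1/2$, and the resulting moment-generating-function estimate gives the one-sided tail $\prob\{q-\expect\{q\}\ge\samplesize\nu\}\le\exp\big(-\samplesize\nu^{2}/(8\max\{B^{2},1\})\big)$. Applying the identical argument to $-\mathbf{Q}$ (the bounds on $\|\mathbf{Q}'\|_{2}$ and $\trace(\mathbf{Q}'^{2})$ are unchanged) controls the lower tail, and a union bound over the two tails supplies the factor $2$, yielding \eqref{equ_lem_quadratic_form_real_valued_x_y} with $\max\{B^{2},1\}=\max\{(\lambda'_{\text{max}})^{2}\lambda_{\text{max}}^{2},1\}$. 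I expect this variance-proxy estimate to be the main obstacle: the naive reduction doubles the ambient dimension and would cost a factor of two in the exponent, so recovering the stated constant hinges on recognizing that $q$ is a genuine cross (bilinear) form, whose symmetrizer $\tilde{\mathbf{Q}}$ has the block-anti-diagonal structure that keeps $\trace(\mathbf{Q}'^{2})$ at order $\samplesize B^{2}$ rather than $2\samplesize B^{2}$.
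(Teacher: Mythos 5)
Your proposal is correct, and its core reduction coincides with the paper's: your whitened symmetrizer $\mathbf{Q}' = \mathbf{C}_{z}^{1/2}\tilde{\mathbf{Q}}\mathbf{C}_{z}^{1/2}$ is literally the same matrix as the paper's $\mathbf{M} = \tfrac{1}{2}[\mathbf{B}^{T}\mathbf{Q}\mathbf{A} + \mathbf{A}^{T}\mathbf{Q}^{T}\mathbf{B}]$, which the paper obtains through the innovation representation $\mathbf{x} = \mathbf{A}\mathbf{v}$, $\mathbf{y} = \mathbf{B}\mathbf{v}$ with $\big(\mathbf{A}^{T}\,\,\mathbf{B}^{T}\big)^{T} = \mathbf{C}_{z}^{1/2}$, and the paper's bound $\|\mathbf{M}\|_{2} \leq \lambda_{\text{max}}\lambda'_{\text{max}}$ matches your operator-norm bound on $\mathbf{Q}'$. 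Where you part ways is at the final step, and your instinct there is vindicated: the paper invokes Lemma \ref{lem_large_dev_gauss_quadratic_form_first_lemma} as a black box, but that lemma is stated for an $N$-dimensional standard Gaussian with deviation threshold $N\nu$, whereas here the Gaussian vector $\mathbf{v}$ lives in dimension $2N$ while the threshold stays at $N\nu$; a literal application (dimension set to $2N$, deviation parameter set to $\nu/2$) yields only the exponent $-N\nu^{2}/\big(16\max\{\lambda'^{2}_{\text{max}}\lambda^{2}_{\text{max}},1\}\big)$ --- exactly the factor-of-two loss you flagged. Your repair, namely bounding the variance proxy $\trace(\mathbf{Q}'^{2}) \leq N B^{2}$ instead of settling for the crude $2N\|\mathbf{Q}'\|_{2}^{2}$ and then rerunning the Chernoff computation of Lemma \ref{lem_large_dev_gauss_quadratic_form_first_lemma} with it, is what legitimately delivers the stated constant $1/8$; your block-trace expansion checks out. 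A shorter route to the same estimate, in the paper's notation, is $\|\mathbf{M}\|_{F} \leq \|\mathbf{B}^{T}\mathbf{Q}\mathbf{A}\|_{F} \leq \|\mathbf{B}\|_{2}\,\|\mathbf{Q}\|_{F}\,\|\mathbf{A}\|_{2} \leq \sqrt{N}\,\lambda_{\text{max}}\lambda'_{\text{max}}$, the point being that the Frobenius norm of the $N \times N$ matrix $\mathbf{Q}$ carries the dimension $N$ rather than $2N$. In short: your proof has the same skeleton as the paper's but is the more rigorous of the two --- the paper's proof, read literally, establishes the claimed inequality only with $16$ in place of $8$ in the exponent, and your sharper variance-proxy argument is precisely what is needed to close that gap.
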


\begin{proof}
%Note the block-structure 
%\begin{equation}
%\mathbf{C}_{z} = \begin{pmatrix} \mathbf{C}_{x} & \mathbf{C}_{xy} \\ \mathbf{C}_{xy}^{T} & \mathbf{C}_{y} \end{pmatrix}, 
%\end{equation} 
%with the individual covariance matrices $\mathbf{C}_{x} \defeq \expect\{ \mathbf{x} \mathbf{x}^{T} \}$, $\mathbf{C}_{y} \defeq \expect\{ \vy \vy^{T} \}$ and cross-covariance matrix $\mathbf{C}_{xy} \defeq \expect\{Ê\mathbf{x} \vy^{T} \}$.

%Given an arbitrary real-valued matrix $\mathbf{Q} \in \mathbb{R}^{N \times N}$, we would like to bound the probability 
%of the quadratic form $\vy^{T} \mathbf{Q} \mathbf{x}$ deviating from its mean by more than a prescribed threshold $N \nu$. 
Introducing the shorthand $p(\nu) \defeq \prob\{ | q\!-\! \expect\{ q \}| \! \geq \! \samplesize \nu \}$, 
an application of the union bound yields
\begin{align} 
\label{equ_upper_bound_p_eta_union_bound}
p(\nu) %&% = \prob\{ \{  q - \expect\{ q \} \geq \sampleize \nu \} \cup \{  \vy^{T} \mathbf{Q} \mathbf{x} - \expect\{ \vy^{T} \mathbf{Q} \mathbf{x} \} \leq - N \nu \} \} \nonumber \\[3mm]
    & \leq  \underbrace{\prob\{  q \!-\! \expect\{ q \} \geq \samplesize \nu \}}_{\defeq p_{+}(\nu)} +  \underbrace{\prob\{  q - \expect\{ q \} \leq - \samplesize \nu \}}_{\defeq p_{-}(\nu)}.
\end{align} 
We will derive an upper bound on $p(\nu)$ by separately upper bounding $p_{+}(\nu)$ and $p_{-}(\nu)$. The derivations are completely analogous and we will 
only detail the derivation of the upper bound on $p_{+}(\nu)$.

Defining the matrices $\mathbf{A}, \mathbf{B} \in \mathbb{R}^{N \times 2N}$ via the matrix square root of the covariance matrix $\mathbf{C}_{z}$, i.e., 
\begin{equation}
\label{equ_innov_matrices_A_B}
\begin{pmatrix} \mathbf{A} \\ \mathbf{B} \end{pmatrix} \defeq \mathbf{C}^{1/2}_{z},
\end{equation} 
{we have the following innovation representation for the random vectors} $\mathbf{x}$ and $\vy$:
\begin{equation} 
\label{equ_innov_repr_x_y_v}
\mathbf{x} = \mathbf{A} \mathbf{v}\mbox{, and } \vy=\mathbf{B} \mathbf{v},
\end{equation} 
where $\mathbf{v} \sim \mathcal{N}(\mathbf{0},\mathbf{I})$ is a standard normally distributed random vector of length $2\samplesize$. 
{Note that $\mathbf{C}_{x} = \mathbf{A} \mathbf{A}^{T}$ and $\mathbf{C}_{y} = \mathbf{B}\mathbf{B}^{T}$, which implies 
\begin{equation}
\label{equ_cond_spec_norm_A_B}
\| \mathbf{A} \|_{2} \!=\! \sqrt{ \| \mathbf{C}_{x} \|_{2}} \!\leq\! \sqrt{\lambda_{\text{max}}} \mbox{, and } \| \mathbf{B} \|_{2} \!=\! \sqrt{ \| \mathbf{C}_{y} \|_{2}} \!\leq\! \sqrt{\lambda_{\text{max}}}.
\end{equation}}

Let us further develop 
\begin{align}
\label{equ_derivation_p_plus_symmetrized}
 p_{+}(\nu) & = \prob\{  \vy^{T} \mathbf{Q} \mathbf{x} - \expect\{ \vy^{T} \mathbf{Q} \mathbf{x} \} \geq N\nu \} \nonumber \\[3mm]
 & \stackrel{\eqref{equ_innov_repr_x_y_v}}{=}  \prob\{  \mathbf{v}^{T} \mathbf{B}^{T} \mathbf{Q} \mathbf{A} \mathbf{v} - \expect\{ \mathbf{v}^{T} \mathbf{B}^{T} \mathbf{Q} \mathbf{A} \mathbf{v} \} \geq N \nu \} \nonumber \\[3mm]
  & \stackrel{(a)}{=}  \prob\{  \mathbf{v}^{T} \mathbf{M} \mathbf{v} - \expect\{ \mathbf{v}^{T} \mathbf{M} \mathbf{v} \} \geq N \nu  \},
\end{align} 
with the symmetric matrix 
\begin{equation} 
\label{equ_appendix_large_deviation_gaussian_quadratic_form_def_matrix_M}
\mathbf{M} = (1/2) [\mathbf{B}^{T} \mathbf{Q} \mathbf{A} + \mathbf{A}^{T} \mathbf{Q}^{T} \mathbf{B}].
\end{equation}  
In \eqref{equ_derivation_p_plus_symmetrized}, step $(a)$ follows from the identity $\mathbf{v}^{T} \mathbf{D} \mathbf{v} = (1/2)  [ \mathbf{v}^{T} \mathbf{D} \mathbf{v} + \mathbf{v}^{T} \mathbf{D}^{T} \mathbf{v}]$, which holds for an arbitrary matrix $\mathbf{D} \in \mathbb{R}^{2N \times 2N}$. 
%By Corollary \ref{cor_bounds_spec_norm_innov_matrix} and \eqref{equ_innov_matrices_A_B}, we have 
%\begin{equation} 
%\label{equ_upper_bound_spectral_norm_matrix_A_B_sqrt_2_lambda_max}
%\|\mathbf{A}\|_{2}Ê\leq \sqrt{2 \lambda_{\text{max}}} \mbox{, and } \|\mathbf{B}\|_{2}Ê\leq \sqrt{2 \lambda_{\text{max}}}  
%\end{equation} 
Combining \eqref{equ_appendix_large_deviation_gaussian_quadratic_form_def_matrix_M} with \eqref{equ_cond_spec_norm_A_B} yields 
\begin{align}
\label{equ_upper_bound_spectral_norm_M}
\| \mathbf{M} \|_{2} & \stackrel{\eqref{equ_appendix_large_deviation_gaussian_quadratic_form_def_matrix_M}}{=} 
(1/2)  \| \mathbf{B}^{T} \mathbf{Q} \mathbf{A} + \mathbf{A}^{T} \mathbf{Q}^{T} \mathbf{B} \|_{2}  \nonumber \\[3mm] 
& \stackrel{(a)}{\leq} (1/2) ( \|\mathbf{B}^{T}\|_{2} \| \mathbf{Q} \|_{2} \| \mathbf{A}Ê\|_{2} +  \|\mathbf{A}^{T}\|_{2} \| \mathbf{Q}^{T} \|_{2} \| \mathbf{B}Ê\|_{2})  \nonumber \\[3mm]Ê
&  =  \|\mathbf{B}\|_{2} \| \mathbf{Q} \|_{2} \| \mathbf{A}Ê\|_{2} \nonumber \\[3mm]Ê 
& \stackrel{\eqref{equ_cond_spec_norm_A_B}} \leq  \lambda_{\text{max}} \lambda'_{\text{max}},
\end{align}
where step $(a)$ is due to the triangle inequality and submultiplicativity of the spectral norm. 
Using \eqref{equ_upper_bound_spectral_norm_M}, the application of Lemma \ref{lem_large_dev_gauss_quadratic_form_first_lemma} to \eqref{equ_derivation_p_plus_symmetrized} yields 
\vspace*{-1mm}
\begin{equation} 
\label{equ_upper_bound_probability_p_plus}
p_{+}(\nu)  \leq  \exp \big( - \samplesize \nu^{2} / (8 \max \{ \lambda'^{2}_{\text{max}} \lambda^{2}_{\text{max}},1\}) \big),
\vspace*{-1mm}
\end{equation} 
and, similarly, 
\vspace*{-1mm}
\begin{equation} 
\label{equ_upper_bound_probability_p_minus}
p_{-}(\nu)  \leq \exp \big( - \samplesize \nu^{2} / (8 \max \{ \lambda'^{2}_{\text{max}} \lambda^{2}_{\text{max}},1\}) \big).
\vspace*{-1mm}
\end{equation} 
Inserting \eqref{equ_upper_bound_probability_p_plus} and \eqref{equ_upper_bound_probability_p_minus} into \eqref{equ_upper_bound_p_eta_union_bound} finally yields 
\vspace*{-1mm}
\begin{equation}
p(\nu) \leq 2   \exp \big( - \samplesize \nu^{2} / (8 \max \{ \lambda'^{2}_{\text{max}} \lambda^{2}_{\text{max}},1\}) \big). \nonumber
\vspace*{-5mm}
\end{equation} 
\end{proof}

%%%%%%%%%%%%%%%%%%%%%%%%%%%%%%%%%%%%%%%%
\bibliographystyle{IEEEtran}
%\bibliography{./LitAJ_ITC.bib,tf-zentral}
\bibliography{/Users/ajung/Work/LitAJ_ITC.bib,/Users/ajung/Work/tf-zentral}
%\bibliography{LitAJ_ITC.bib,tf-zentral}
%%%%%%%%%%%%%%%%%%%%%%%%%%%%%%%%%%%%%%%%

%%%%%%%%%%%%%%%%%%%%%%%%%%%%%%%%%%%%%%%%
\end{document}